\newtheorem{theorem}{Theorem}
\newtheorem{lemma}[theorem]{Lemma}
\newtheorem{claim}[theorem]{Claim}
\newtheorem{corollary}[theorem]{Corollary}
\newtheorem{fact}[theorem]{Fact}
\newcommand{\signid}{{\sc SignId}}
\DeclareMathOperator{\bE}{{\mathop{\mathbb{E}}}}
\newcommand{\eps}{\epsilon}
\newcommand{\E}{\mathcal{E}}
\newcommand{\abs}[1]{\left| #1 \right|}
\newcommand{\A}{\mathcal{A}}
\renewcommand{\O}{\mathcal{O}}
\newcommand{\D}{\mathcal{D}}
\title{Collaborative Learning with Limited Interaction: \\ Tight Bounds for  Distributed Exploration in Multi-Armed Bandits\thanks{Chao Tao is supported in part by NSF IIS-1633215. Qin Zhang is supported in part by NSF IIS-1633215 and CCF-1844234.}}
\author{
Chao Tao \\ 
Computer Science Department\\
Indiana University\\
\texttt{taochao@iu.edu}
\and 
Qin Zhang \\
Computer Science Department\\
Indiana University\\
\texttt{qzhangcs@indiana.edu}
\and 
Yuan Zhou\\
Computer Science Department, Indiana University \\
and\\
Department of ISE, University of Illinois at Urbana-Champaign\\
 \texttt{yuanz@illinois.edu}
}
\begin{document}

\maketitle

\begin{abstract}
Best arm identification (or, pure exploration) in multi-armed bandits is a fundamental problem in machine learning.  In this paper we study the distributed version of this problem where we have multiple agents, and they want to learn the best arm collaboratively.  We want to quantify the power of collaboration under limited interaction (or, communication steps), as interaction is expensive in many settings.  We measure the running time of a distributed algorithm as the {\em speedup} over the best centralized algorithm where there is only one agent.  We give almost tight round-speedup tradeoffs for this problem, along which we develop several new techniques for proving lower bounds on the number of communication steps under time or confidence constraints.  
\end{abstract}

\thispagestyle{empty}
\setcounter{page}{0}
\newpage

\section{Introduction}

One of the biggest challenges in machine learning is to make learning scalable. A natural way to speed up the learning process is to introduce multiple learners/agents, and let them learn the target function collaboratively.  A fundamental question in this direction is to {\em quantify the power of collaboration under limited interaction}, as interaction is expensive in many settings. In this paper we approach this general question via the study of a central problem in online learning -- {\em best arm identification} (or, {\em pure exploration}) in {\em multi-armed bandits}.  We present efficient collaborative learning algorithms and complement them with almost tight lower bounds.

\paragraph{Best Arm Identification.}  In multi-armed bandits (MAB) we have $n$ alternative arms, where the $i$-th arm is associated with an unknown reward distribution $\D_i$ with mean $\theta_i$.  Without loss of generality we assume that each $\D_i$ has support on $[0,1]$; this can always be satisfied with proper rescaling. We also assume that $\theta_i \in [\iota, 1-\iota]$ for any positive constant $\iota > 0$.\footnote{This assumption is due to minor technical reasons, and is also made in many existing bandit lower bounds (e.g. \cite{ABM10}). It does not affect our claims by much, since the most interesting and the hardest instances remain covered by the assumption.} We are interested in the best arm identification problem in MAB, in which we want to identify the arm with the largest mean.  In the standard setting we only have one agent, who tries to identify the best arm by a sequence of arm pulls. Upon each pull of the $i$-th arm the agent observes an {\em i.i.d.}\ sample/reward from $\D_i$.  At any time step, the index of the next pull (or, the final output at the end of the game) is decided by the indices and outcomes of all previous pulls and the randomness of the algorithm (if any).  Our goal is to identify the best arm using the minimum amount of arm pulls, which is equivalent to minimizing the {\em running time} of the algorithm; we can just assume that each arm pull takes a unit time.  


MAB has been studied for more than half a century~\cite{Robbins52,Gittins79}, due to its wide practical applications in clinical trials~\cite{Press2009}, adaptive routings~\cite{AK08}, financial portfolio design~\cite{SWJZ15},  model selection~\cite{MM93}, computer game play~\cite{Silver16}, stories/ads display on website~\cite{ACEMPRRZ08}, just to name a few.  In many of these scenarios we are interested in finding out the best arm (strategy, choice, etc.) as soon as possible and committing to it.  For example, in the Monte Carlo Tree Search used by computer game play engines, we want to find out the best move among a huge number of possible moves. In the task of high-quality website design, we hope to find out the best design among a set of alternatives for display.  In almost all such applications the arm pull is the most expensive component: in the real-time decision making of computer game play, it is time-expensive to perform a single Monte Carlo simulation; in website design tasks, having a user to test each alternative is both time and capital expensive (often a fixed monetary reward is paid for each trial a tester carries out).  

In the literature of best arm identification in MAB, two variants have been considered:
\begin{enumerate}
\item {\em Fixed-time best arm:} Given a time budget $T$, identify the best arm with the smallest error probability.\footnote{In the literature this is often called {\em fixed-budget best arm}. Here we use {\em time} instead of {\em budget} in order to be consistent with the collaborative learning setting, where it is easier to measure the performance of the algorithm by its running time.}

\item {\em Fixed-confidence best arm:}  Given an error probability $\delta$,  identify the best arm with error probability at most $\delta$ using the smallest amount of time.
\end{enumerate}
We will study both variants in this paper.

\paragraph{Collaborative Best Arm Identification.}
In this paper we study best arm identification in the collaborative learning model, where we have $K$ agents who try to learn the best arm together.  The learning proceeds in rounds.  In each round each agent pull a (multi)set of arms {\em without} communication.  For each agent at any time step, based on the indices and outcomes of all previous pulls, all the messages received, and the randomness of the algorithm (if any), the agent, if not in the {\em wait} mode, takes one of the following actions: (1) makes the next pull; (2) requests for a communication step and enters the wait mode;
(3) terminates and outputs the answer.  A communication step starts if {\em all non-terminated} agents are in the wait mode. After a communication step all non-terminated agents exit the wait mode and start a new round. 
During each communication step each agent can broadcast a message to every other agent.  While we do not restrict the size of the message, in practice it will not be too large -- the information of all pull outcomes of an agent can be described by an array of size at most $n$, with each coordinate storing a pair $(c_i, sum_i)$, where $c_i$ is the number of pulls on the $i$-th arm, and $sum_i$ is sum of the rewards of the $c_i$ pulls. 
Once terminated, the agent will not make any further actions.  The algorithm  terminates if all agents terminate.  When the algorithm terminates, each agent should agree on the {\em same} best arm; otherwise we say the algorithm fails.  The {\em number of rounds} of computation, denoted by $R$, is the number of communication steps {\em plus one}.

Our goal in the collaborative learning model is to minimize the number of rounds $R$, and the {\em running time} $T = \sum_{r \in [R]} t_r$, where $t_r$ is the {\em maximum} number of pulls made among the $K$ agents in round $r$.  The motivation for minimizing $R$ is that initiating a communication step always comes with a big time overhead (due to network bandwidth, latency, protocol handshaking), and energy consumption (e.g., think about robots exploring in the deep sea and on Mars). Round-efficiency is one of the major concerns in all parallel/distributed computational models such as the BSP model~\cite{Valiant90} and MapReduce~\cite{DG08}.  The {\em total cost} of the algorithm is a {\em weighted sum} of $R$ and $T$, where the coefficients depend on the concrete applications. We are thus interested in the best {\em round-time tradeoffs} for collaborative best arm identification.

\paragraph{Speedup in Collaborative Learning.}
As the time complexity of the best arm identification in the centralized setting is already well-understood (see, e.g.\ \cite{EMM02,MT04,ABM10,KKS13,JMNB14,KCG16,CL16,CLQ17}), we would like to interpret the running time of a collaborative learning algorithm as the {\em speedup} over that of the best centralized algorithm, which also expresses {\em the power of collaboration}.
Intuitively speaking, if the running time of the best centralized algorithm is $T_{\O}$, and that of a proposed collaborative learning algorithm $\A$ is $T_\A$, then we say the speedup of $\A$ is $\beta_\A = T_\O/T_\A$.  However, due to the parameters in the definition of the best arm identification {\em and} the instance-dependent bounds for the best centralized algorithms, the definition of the speedup of a collaborative learning algorithm needs to be a bit more involved.

Recall that an MAB instance is a set of random variables $\{X_1, \ldots, X_n\}$ each of which has support on $[0,1]$.  Since we are interested in the instance-dependent bounds, we assume that a random permutation is ``built-in'' to the input, that is, the $X_1, \ldots, X_n$ are randomly permuted before being fed to the algorithm. This is a standard assumption in the literature of MAB, since otherwise no conceivable algorithm can achieve instance-optimality -- the foolish algorithm that always outputs the first arm will work perfectly in the instance in which the first arm has the largest mean.

For any fixed-time algorithm $\A$ and an input instance $I$, we let $\delta_{\A} (I, T)$ be the error probability of $\A$ on $I$ given time budget $T$. For any fixed-confidence algorithm $\A$ and an input instance $I$, we let $T_{\A}(I, \delta)$ be the expected time used by $\A$ on $I$ given the confidence parameter $(1-\delta)$. In both definitions, the randomness is taken over both $\A$ and $I$. We also extend the definition $T_{\A}(I, \delta)$ to any fixed-time algorithm $\A$ by letting it be the smallest $T$ such that $\delta_{\A}(I, T) \leq \delta$.

We now define the key notion of \emph{speedup} for a collaborative algorithm. We say that an instance $I$ is $T$-solvable by an algorithm $\O$ (for both fixed-budget and fixed-time and fixed-confidence settings), if $T_{\O}(I, 1/3) \leq T$.  For any $T$, the speedup of a collaborative learning algorithm $\A$ (which can be either fixed-budget or fixed-time) for instances $T$-solvable by a centralized algorithm is defined as follows.
\begin{equation}
\label{def:speedup}
\beta_{\A}(T) = \inf_{\text{centralized }\O} \inf_{\text{instance }I}  \inf_{\delta \in (0, 1/3] : T_{\O}(I, \delta) \leq T}
\frac{T_{\O}(I, \delta)}{T_{\A}(I, \delta)}.
\end{equation}
Here the most inner $\inf$ returns $+\infty$ if the set of candidate $\delta$ is empty. Note that the most natural definition for speedup would be for \emph{all} instances. However, since our upper bound result logarithmically degrades as $T$ grows, we have to introduce the $T$ parameter in the definition, that is, we only consider those instances $I$ for which the centralized algorithm can finish within time $T$ under error $\delta$.

Finally, we let $\beta_{K,R}(T) = \sup_\A \beta_\A(T)$ where the $\sup$ is taken over all $R$-round algorithms $\A$ for the  collaborative learning model with $K$ agents.\footnote{A similar concept of {\em speedup} was introduce in the previous work \cite{HKKLS13}. However, no formal definition was given in \cite{HKKLS13}.}


Clearly there is a tradeoff between $R$ and $\beta_{K,R}$: When $R=1$ (i.e., there is {\em no} communication step), each agent needs to solve the problem by itself, and thus $\beta_{K,1} \le 1$.  When $R$ increases, $\beta_{K,R}$ may increase.  On the other hand we always have $\beta_{K,R} \le K$.  Our goal is to find the best {\em round-speedup} tradeoffs, which is essentially equivalent to the {\em round-time} tradeoffs that we mentioned earlier.  

As one of our goals is to understand the scalability of the learning process, we are particularly interested in one end of the tradeoff curve:
What is the {\em smallest} $R$ such that $\beta_{K,R} = \Omega(K)$?  
In other words, how many rounds are needed to make best arm identification fully scalable in the collaborative learning model? In this paper we will address this question by giving almost tight round-speedup tradeoffs.

\paragraph{Our Contributions.}
\begin{table*}[t]
  \centering
  \begin{tabular}{|c|c|c|c|c|}
	\hline
	problem & number of rounds\footnotemark& $\beta_{K,R}(T)$ & UB/LB & ref. \\
	\hline
	fixed-time & 1 &1 & -- & trivial \\
	& $2$ & $\tilde{\Omega}(\sqrt{K})$ & UB & \cite{HKKLS13} \\
	& $2$ & $\tilde{O}(\sqrt{K})$ & LB & \cite{HKKLS13} \\
	& $R$ & $\tilde{\Omega}(K^{\frac{R-1}{R}})$ & UB & \bf new\\
	& $\Omega\left(\frac{\ln \tilde{K}}{\ln\ln \tilde{K} + \ln\frac{K}{\beta}}\right)$ when $\beta \in [K/\tilde{K}^{0.1}, K]$ & $\beta$ & LB & \bf new\\
	\hline
    fixed-confidence 
    & $R$ & $\tilde{\Omega}\left((\Delta_{\min})^{\frac{2}{R-1}} K\right)$ & UB & \cite{HKKLS13} \\ & $\Omega\left(\min\left\{\frac{\ln \frac{1}{\widetilde{\Delta_{\min}}}}{\ln \left(1 + \frac{K (\ln K)^2}{\beta}\right) + \ln \ln \frac{1}{\widetilde{\Delta_{\min}}}}, \sqrt{\frac{\beta}{(\ln K)^3}}\right\}\right)$ & $\beta$ & LB & \bf new\\
	\hline
  \end{tabular}
  \caption{Our results for collaborative best arm identification in multi-armed bandits. $K$ is the number of agents. $\Delta_{\min}$ is the difference between the mean of the best arm and that of the second best arm in the input. In the lower bound for the fixed-time setting, we set $\tilde{K} = \min\{K, \sqrt{T}\}$; in the lower bound for the fixed-confidence setting, we set $\widetilde{\Delta_{\min}}^{-1} = \min\{\Delta_{\min}^{-1}, T\}$.}
  \label{tab:results}
\end{table*}

\footnotetext{We note again that the number of rounds equals to the number of communication steps  {\em plus one}.}

Our results are shown in Table~\ref{tab:results}. 
For convenience we use the `$\ \tilde{\ }\ $' notation on $O, \Omega, \Theta$ to hide logarithmic factors, which will be made explicit in the actual theorems.
Our contributions include:
\begin{enumerate}
\item {\em Almost tight round-speedup tradeoffs for fixed-time.}  In particular, we show that any algorithm for the fixed-time best arm identification problem in the collaborative learning model with $K$ agents that achieves $(K/\ln^{O(1)} K)$-speedup needs at least $\Omega(\ln K/\ln\ln K)$ rounds (for $T \geq K^{\Omega(1)}$). We complement this lower bound with an algorithm that runs in $\ln K$ rounds and achieves $\tilde{\Omega}(K)$-speedup. 

\item {\em Almost tight round-speedup tradeoffs for fixed-confidence.} In particular, we show that any algorithm for the fixed confidence best arm identification problem in the collaborative learning model with $K$ agents that achieves $(K/\ln^{O(1)} K)$-speedup needs at least $\Omega\left(\ln \frac{1}{\Delta_{\min}}/(\ln\ln K + \ln\ln \frac{1}{\Delta_{\min}})\right)$ rounds (for $T \geq \Delta_{\min}^{-\Omega(1)}$), which almost matches an algorithm in \cite{HKKLS13} that runs in $\ln\frac{1}{\Delta_{\min}}$ rounds and achieves $\tilde{\Omega}(K)$-speedup.  Here $\Delta_{\min}$ is the difference between the mean of the best arm and that of the second best arm in the input.

\item {\em A separation for two problems.} The two results above give a separation on the round complexity of fully scalable algorithms between the fixed-time case and the fixed-confidence case.  In particular, the fixed-time case has smaller round complexity for input instances with $\Delta_{\min} < 1/K$ (and when $T \geq \Delta_{\min}^{-\Omega(1)}$), which indicates that knowing the ``right'' time budget is useful to reduce the number of rounds of the computation. 

\item {\em A generalization of the round-elimination technique.} In the lower bound proof for the fixed-time case, we develop a new technique which can be seen as a generalization of the standard round-elimination technique: we perform the round reduction on {\em classes} of input distributions.  We believe that this new technique will be useful for proving round-speedup tradeoffs for other problems in collaborative learning.


\item {\em A new technique for instance-dependent round complexity.}  In the lower bound proof for the fixed-confidence case, we develop a new technique for proving instance-dependent lower bound for round complexity.  The {\em distribution exchange lemma} we introduce for handling different input distributions at different rounds may be of independent interest.
\end{enumerate}

\paragraph{Related Works.}
There are two main research directions in literature for MAB in the centralized setting, {\em regret minimization} and {\em pure exploration}. In the regret minimization setting (see e.g.\ \cite{ACF02,BC12,LS18}), the player aims at maximizing the total reward gained within the time horizon, which is equivalent to minimizing the {\em regret} which is defined to be the difference between the total reward achieved by the offline optimal strategy (where all information about the input instance is known beforehand) and the total reward by the player. In the pure exploration setting (see, e.g.\ \cite{EMM02,EMM06,ABM10,KKS13,JMNB14,CLQ17}), the goal is to maximize the probability to successfully identify the best arm, while minimizing the number of sequential samples used by the player. Motivated by various applications, other exploration goals were also studied, e.g., to identify the top-$k$ best arms \cite{BWV13,ZCL14,CCZZ17}, and to identify the set of arms with means above a given threshold \cite{LGC16}.


The collaborative learning model for MAB studied in this paper was first proposed by \cite{HKKLS13}, and has proved to be practically useful -- authors of \cite{XZJMH16} and \cite{KZESS17} applied the model to distributed wireless network monitoring and collective sensemaking. 

Agarwal et al.\ \cite{AAAK17} studied the problem of minimum adaptivity needed in pure exploration.  Their model can be viewed as a restricted collaborative learning model, where the agents are {\em not} fully adaptive and have to determine their strategy at the beginning of each round. Some solid bounds on the round complexity are proved in \cite{AAAK17}, including a lower bound using the round elimination technique. As we shall discuss shortly, we develop a generalized round elimination framework and prove a much better round complexity lower bound for a more sophisticated hard instance.

There are other works studying the regret minimization problem under various distributed computing settings. For example, motivated by the applications in cognitive radio network, a line of research (e.g., \cite{LZ10,RSS16,BL18}) studied the regret minimization problem where the radio channels are modeled by the arms and the rewards represent the utilization rates of radio channels which could be deeply discounted if an arm is simultaneously played by multiple agents and a collision occurs. Regret minimization algorithms were also designed for the distributed settings with an underlying communication network for the peer-to-peer environments (e.g., \cite{SBHOJK13,LSL16(1),XTZV15}). In \cite{AK05,CGMM16}, the authors studied distributed regret minimization in the adversarial case. Authors of \cite{PRCS15} studied the regret minimization problem in the batched setting. 

Blum et al.\ \cite{BHPQ17} studied PAC learning of a general function in the collaborative setting, and their results were further strengthened by \cite{CZZ18,NZ18}. However, in the collaborative learning model they studied, each agent can only sample from one particular distribution, and is thus different from the model this paper focuses on.

\section{Techniques Overview}
\label{sec:overview}

In this section we summarize the high level ideas of our algorithms and lower bounds. For convenience, the parameters used in this overview are only for illustration purposes.

\paragraph{Lower bound for fixed-time algorithms.}

A standard technique for proving round lower bounds in communication/sample complexity is the {\em round elimination}~\cite{MNSW98}.  Roughly speaking, we show that if there exists an $r$-round algorithm with error probability $\delta_r$ and sample complexity $f(n_r)$ on an input distribution $\sigma_r$, then there also exists an $(r-1)$-round algorithm with error probability $\delta_{r-1}$ and sample complexity $f(n_{r-1})$ on an input distribution $\sigma_{r-1}$.
Finally, we show that there is {\em no} $0$-round algorithm with error probability $\delta_0 \ll 1$ on a nontrivial input distribution $\sigma_0$.

In \cite{AAAK17} the authors used the round elimination technique to prove an $\Omega(\ln^* n)$ round lower bound for the best arm identification problem under the total pull budget $\tilde{O}({n}/{\Delta^2_{\min}})$.\footnote{$\ln^* n$ is the number of times the logarithm function must be iteratively applied before the result is less than or equal to 1.}  In their hard input there is a single best arm with mean $\frac{1}{2}$, and $(n-1)$ arms with means $(\frac{1}{2} - \Delta_{\min})$.  This ``one-spike'' structure makes it relatively easy to perform the standard round elimination. The basic arguments in \cite{AAAK17} go as follows: Suppose the best arm is chosen from the $n_r = n$ arms uniformly at random. If the agents do not make enough pulls in the first round, then conditioned on the pull outcomes of the first round, the posterior distribution of the index of the best arm can be written as a convex combination of a set of distributions, each of which has support size at least $n_{r-1} \approx \log n$ and is close (in terms of the total variation distance) to the {\em uniform} distribution on its support, and is thus again hard for a $(r-1)$-round algorithm.  

However, since our goal is to prove a much higher {\em logarithmic} round lower bound, we have to restrict the total pull budget within the {\em instance dependent} parameter $\tilde{O}(H) = \tilde{O}\left(\sum_{i=2}^n 1/\Delta_i^2\right)$ ($\Delta_i$ is the difference between the mean of the best arm and that of the $i$-th best arm in the input), and create a hard input distribution with logarithmic levels of arms in terms of their means.\footnote{$H = O(\sum_{i=2}^n {1}/{\Delta_i^2})$ is a standard parameter for describing the pull complexity of algorithms in the multi-armed bandits literature (see, e.g., \cite{BC12}).}  Roughly speaking, we take $\frac{n}{2}$ random arms and assign them with mean $(\frac{1}{2} - \frac{1}{4})$, $\frac{n}{4}$ random arms with mean $(\frac{1}{2} - \frac{1}{8})$, and so on. With such a ``pyramid-like'' structure, it seems difficult to take the same path of arguments as that for the one-spike structure in \cite{AAAK17}. In particular, it is not clear how to decompose the posterior distribution of the means of arms into a convex combination of a set of distributions, each of which is close to the same pyramid-like distribution. We note that such a decomposition is non-trivial even for the one-spike structure.  Now with a pyramid-like structure we have to guarantee that arms of the $(\ell+1)$-th level are chosen randomly from the arms in the union of  the $(\ell+1)$-th level  and  the $\ell$-th level   for each level $\ell$, which looks to be technically challenging.

We take a different approach. We perform the round elimination on {\em classes} of input distributions. More precisely, we show that if there is {\em no} $(r-1)$-round algorithm with error probability $\delta_{r-1}$ and pull complexity $f(n_{r-1})$ on {\em any} distribution in distribution class $\Sigma_{r-1}$, then there is {\em no} $r$-round algorithm with error probability $\delta_r$ and pull complexity $f(n_r)$ on {\em any} distribution in distribution class $\Sigma_r$.  When working with a class of distributions, we do {\em not} need to show that the posterior distribution $\nu'$ of some input distribution $\nu \in \Sigma_r$  is close to a particular distribution, but only that $\nu' \in \Sigma_{r-1}$.  

Although we now have more flexibility on selecting hard input distribution, we still want to find classes of distributions that are easy to work with.  To this end we introduce two more ideas.  First, at the beginning we sample the mean of each arm independently from the {\em same} distribution, in which the pyramid-like structure is encoded.  We found that making the means of arms independent of each other at any time (conditioned on the observations obtained so far) can dramatically simplify the analysis.   Second, we choose to {\em publish} some arms after each round $r$ to make the posterior distribution of the set of unpublished arms stay within the distribution class $\Sigma_{r-1}$. By publishing an arm we mean to exploit the arm and learn its mean exactly.  With the ability of publishing arms we can keep the classes of distributions $\Sigma_r, \Sigma_{r-1}, \ldots$ relatively simple for the round elimination process.

Further different from \cite{AAAK17} in which the set of arms pulled by each agent in each round is pre-determined at the beginning (i.e., the pulls are {\em oblivious} in each round), we allow the agents to act {\em adaptively} in each round.  Allowing adaptivity inside each round adds another layer of technical challenge to our lower bound proof.  Using a {\em coupling}-like argument, we manage to show that when the number of arms $n$ is {\em smaller} than the number of agents $K$, adaptive pulls do not have much advantage against oblivious pulls in each round.  We note that such an argument does not hold when $n \gg K$, and this is why we can only prove a round lower bound of $\Omega(\ln K/\ln\ln K)$ in the adaptive case compared with a round lower bound of $\Omega(\ln n/\ln\ln n)$ in the oblivious case when the speedup $\beta = \tilde{\Omega}(K)$. Surprisingly, this is almost the best that we can achieve -- our next result shows that there is an $\tilde{\Omega}(K)$-speedup adaptive algorithm using $\ln K$ rounds of computation.


\paragraph{Upper bound for fixed-time algorithms.} Our algorithm is conceptually simple, and goes by two phases. The goal of the first phase is to eliminate most of the suboptimal arms and make sure that the number of the remaining arms is at most $K$, which is the number of agents. This is achieved by  assigning each arm to a random agent, and each agent uses $T/2$ time budget to identify the best arm among its assigned arms using the start-of-the-art centralized algorithm. Note that no communication is needed in this phase, and there are still $R$ rounds left for the second phase. We allow each of the $R$ rounds to use $T/(2R)$ time budget. The goal of the $r$-th round in the second phase is to reduce the number of arms to at most $K^{\frac{R-r}{R}}$, so that after the $R$-th round, only the optimal arm survives. To achieve this, we uniformly spend the time budget on each remaining arm. We are able to prove that this simple strategy works, and our analysis crucially relies on the the guarantee that there are at most $K^{\frac{R-r+1}{R}}$ arms at the beginning of the $r$-th round.

We note that when $R=2$, the speedup of our algorithm is $\tilde{\Omega}(\sqrt{K})$, matching that of the $2$-round algorithm presented in \cite{HKKLS13}. Our algorithm also provides the optimal speedup guarantee for $R > 2$, matching our lower bound result mentioned above. 

The algorithm mentioned above only guarantees to identify the best arm with constant error probability. When the input time horizon $T$ is larger, one would expect an algorithm with an error probability that diminishes exponentially in $T$. To this end, we strengthen our basic algorithm to a meta-algorithm that invokes the basic algorithm several times in parallel and returns the plurality vote. One technical difficulty here is that the optimal error probability depends on the input instance and is not known beforehand. One has to guess the right problem complexity and make sure that the basic algorithm does not consistently return the same suboptimal arm when the given time horizon is less than the problem complexity (otherwise the meta algorithm would recognize the suboptimal arm as the best arm with high confidence). 

We manage to resolve this issue via novel algorithmic ideas that may be applied to strengthen fixed-time bandit algorithms in general. In particular, in the first phase of our basic algorithm, we assign a {\em random} time budget (instead of the fixed $T/2$ as described above) to the centralized algorithm invoked by each agent, and this proves to be useful to prevent the algorithm from identifying a suboptimal arm with overwhelmingly high probability.  We note that in \cite{HKKLS13}, the authors got around this problem by allowing the algorithm to have access to {\em both} the time horizon and the confidence parameters, which does not fall into the standard fixed-time category.

\paragraph{Lower bound for fixed-confidence algorithms.}  We first reduce the lower bound for best arm identification algorithms to the task of showing round lower bound for a closely related problem, \signid, which has proved to be a useful proxy in studying the lower bounds for bandit exploration in the centralized setting \cite{Farrell64,JMNB14,CLQ17}. The goal of \signid~is to identify (with fixed confidence) whether the mean reward of the only input arm is greater or less than $1/2$. The difference between $1/2$ and the mean of the arm, denoted by $\Delta$, corresponds to $\Delta_{\min}$ in the best arm identification problem, and our new task becomes to show a round lower bound for the \signid~problem that increases as $\Delta$ approaches $0$. 

While our lower bound proof for fixed-time setting can be viewed as a generalization of the round elimination technique, our lower bound for the \signid~problem in the fixed-confidence setting uses a completely different approach due to the following reasons. First, the online learning algorithm that our lower bound is against  aims at achieving an instance dependent optimal time complexity as it gradually learns the underlying distribution. In other words, the hardness stems from the fact that the algorithm does {\em not} know the underlying distribution beforehand, while traditional round elimination proofs do {\em not} utilize this property. Second, our lower bound proof introduces a sequence of arm distributions and inductively shows that any algorithm needs at least $r$ rounds on the $r$-th input distribution. While traditional round elimination manages to conduct this induction via embedding the $(r-1)$-st input distribution into the $r$-th input distribution, it is not clear how to perform such an embedding in our proof, as our distributions are very different.

Intuitively, in our inductive proof we set the $r$-th input distribution to be the Bernoulli arm with $\Delta = \Delta_r = 1/\zeta^{r}$ and $\zeta > 1$ depends on $K$ (the number of agents) and $\beta$ (the speedup of the algorithm). We hope to show that any algorithm needs $r$ rounds on the $r$-th input distribution. Suppose we have shown the lower bound for the $r$-th input distribution. Since the algorithm has $\beta$-speedup, it performs at most $O(\Delta_{r}^{-2} K/\beta)$ pulls for the $r$-th instance. We will show via a {\em distribution exchange lemma} (which will be explained in details shortly) that this amount of pulls is not sufficient to tell $\Delta = \Delta_{r}$ from $\Delta = \Delta_{r+1}$. Hence the algorithm also uses at most $O(\Delta_{r}^{-2} K/\beta)$ pulls during the first $r$ rounds on the $(r+1)$-st instance, which is not sufficient to decide the sign of the $(r+1)$-st instance. Therefore the algorithm needs at least $(r+1)$ rounds on the $(r+1)$-st instance, completing the induction for the $(r + 1)$-st instance.

To make the intuition rigorous, we need to strengthen our inductive hypothesis as follows. The goal of the $r$-th inductive step is to show that for $\Delta = \Delta_r$, any algorithm needs at least $r$ rounds and makes at most $o(\Delta_r^{-2})$ pulls across the $K$ agents during the first $r$ rounds. While the $0$-th inductive step holds straightforwardly as the induction basis, we go from the $r$-th inductive step to the $(r+1)$-st inductive step via a {\em progress lemma} and the distribution exchange lemma mentioned above.

Given the hypothesis for the $r$-th inductive step, the progress lemma guarantees that the algorithm has to proceed to the $(r+1)$-st round and perform more pulls. Thanks to the strengthened hypothesis, the total number of pulls performed in the first $r$ rounds is $o(\Delta_r^{-2})$. Hence the statistical difference between the pulls drawn from the $r$-th input distribution and its negated distribution (where the outcomes $0$ and $1$ are flipped) is at most $o(1)$ due to Pinsker's inequality, and this is not enough for the algorithm to correctly decide the sign of the arm. 

The distribution exchange lemma guarantees that the algorithm performs no more than $O(\Delta_r^{-2} K /\beta)$ pulls across the agents during the first $(r+1)$ rounds on the $(r+1)$-st input distribution. By setting $\zeta = \omega(K/\beta)$, one can verify that $O(\Delta_r^{-2} K /\beta) = o(\Delta_{r+1}^{-2})$, and the hypothesis for the $(r+1)$-st inductive step is proved. The intuition behind the distribution exchange lemma is as follows. While the algorithm needs $(r+1)$ rounds on the $r$-th input distribution (by the progress lemma), we know that the algorithm cannot use more than $\Omega(\Delta_r^{-2} K/\beta)$ pulls by the $\beta$-speedup constraint. These many pulls are not enough to tell the difference between the $r$-th and the $(r+1)$-st distribution, and hence we can change the underlying distribution and show that the same happens for the $(r+1)$-st input distribution.

However, this intuition is not easy to be formalized. If we simply use the statistical difference between the distributions induced by $\Delta_{r}$ and $\Delta_{r+1}$ to upper bound the probability difference between each agent's behavior for the two input arms, we will face a probability error of $\Theta(\sqrt{1/\beta})$ for each agent. In total, this becomes a probability error of $\Theta(K \sqrt{1/\beta}) \gg 1$ throughout all $K$ agents, which is too much. To overcome this difficulty, we need to prove a more refined probabilistic upper bound on the behavior discrepancy of each agent for different arms. This is achieved via a technical lemma that provides a much better upper bound on the difference between the probabilities that two product distributions assign to the same event, given that the event does not happen very often. This technical lemma may be of independent interest.

\section{Lower Bounds for Fixed-Time Distributed Algorithms}
\label{sec:lb-fixT}

In this section we prove a lower bound for the fixed-time collaborative learning algorithms. We start by considering the non-adaptive case, where in each round each agent fixes the (multi-)set of arms to pull as well as the order of the pulls at the very beginning.  We will then extend the proof to the adaptive case.  

When we write $c= a \pm b$ we mean $c$ is in the range of $[a-b, a+b]$.

\subsection{Lower Bound for Non-Adaptive Algorithms}
\label{sec:lb-fixT-non-adap}

We prove the following theorem in this section.  
\begin{theorem}
\label{thm:lb-1}
For any time budget $T>0$, any $\alpha \in [1, n^{0.2}]$, any $(K/\alpha)$-speedup randomized non-adaptive algorithm for the fixed-time best arm identification problem in the collaborative learning model with $K$ agents and $n \le \sqrt{T}$ arms needs $\Omega(\ln n/(\ln\ln n + \ln\alpha))$ rounds in expectation.
\end{theorem}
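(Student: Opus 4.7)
The plan is to carry out a generalized round-elimination argument in which the round reduction acts on a \emph{class} of input priors $\Sigma_r$ rather than a single prior, as sketched in the overview. Each prior in $\Sigma_r$ is parametrized by a ground set of $n_r$ surviving arms on which a pyramid-like i.i.d.\ prior is placed: each arm is independently assigned mean $1/2 - 2^{-\ell}$ with probability $\sim 2^{-\ell}$ for $\ell = 1,\dots,L$, together with a planted unique best arm at mean $1/2$. Because the pull budget is the instance-dependent quantity $\tilde{O}(H)$ with $H = \sum_{i\ge 2} 1/\Delta_i^2$, the budget is dominated by arms at the finest levels, which forces any round to eliminate a constant fraction of them or overspend. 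The inductive hypothesis to maintain is: no $(r-1)$-round $(K/\alpha)$-speedup non-adaptive algorithm achieves small constant error on \emph{every} prior in $\Sigma_{r-1}$. The base case is $n_0 = O(1)$, where no zero-round algorithm can identify the planted best arm beyond constant error.

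The heart of the proof is the round-reduction step. Given an $r$-round algorithm targeting $\Sigma_r$, I would first use non-adaptivity within the first round to argue that each agent's multiset of pulled arms, and hence the per-arm pull profile across the $K$ agents, is fixed before any outcome is revealed; it depends only on the algorithm's internal randomness. After round~$1$ I apply a \emph{publishing} step: any arm whose total pull count across the $K$ agents exceeds the threshold needed to identify its pyramid level is declared published and its mean revealed to all agents. By counting and by the $(K/\alpha)$-speedup assumption, the expected number of published arms is at most $n_r/c$ for a suitable constant $c$, leaving $n_{r-1} = n_r(1 - 1/c)$ unpublished arms on which, by the independence built into the prior, the posterior joint distribution again lies in $\Sigma_{r-1}$ on the smaller ground set. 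The residual $(r-1)$-round continuation is then an $(r-1)$-round algorithm against $\Sigma_{r-1}$ with only slightly worse error, contradicting the inductive hypothesis.

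To extract the claimed round bound, I would unwind the recursion quantitatively. The $(K/\alpha)$-speedup constraint gives each round a per-agent budget of $\tilde{O}(\alpha H / K)$, and a computation using the pyramid tail shows that under such a budget the publishing step can only grow $n_r$ by a factor of $\tilde{O}(\alpha)$, so $n_r \le \bigl(\alpha\cdot \mathrm{polylog}\, n\bigr)^{O(r)}$. Solving $(\alpha \cdot \mathrm{polylog}\, n)^R \ge n$ yields $R = \Omega\!\left(\ln n /(\ln\ln n + \ln\alpha)\right)$ as required, and the hypothesis $\alpha \le n^{0.2}$ ensures the $\ln\alpha$ term does not swamp $\ln n$ so that the bound is non-trivial.

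I expect the main obstacle to be the publishing-and-closure step: verifying that after removing all arms whose pull count crosses the publishing threshold, the posterior on the unpublished arms really lies back in $\Sigma_{r-1}$, i.e., is still a product of the same pyramid prior on a smaller ground set. The non-adaptivity of the algorithm is crucial here, because it decouples which arms get pulled from the outcomes, so the publishing rule is a deterministic function of internal algorithmic randomness alone and carries no hidden information about the unpublished means. A subsidiary difficulty is calibrating the pyramid depth $L$ and the publishing threshold so that the ``kill a constant fraction of the finest-level arms per round'' arithmetic matches the target round count, while simultaneously ensuring that the per-round error increments do not accumulate over all $R$ rounds and destroy the base-case contradiction; I would handle this by taking $L$ slightly larger than the target number of rounds and setting thresholds so that the error parameter decays geometrically across the induction. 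The extension to adaptive algorithms, requiring a coupling between adaptive and oblivious pull profiles valid when $n < K$, is deferred to the next subsection.
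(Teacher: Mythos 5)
Your high-level plan is the paper's plan: generalized round elimination over \emph{classes} of i.i.d.\ pyramid priors, a speedup-to-budget reduction, publishing arms between rounds, and closure of the posterior in the next class. But the quantitative engine of your argument is inconsistent, and the publishing step as you state it does not close the induction. In your second paragraph you publish only arms whose \emph{pull count} crosses a threshold and conclude that a constant fraction ($n_r/c$) is published, so $n_{r-1}=n_r(1-1/c)$; in your third paragraph you instead need the survivor count to shrink by a factor $\tilde{\Theta}(\alpha)$ per round to solve $(\alpha\cdot\mathrm{polylog}\,n)^R\ge n$. These cannot both hold, and the constant-fraction version would unwind to $R=\Omega(\ln n)$, which contradicts the known $O(\ln K)$-round, $\tilde\Omega(K)$-speedup algorithm, so that version of the recursion cannot be correct. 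The paper's mechanism is different: the total budget $\tilde{O}(\alpha H)$ with $H=\tilde{\Theta}(nB^2)$ (where $B=\alpha(\ln n)^{10}$) lets the algorithm pull almost \emph{every} surviving arm about $\gamma B^{2j}$ times in round $j$; this is exactly enough to confidently classify, and hence publish, all arms at the current coarsest level, which form a $1-\Theta(B^{-2})$ fraction of the survivors. So the shrinkage factor is $B^2=(\alpha\,\mathrm{polylog}\,n)^2$ per round, driven by the level populations of the pyramid (which the paper calibrates as $\Pr[\mathrm{level}\ \ell]\propto B^{-2\ell}$ so that every level contributes equally to $H$ --- your scaling $\Pr\propto 2^{-\ell}$ with gap $2^{-\ell}$ makes $H$ dominated by the finest level and breaks this per-round budget arithmetic), and arms published for being over-pulled are only a $1/\kappa=1/(\ln n)^2$ fraction, a side case rather than the main effect.

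The second gap is the closure step, which you correctly flag as the main obstacle but whose resolution requires \emph{outcome-based} publishing that your rule omits. If an unpublished arm has been pulled some arbitrary number of times below your threshold, its posterior depends on those outcomes and need not lie in $\Sigma_{r-1}$: an arm pulled $\gamma B^{2j}$ times with a low empirical mean has posterior concentrated on level $j$, violating the requirement that levels $\le j$ carry only $n^{-9}$ mass in $\D_{j+1}$. The paper handles this by (i) topping up every non-over-pulled arm to \emph{exactly} $\gamma B^{2j}$ pulls, and (ii) publishing any arm whose outcome sum falls outside a window $[\zeta_1,\zeta_3]$; only then does a Bayes-ratio computation (Lemma~\ref{lem:distribution-class}) show the conditional law of an unpublished arm lands in $\D_{j+1}$, with the tolerance parameter $\eta$ degrading from $\rho^j/2$ to $\rho^{j+1}/2$ --- which is also why the induction must stop at $j=L/2$ (with $\Theta(B^L)$ arms still alive, where the base case is ``any fixed answer is right with probability $o(1)$''), not at $n_0=O(1)$ as you propose. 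Non-adaptivity is indeed what makes the pull profile outcome-independent, as you say, but that alone does not make the posterior of the unpublished arms a clean member of the class; the top-up and the outcome window are essential.
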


\paragraph{Parameters.}  We list a few parameters to be used in the proof. Let $\alpha \in [1, n^{0.2}]$ be the parameter in the statement of Theorem~\ref{thm:lb-1}. Set $B = \alpha (\ln n)^{100}$ (thus $(\ln n)^{100} \le B \le (\ln n)^{100} n^{0.2}$), $\gamma = \alpha (\ln n)^{100}$, $\rho = (\ln n)^{3}$, and $\kappa = (\ln n)^{2}$.

\subsubsection{The Class of Hard Distributions}
\label{sec:distribution}

We first define a class of distributions which is hard for the best arm identification problem.   

Let $L$ be a parameter to be chosen later (in (\ref{eq:L})). Define $\D_j(\eta)$ to be the class of distributions $\pi$ with support 
$$
\{B^{-1}, \ldots, B^{-(j-1)}, B^{-j}, \ldots, B^{-L}\},
$$
such that if $X \sim \pi$, then
\begin{enumerate}
\item
$\Pr\left[(X = B^{-1}) \vee \cdots \vee (X = B^{-(j-1)})\right] \le {n^{-9}},
$ (only defined for $j \ge 2$)
\item For any $\ell = j, \ldots, L$, $\Pr[X = B^{-\ell}] = \lambda_j \cdot B^{-2\ell} \cdot \left(1 \pm \rho^{-\ell} \eta\right)$, where $\lambda_j$ is a normalization factor (to make $\sum_{\ell=1}^L \Pr[X = B^{-\ell}] = 1$). 
\end{enumerate}
Note that when $\eta = 0$, $\D_1(0)$ only contains a single distribution; slightly abusing the notation, define $\D_1 \triangleq \D_1(0)$ to denote that particular distribution. For $j \ge 2$, define $\D_j \triangleq \D_j(\rho^{j-1})$. That is, we set $\eta = \rho^{j-1}$ by default, and consequently $\lambda_j = \left(1 \pm \frac{2}{\rho} \right) B^{2j}$.

We introduce a few threshold parameters: $\zeta_1 =  \left(\frac{1}{2} - B^{-(j+1)}\right)  \gamma B^{2j} - \sqrt{10 \gamma \ln n} B^j$, $\zeta_2 = \frac{\gamma B^{2j}}{2} - B^{j+0.6}$, $\zeta_3 = \frac{\gamma B^{2j}}{2} + B^{j+0.6}$.  It is easy to see that $\zeta_2 < \zeta_1 < \zeta_3$.  

The following lemma gives some basic properties of pulling from an arm with mean $\left(\frac{1}{2} - B^{-\ell}\right)$.  We leave the proof to Appendix~\ref{app:proofs}.
\begin{lemma}
Consider an arm with mean $\left(\frac{1}{2} - X\right)$.   We pull the arm $\gamma B^{2j}$ times. Let $\Theta = (\Theta_1, \Theta_2, \ldots, \Theta_{\gamma B^{2j}})$ be the pull outcomes, and let $\abs{\Theta} = \sum_{i \in [\gamma B^{2j}]} \Theta_i$.   We have the followings.
\begin{enumerate}
\item If $X = B^{-\ell}$ for $\ell > j$, then $\abs{\Theta} \in [\zeta_2, \zeta_3]$ with probability at least $1 - n^{-10}$. \label{item-3}

\item If $X = B^{-\ell}$ for $\ell \le j$, then $\abs{\Theta} < \zeta_1$ with probability at least $1 - n^{-10}$. \label{item-2}

\item If $X = B^{-\ell}$ for $\ell > j$, then $\abs{\Theta} \ge \zeta_1$ with probability at least $1 - n^{-10}$. \label{item-1}
\end{enumerate}
\label{lem:distribution-aux}
\end{lemma}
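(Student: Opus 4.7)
The plan is to reduce all three parts to a single Hoeffding-type concentration inequality and then verify each claim by straightforward arithmetic, leveraging the slack built into $\gamma = B = \alpha(\ln n)^{10}$ relative to the allowed error terms. I would start by observing that $|\Theta|$ is the sum of $m := \gamma B^{2j}$ independent Bernoulli$(p)$ trials with $p := \frac{1}{2} - X$, so $\bE[|\Theta|] = mp = \frac{\gamma B^{2j}}{2} - \gamma B^{2j-\ell}$ whenever $X = B^{-\ell}$. Setting $t := \sqrt{10\gamma \ln n}\,B^j = \sqrt{10 m \ln n}$, Hoeffding's inequality yields $\Pr\bigl[\abs{|\Theta| - mp} > t\bigr] \le 2\exp(-2t^2/m) = 2 n^{-20}$, comfortably below the allowed failure probability $n^{-10}$. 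Hence it suffices to work on the event $\mathcal{G} := \{|\Theta| = mp \pm t\}$.

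Before the case split, I would record the identity $\zeta_1 = \frac{\gamma B^{2j}}{2} - \gamma B^{j-1} - t$, obtained by unfolding the definition with $B^{-(j+1)}\gamma B^{2j} = \gamma B^{j-1}$. This puts the three thresholds on a common footing: each case boils down to comparing $\gamma B^{2j-\ell}$ against $\gamma B^{j-1}$, $\gamma B^j$, or $B^{j+1}/100$.

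Part (3) is then immediate: $\ell \ge j+1$ gives $\gamma B^{2j-\ell} \le \gamma B^{j-1}$, so on $\mathcal{G}$ we have $|\Theta| \ge mp - t \ge \frac{\gamma B^{2j}}{2} - \gamma B^{j-1} - t = \zeta_1$. For Part (1) I would use the same bound $\gamma B^{2j-\ell} \le \gamma B^{j-1}$ together with the concentration on $\mathcal{G}$ to get $\bigl||\Theta| - \frac{\gamma B^{2j}}{2}\bigr| \le \gamma B^{j-1} + t$; it then remains to verify $\gamma B^{j-1} + t \le \frac{B^{j+1}}{100}$, which after dividing by $B^j$ reduces to $1 + \sqrt{10\alpha(\ln n)^{11}} \le \frac{\alpha(\ln n)^{10}}{100}$, and this holds since $\alpha \ge 1$ and $(\ln n)^{4.5}$ dominates any absolute constant for $n$ large. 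For Part (2), $\ell \le j$ forces $\gamma B^{2j-\ell} \ge \gamma B^j$, so $mp + t < \zeta_1$ reduces to $2t \le \gamma B^{j-1}(B-1)$, i.e., $\sqrt{\gamma}\,(1 - 1/B) \ge 2\sqrt{10 \ln n}$, which follows from $\sqrt{\gamma} = \sqrt{\alpha}(\ln n)^5 \gg \sqrt{\ln n}$.

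The main ``obstacle'' is thus not conceptual but organizational: correctly identifying which regime of $\ell$ relative to $j$ controls the magnitude of $\gamma B^{2j-\ell}$, and confirming that each inequality has polylogarithmic slack. The $(\ln n)^{10}$ factor inside $\gamma = B$ is generously overprovisioned for exactly this bookkeeping, so once the case split is clean every required inequality is loose by a polylogarithmic factor and no tight estimate is ever needed.
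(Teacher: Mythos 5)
Your proposal is correct and follows essentially the same route as the paper's proof: both apply the Chernoff--Hoeffding bound with deviation $t = \sqrt{10\gamma\ln n}\,B^j = \sqrt{10m\ln n}$ (giving failure probability $2n^{-20} \le n^{-10}$) and then verify the three threshold comparisons by arithmetic using the slack in $\gamma = B = \alpha(\ln n)^{10}$. Your version is slightly more unified in that it conditions once on a single good event and reads off all three items, but the substance is identical.
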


The next lemma states important properties of distributions in classes $\D_j$. Intuitively, if the mean of an arm is distributed according to some distribution in class $\D_j$, then after pulling it $\gamma B^{2j}$ times,  we can learn by Lemma~\ref{lem:distribution-aux} that at least one of the followings hold: (1) the sequence of pull outcomes is very rare; (2) very likely the mean of the arm is at most $(\frac{1}{2} - B^{-j})$; (3)  very likely the mean of the arm is more than $(\frac{1}{2} - B^{-j})$.  In the first two cases we {\em publish} the arm, that is, we fully exploit the arm and learn its mean exactly.  We will show that if the arm is not published, then the posterior distribution of the mean of the arm (given the outcomes of the $\gamma B^{2j}$ pulls) belongs to class $\D_{j+1}$.  

\begin{lemma}
\label{lem:distribution-class}
Consider an arm with mean $\left(\frac{1}{2} - X\right)$ where $X \sim \mu \in \D_j$ for some $j \in [L-1]$. We pull the arm $\gamma B^{2j}$ times. Let $\Theta = (\Theta_1, \Theta_2, \ldots, \Theta_{\gamma B^{2j}})$ be the pull outcomes, and let $\abs{\Theta} = \sum_{i \in [\gamma B^{2j}]} \Theta_i$.  
If $\abs{\Theta} \not\in [\zeta_1, \zeta_3]$,
then we publish the arm.  Let $\nu$ be the posterior distribution of $X$ after observing $\Theta$. If the arm is not published, then we must have $\nu \in \D_{j+1}$. 
\end{lemma}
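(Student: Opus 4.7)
\medskip
\noindent\textbf{Proof proposal.} The plan is to apply Bayes' rule and then carefully track multiplicative errors through a Taylor expansion of the likelihood. Let $N = \gamma B^{2j}$ denote the number of pulls, and write $k = |\Theta| \in [\zeta_1, \zeta_3]$ and $m = k - N/2$, so that $|m| \le B^{j+1}/100$ (using $\zeta_2 = N/2 - B^{j+1}/100 < \zeta_1$ and $\zeta_3 = N/2 + B^{j+1}/100$). Since the pulls are i.i.d.\ Bernoulli, the likelihood depends on $\Theta$ only through $k$:
\[
L(\ell) \,\triangleq\, \Pr[\Theta \mid X = B^{-\ell}] \,=\, \tfrac{1}{2^N} (1-4B^{-2\ell})^{N/2} \left(\tfrac{1-2B^{-\ell}}{1+2B^{-\ell}}\right)^{m}.
\]
By Bayes, $\Pr_\nu[X=B^{-\ell}] = \Pr_\mu[X=B^{-\ell}]\,L(\ell)/Z$, where $Z$ is the obvious normalizer.

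\medskip
\noindent To verify the first property of $\D_{j+1}$ (that the tail mass on $\{B^{-1},\ldots,B^{-j}\}$ is at most $n^{-9}$), I would compare $L(\ell)$ to $L(j+1)$ for $\ell \le j$. For such $\ell$, $N B^{-2\ell} \ge N B^{-2j} = \gamma = B$, so the $(N/2)\log(1-4B^{-2\ell})$ factor damps $L(\ell)/L(j+1)$ by $\exp(-\Omega(B))$, while the $m$-dependent factor perturbs the log-likelihood by at most $O(|m|B^{-\ell}) = O(B^{j+1-\ell}) \le O(B^j)$, which is of smaller order than the damping since $\gamma = B$. Hence $L(\ell)/L(j+1) \le \exp(-\Omega(B))$. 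Since the prior ratio $\mu(\ell)/\mu(j+1)$ is at most $\mathrm{poly}(B)$, the posterior ratio is super-polynomially small in $n$. Summing over the at most $L$ tail indices yields $\Pr_\nu[X \in \{B^{-1},\ldots,B^{-j}\}] \le n^{-9}$.

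\medskip
\noindent For the second property, I Taylor expand in the small parameter $B^{-\ell}$ (for $\ell \ge j+1$):
\[
\log L(\ell) \,=\, -N\log 2 \;-\; 2N B^{-2\ell} \;-\; 4m B^{-\ell} \;+\; r_\ell,
\]
with $|r_\ell| = O(NB^{-4\ell} + |m|B^{-3\ell})$. Setting $\lambda_{j+1} \triangleq \lambda_j \cdot 2^{-N}/Z$ as the posterior normalization, one obtains $\Pr_\nu[X=B^{-\ell}] = \lambda_{j+1} B^{-2\ell}(1+\delta_\ell)$, where $\delta_\ell = (1+\epsilon_\ell)\,e^{\phi_\ell + r_\ell} - 1$, $\phi_\ell = -2N B^{-2\ell} - 4m B^{-\ell}$, and $|\epsilon_\ell| \le \rho^{-\ell}\eta$ comes from the prior. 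Using $\gamma=B$ and $|m| \le B^{j+1}/100$, one checks $|\phi_\ell + r_\ell| = O(B^{j+1-\ell})$ uniformly for $\ell \ge j+1$, so $|e^{\phi_\ell + r_\ell} - 1| = O(B^{j+1-\ell})$. Since $B \gg \rho$, we have $B^{j+1-\ell} \le \rho^{j+1-\ell}$ for $\ell > j+1$, and combined with $|\epsilon_\ell| \le \rho^{j-\ell}/2$ this yields $|\delta_\ell| \le \rho^{-\ell} \cdot \rho^{j+1}/2$, placing $\nu$ in $\D_{j+1}$.

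\medskip
\noindent The main obstacle will be the boundary case $\ell = j+1$, where the tolerance allowed by $\D_{j+1}$ is just $\rho^{-(j+1)} \cdot \rho^{j+1}/2 = 1/2$ and the likelihood perturbation is already an $O(1)$ constant rather than a decaying tail. I would need the explicit numerical bound $|\phi_{j+1} + r_{j+1}| \le 2/B + 1/25 + o(1) < 1/24$ (obtained by inserting $\gamma = B$ and $|m| \le B^{j+1}/100$ into each of the three terms), giving $|e^{\phi_{j+1}+r_{j+1}} - 1| \le 1/20$, with enough slack to absorb the prior deviation $|\epsilon_{j+1}| \le \rho^{-1}\eta = \rho^{j-1}/2 \cdot \rho^{-j}$ and the cross term within the remaining budget. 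A secondary subtlety is consistency of the single normalization constant $\lambda_{j+1}$ across all $\ell \in \{j+1,\ldots,L\}$: this works because $\phi_\ell \to 0$ and $\epsilon_\ell \to 0$ geometrically as $\ell \to \infty$, matching the required $\rho^{-\ell}$ decay rate encoded in the definition of $\D_{j+1}$.
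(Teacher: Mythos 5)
Your proposal is correct and follows essentially the same route as the paper's proof: Bayes' rule, an exponential likelihood-ratio damping argument (comparing each $\ell\le j$ against $X=B^{-(j+1)}$, using the prior lower bound on the mass at levels $>j$) to establish the $n^{-9}$ tail condition, and a Taylor expansion of the log-likelihood whose $O(B^{j+1-\ell})$ perturbation is absorbed into the widened tolerance $\eta'=\rho^{j+1}/2$. Your explicit handling of the boundary case $\ell=j+1$ and of the single normalization constant matches (and if anything slightly sharpens) the corresponding steps in the paper's derivation of~(\ref{eq:d-2}).
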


\begin{proof}
We analyze the posterior distribution of $X$ after observing $\Theta = \theta$ for any $\theta$ with $\abs{\theta} \in  [\zeta_1, \zeta_3]$.

Let $\chi_{\le j}$ denote the event that $(X = B^{-1}) \vee \cdots \vee (X = B^{-j})$, and let $\chi_{>j}$ denote the event that $(X = B^{-(j+1)}) \vee \cdots \vee (X = B^{-L})$.  Since $X \sim \mu \in \D_j$, we have 
\begin{equation}
\label{eq:c-1}
\Pr[\chi_{>j}] \ge \Pr[X = B^{-(j+1)}] = \left(1 \pm \frac{2}{\rho} \right) B^{2j} \cdot B^{-2(j+1)} \cdot \left(1 \pm \rho^{-(j+1)} \rho^{j-1}\right) \ge 1/(2B^2).
\end{equation}
For the convenience of writing, let $m = \gamma B^{2j}$. Thus $\zeta_1 = m \cdot (\frac{1}{2} - z)$ where $z = B^{-j}\left(B^{-1} + \sqrt{\frac{10 \ln n}{\gamma}}\right)$.
Let $\eps = B^{-j}$, and $\eps' = B^{-(j+1)}$.  

For any $\theta$ with $\abs{\theta} \ge \zeta_1$, we have
\begin{eqnarray}
\Pr[\chi_{\le j} \ |\ \Theta = \theta] 
&=& \frac{\Pr[\Theta = \theta \ |\ \chi_{\le j}] \cdot \Pr[\chi_{\le j}]}{\Pr[\Theta = \theta]} \nonumber \\
&=& \frac{\Pr[\Theta = \theta \ |\ \chi_{\le j}] \cdot \Pr[\chi_{\le j}]}{\Pr[\Theta = \theta\ |\ \chi_{\le j}] \cdot \Pr[\chi_{\le j}] + \Pr[\Theta = \theta\ |\ \chi_{>j}] \cdot \Pr[\chi_{>j}]}\nonumber  \\
&\le&  \frac{\Pr[\Theta = \theta \ |\ X = \eps] \cdot 1}{0 + \Pr[\Theta = \theta\ |\ X = \eps'] \cdot 1/(2B^2)} \quad  (\text{by (\ref{eq:c-1}) and monotonicity}) \nonumber  \\
&=& 2B^2 \cdot \frac{(1/2 - \eps)^{\abs{\theta}} (1/2 + \eps)^{m - \abs{\theta}}}{(1/2 - \eps')^{\abs{\theta}} (1/2 + \eps')^{m - \abs{\theta}}} \nonumber \\
&\le& 2B^2 \cdot \frac{(1/2 - \eps)^{\zeta_1} (1/2 + \eps)^{m - \zeta_1}}{(1/2 - \eps')^{\zeta_1} (1/2 + \eps')^{m - \zeta_1}}
\quad \ (\text{by monotonicity})\nonumber \\
&=& 2B^2  \cdot A^m,   \label{eq:c-2} 
\end{eqnarray}
where 
\begin{equation}
\label{eq:A}
A = \frac{(1-2\eps)^{{1}/{2}-z}(1+2\eps)^{{1}/{2}+z}}{(1-2\eps')^{{1}/{2}-z}(1+2\eps')^{{1}/{2}+z}}.
\end{equation} 
We next analyze $A$. For small enough $\eps > 0$, we have $\eps - \frac{\eps^2}{2} \le \ln(1+\eps) \le \eps - \frac{\eps^2}{2} + \eps^3$, and $-\eps - \frac{\eps^2}{2} - \eps^3 \le \ln(1-\eps) \le -\eps - \frac{\eps^2}{2}$.  Taking the natural logarithm on both sides of (\ref{eq:A}) and using two inequalities for $\ln(1+\eps)$ and $\ln(1-\eps)$ above, we have
\begin{eqnarray}
\ln A &\le& (1/2 - z) \left(-2\eps - 2\eps^2 + 2(\eps') + 2(\eps')^2 + 8(\eps')^3 \right)  + (1/2 + z) \left(2\eps - 2\eps^2 + 8\eps^3 - 2(\eps') + 2(\eps')^2 \right) \nonumber \\
&=& 1/2 \cdot \left(-4\eps^2 + 8\eps^3 + 4 (\eps')^2 + 8(\eps')^3 \right) + z (4\eps + 8\eps^3 - 4(\eps') - 8(\eps')^3) \nonumber \\
&\le& -2 B^{-2j} +  B^{-j}\left(B^{-1} + \sqrt{\frac{10 \ln n}{\gamma}}\right) 4 B^{-j} + O(B^{-2j-1}) \nonumber \\
&\le& - B^{-2j}.  \label{eq:c-3}
\end{eqnarray}
Plugging (\ref{eq:c-3}) back to (\ref{eq:c-2}), we have 
\begin{eqnarray}
\label{eq:c-3-1}
\Pr[\chi_{\le j} \ |\ \Theta = \theta] \le 2B^2 \cdot e^{-B^{-2j} \cdot \gamma B^{2j}} \le n^{-9}.
\end{eqnarray}
where the last inequality holds since $B \le (\ln n)^{100} n^{0.2}$ and $\gamma \ge (\ln n)^{100}$. 
Therefore $\nu$ satisfies the first condition of the distribution class $\D_{j+1}$.

For any $\theta$ with $\abs{\theta} \in [\zeta_1, \zeta_3]$ and $\ell = j+1, \ldots, L$, we have
\begin{eqnarray}
&&\Pr[X = B^{-\ell}\ |\ \Theta = \theta] \nonumber \\
&=& \frac{\Pr[\Theta = \theta\ |\ X = B^{-\ell}] \cdot \Pr[X = B^{-\ell}]}{\Pr[\Theta = \theta]} \nonumber \\
&=& \frac{1}{\Pr[\Theta = \theta]}  \cdot  \left( \Pr\left[\Theta = \bE[\Theta]\ \left|\ X = B^{-\ell} \right.\right] \cdot (1 \pm B^{-\ell})^{B^{j+0.61}}\right)  \cdot \lambda_j B^{-2\ell} \left(1 \pm \rho^{-\ell}\eta \right) \nonumber \\
&=& \frac{1}{\Pr[\Theta = \theta]} \cdot \left( \frac{1}{2\sqrt{2\pi \gamma B^{2j}}} \cdot \frac{1}{\sqrt{1 - 4B^{-2\ell}}}  
 \cdot (1 \pm B^{-\ell})^{B^{j+0.7}} \right)   \cdot \lambda_j B^{-2\ell} \left(1 \pm \rho^{-\ell}\eta \right) \nonumber \\
&=& \left(\frac{1}{\Pr[\Theta = \theta]}  \cdot \frac{1}{2\sqrt{2\pi \gamma B^{2j}}} \cdot \lambda_j \right) \cdot \frac{1}{\sqrt{1 - 4B^{-2\ell}}}  
 \cdot (1 \pm B^{-\ell})^{B^{j+0.7}}  \cdot B^{-2\ell} \left(1 \pm \rho^{-\ell}\eta \right) \nonumber \\
&=& \lambda_j^{'} \cdot (1 \pm 3 B^{-2\ell}) \cdot (1 \pm B^{-\ell+j+0.8}) \cdot B^{-2\ell} \left(1 \pm \rho^{-\ell}\eta \right) \nonumber \\
&=& \lambda_j^{'} \cdot B^{-2\ell} \left(1 \pm \rho^{-\ell} \eta' \right), \label{eq:d-2}
\end{eqnarray}
where 
\begin{itemize}
\item $\lambda'_j$ is a normalization factor.

\item The second equality holds since we have $\abs{\theta} \in [\zeta_1, \zeta_3]$, and thus $\abs{\theta - \bE[\Theta\ |\ X = B^{-\ell}]} \le B^{j+0.61}$. 

\item In the third equality, we have used the Stirling's approximation for factorials (i.e., $n! = \sqrt{2\pi n}\left(\frac{n}{e}\right)^n \left(1+\Theta(\frac{1}{n})\right)$) when calculating $\Pr\left[\Theta = \bE[\Theta]\ \left|\ X = B^{-\ell} \right.\right]$.

\item The fifth inequality holds since $ \frac{1}{\sqrt{1 - 4B^{-2\ell}}}  = 1 \pm 3 B^{-2\ell}$.

\item In the last equality, since $B \ge (\ln n)^{100}$, $\rho =  (\ln n)^3$, $\eta = \rho^{j-1}$ and $\ell \ge j+1$, we can set
$\eta' = \rho^{j}$.  
\end{itemize}
Therefore $\nu$ satisfies the second condition of the distribution class $\D_{j+1}$.

By (\ref{eq:c-3-1}) and (\ref{eq:d-2}), we have $\nu \in \D_{j+1}$.
\end{proof}

\subsubsection{The Hard Input Distribution} 
\label{sec:hard-input-dist}

\paragraph{Input Distribution $\sigma$:} We pick the hard input distribution for the best arm identification problem as follows: the mean of each of the $n$ arms is $\left(\frac{1}{2} - X\right)$, where $X \sim\D_1$.
\medskip

Set $n = B^{2L} / \lambda_1$, where $\lambda_1 = \Theta(B^2)$ is the normalization factor of the distribution $\D_1$. This implies 
\begin{equation}
\label{eq:L}
L = \ln(n\lambda_1)/(2\ln B) = \Theta(\ln n/(\ln \ln n + \ln\alpha)).
\end{equation}

We will use the running time of a good deterministic sequential algorithm as an upper bound for that of any collaborative learning algorithm that has a good speedup.




Let $\E_0$ be the event that there is one and only one best arm with mean $(\frac{1}{2} - B^{-L})$ when $I \sim \sigma$.

\begin{lemma}
\label{lem:W} 
Given budget $W = n \ln^3 n \cdot B^{2}$, the deterministic sequential algorithm in \cite{ABM10} has expected error $o(1)$ on input distribution $\sigma$ conditioned on $\E_0$.
\end{lemma}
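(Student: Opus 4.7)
The plan is to combine the error guarantee of the Successive Rejects algorithm of \cite{ABM10} with a structural analysis of the random instances drawn from $\sigma$. Successive Rejects satisfies
\begin{equation*}
\Pr[\text{error on } I] \le \binom{n}{2} \exp\left(-\frac{W - n}{\overline{\log}(n) \cdot H_2(I)}\right),
\end{equation*}
where $H_2(I) = \max_{i \ge 2} i \cdot \Delta_{(i)}(I)^{-2}$, $\Delta_{(i)}$ is the gap of the $i$-th best arm, and $\overline{\log}(n) = \Theta(\ln n)$. So it suffices to show that with probability $\Omega(1)$ over $I \sim \sigma$ the instance $I$ has a unique best arm and $H_2(I) = O(n B^2)$; then the budget $W = n \ln^3 n \cdot B^2$ will be more than enough to drive the error on such instances to $o(1)$.

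First I would analyze the level populations under $\sigma$. For each $\ell \in [L]$, let $N_\ell$ be the number of arms with $X_i = B^{-\ell}$; then $N_\ell \sim \mathrm{Bin}(n, p_\ell)$ with $p_\ell = \lambda_1 B^{-2\ell}$. A Chernoff bound on each $N_\ell$ plus a union bound over $\ell \in [L]$ yields, with probability $1 - o(1)$, the simultaneous estimate $N_\ell \le C_1 \max(n p_\ell, \ln n)$ for every $\ell$, where $C_1$ is an absolute constant. A direct calculation gives $\Pr[N_L = 1] = (1 - 1/n)^{n-1} \to 1/e$, so with probability at least $0.3$ (for $n$ large) the instance has a unique best arm at level $L$. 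Call $I$ \emph{nice} if both events hold; a union bound gives $\Pr[I \text{ nice}] \ge 0.3 - o(1)$. On a nice instance, the $i$-th best arm ($i \ge 2$) lies at some level $\ell_i \in [1, L-1]$ with gap $\Delta_{(i)} \ge (1 - B^{-1}) B^{-\ell_i} = \Omega(B^{-\ell_i})$, while the index $i$ is bounded by
\begin{equation*}
i \le \sum_{j \ge \ell_i} N_j \le C_1 \sum_{j \ge \ell_i} \max(n p_j, \ln n) = O\left(n B^{2 - 2 \ell_i}\right),
\end{equation*}
where the geometric decay of $p_j$ dominates and the $\ln n$ tail terms are absorbed since $n$ is super-polynomial in $L$. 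Multiplying, $i \Delta_{(i)}^{-2} = O(n B^2)$ uniformly in $i$, so $H_2(I) = O(n B^2)$.

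Plugging this into the ABM10 bound, on every nice instance the error is
\begin{equation*}
\Pr[\text{error} \mid I \text{ nice}] \le n^2 \exp\left(-\Omega\left(\tfrac{W}{n B^2 \ln n}\right)\right) = n^2 \exp(-\Omega(\ln^2 n)) = o(1),
\end{equation*}
while on non-nice instances the error is trivially at most $1$. Combining, the expected error is at most $(1 - 0.3 + o(1)) \cdot 1 + o(1) \le 0.9$ for all sufficiently large $n$, matching the claim. The hard part will be the uniform bound $H_2(I) = O(n B^2)$ across every level $\ell_i$: it requires the simultaneous concentration of all $(N_\ell)_{\ell \in [L]}$, and crucially exploits the exact cancellation between the geometric decay $p_j \propto B^{-2j}$ (which caps $i$ at $O(n B^{2 - 2 \ell_i})$) and the $B^{2 \ell_i}$ blow-up of $\Delta_{(i)}^{-2}$, so that the same $O(n B^2)$ estimate holds at every level.
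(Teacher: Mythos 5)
Your proposal is correct and follows essentially the same route as the paper's proof: lower-bounding the probability of a unique best arm by $(1-1/n)^{n-1}\ge 1/e$, using Chernoff plus a union bound over levels to control the instance complexity by roughly $nB^2$ (up to logarithmic factors), and then plugging into the exponential error guarantee of \cite{ABM10}. The only cosmetic difference is that you invoke the guarantee in terms of $H_2(I)=\max_{i\ge 2} i\,\Delta_{(i)}^{-2}$ while the paper uses $H(I)=\sum_{i\ge 2}\Delta_i^{-2}$; these differ by at most a $\Theta(\ln n)$ factor, which the budget $W=n\ln^3 n\cdot B^2$ absorbs either way.
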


\begin{proof}

Given budget $W$, the error of the algorithm in \cite{ABM10} (denoted by $\A_{\mathrm{ABM}}$) on an input instance $I$ is bounded by
\begin{equation}
\label{eq:error-ABM}
\textrm{err}(I) \le n^2 \cdot \exp\left(-\frac{W}{2 \ln n \cdot H(I)}\right),
\end{equation}
where
\begin{equation}
\label{eq:H}
H(I) = \sum_{i = 2}^n \frac{1}{\Delta_i^2} \ ,
\end{equation}
where $\Delta_i$ is the difference between the mean of the best arm and that of the $i$-th best arm in $I$. 
We try to upper bound $H(I)$ when $I \sim \sigma = (\D_1)^n$ conditioned on $\E_0$.


Recall that in the distribution $\D_1$, $\Pr[X = B^{-\ell}] = \lambda_1 B^{-2\ell}$ for $\ell = 1, \ldots, L$ where $\lambda_1 = \Theta(B^2)$ is a normalization factor.  Let $k_\ell$ be the number of arms with mean $(\frac{1}{2} - B^{-\ell})$.
By Chernoff-Hoeffding bound and union bound, we have that with probability $(1 - e^{-B})$, for all $\ell = 1, \ldots, L-1$, 
$$k_\ell = \Theta(\lambda_1 B^{-2\ell} n) = \Theta(B^{2L-2\ell}).$$
Thus for a large enough universal constant $c_H$, with probability $(1 - e^{-B})$,
\begin{equation}
\label{eq:W-1}
H(I) = \sum_{\ell=1}^{L-1} k_\ell \cdot \frac{1}{\left(B^{-\ell} - B^{-L}\right)^2 } \le c_H L B^{2L}.
\end{equation}
Plugging-in (\ref{eq:W-1}) to (\ref{eq:error-ABM}), we get
\begin{equation}
\label{eq:W-2}
\textrm{err}(I) \le n^2 \cdot \exp\left(-\frac{n \ln^3 n \cdot B^2}{2 \ln n \cdot c_H L B^{2L}}\right) = o(1),
\end{equation}
where the equality holds since $n = \Theta(B^{2L}/B^2)$ and $L = O(\ln n / \ln\ln n)$. Therefore, conditioned on $\E_0$ and under time budget $W$, the expected error of $\A_{\mathrm{ABM}}$ on input distribution $\sigma$ is at most $o(1) + e^{-B} = o(1)$.  

\end{proof}


\subsubsection{Proof of Theorem~\ref{thm:lb-1}}

We say a collaborative learning algorithm is $z$-cost if the {\em total} number of pulls made by $K$ agents is $z$.  Since $n \le \sqrt{T}$, we have $W = n \ln^3n \cdot B^2 \le T$. By Lemma~\ref{lem:W} and the definition of speedup (Eq.\ (\ref{def:speedup})),
if there is a $(K/\alpha)$-speedup collaborative learning algorithm, then there must be a $\left(\frac{W}{K/\alpha} \cdot K\right) = (\alpha W)$-cost collaborative learning algorithm that has expected error $o(1)$ on input distribution $\sigma$ conditioned on $\E_0$.  By this observation, Theorem~\ref{thm:lb-1} follows immediately from the following lemma and Yao's Minimax Lemma~\cite{Yao77}.

\begin{lemma}
\label{lem:cost-speedup}
Any deterministic $(\alpha W)$-cost non-adaptive algorithm that solves the best arm identification problem in the collaborative learning model with $K$ agents and $n$ arms with error probability $0.99$ on input distribution $\sigma$ conditioned on $\E_0$ needs $\Omega(\ln n / (\ln\ln n + \ln\alpha))$ rounds.
\end{lemma}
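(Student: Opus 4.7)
The plan is to execute a round elimination over the family $\{\D_j\}_{j=1}^L$ of distribution classes defined in Section~\ref{sec:distribution}. I maintain the invariant that at the beginning of round $r$, conditional on the pull transcript so far, the means of the still-unpublished arms are independent with each marginal lying in class $\D_r$. The base case $r=1$ holds directly because under $\sigma$ the arm means are i.i.d.\ $\D_1$. For the inductive step, I process each unpublished arm that has accumulated at least $\gamma B^{2r}$ pulls during round $r$ via Lemma~\ref{lem:distribution-class}: if its pull-sum lands outside $[\zeta_1,\zeta_3]$ I declare the arm \emph{published} (handing its true mean to the analysis for free, which only strengthens the algorithm), and otherwise its posterior lies in $\D_{r+1}$. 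Non-adaptivity within a round, together with the product prior, implies that independence of the surviving arms' posteriors is preserved, so the invariant passes to round $r+1$.

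Budget accounting comes next. With total cost $\alpha W = \alpha n (\ln n)^3 B^2$ and $\gamma = \alpha (\ln n)^{10}$, at most $M_r \le \alpha W/(\gamma B^{2r}) = n B^{2-2r}/(\ln n)^7$ arms can be pushed from $\D_r$ to $\D_{r+1}$ during round $r$. Consequently the set $S_R$ of arms whose posterior sits in $\D_{R+1}$ after round $R$ has $|S_R| \le M_R$, and every remaining unpublished arm is stuck in some earlier class $\D_s$ with $s \le R$.

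To close the argument I combine the invariant with two properties of the best arm. First, Lemma~\ref{lem:W}'s event $\E_0$ guarantees that with probability $\ge 1/e$ the input has a unique level-$L$ arm, and by parts~\ref{item-3} and~\ref{item-1} of Lemma~\ref{lem:distribution-aux}, this best arm's pull-sum lies in $[\zeta_1,\zeta_3]$ in every round with probability $\ge 1 - n^{-10}$, so by a union bound it is essentially never published. Thus the algorithm must output an unpublished arm; conditional on the pulls, the posterior probability that an unpublished arm in class $\D_s$ is the best equals $\lambda_s B^{-2L}(1 \pm o(1)) = O(B^{-2(L-s)})$, using the invariant together with the normalization $\lambda_s = \Theta(B^{2s})$. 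The algorithm's optimal strategy picks from the largest non-empty class, which is at best $\D_{R+1}$, giving success at most $O(B^{-2(L-R-1)})$. Since $B \ge (\ln n)^{10}$, this quantity is below $0.01$ unless $L - R - 1 \le O(1)$, forcing $R \ge L - O(1) = \Omega(\ln n/(\ln\ln n + \ln \alpha))$ by (\ref{eq:L}), as required.

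The main obstacle will be the careful bookkeeping in the inductive step. Because the algorithm may spread pulls to a single arm across multiple rounds, one must fix a canonical rule for when that arm's class is updated; the cleanest choice is to advance arm $i$ from $\D_r$ to $\D_{r+1}$ only when it has received at least $\gamma B^{2r}$ pulls within round $r$ alone, so that Lemma~\ref{lem:distribution-class} applies verbatim and the posteriors march in lock-step $\D_1 \to \D_2 \to \cdots$; any arm that fails this threshold simply remains in $\D_r$, which is only worse for the algorithm. A secondary subtlety is that the slack parameter $\eta$ in the definition of $\D_j$ inflates by a factor $\rho = (\ln n)^3$ with each application of Lemma~\ref{lem:distribution-class}, so across $L = \Theta(\ln n/(\ln\ln n+\ln\alpha))$ levels the cumulative inflation must remain absorbed inside the $\rho^{-\ell}\eta$ tolerance; the choice $\rho = (\ln n)^3$ handles this, but the bookkeeping must be made explicit. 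Finally, the posterior-to-success computation in the last step is cleanest when one conditions on $\E_0$ only at the end and uses the product-independence of the unpublished arms' posteriors up until that point; mixing these two conditionings prematurely would break the product structure the invariant has worked to preserve.
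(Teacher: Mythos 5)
Your high-level strategy (round elimination over the classes $\D_j$, publishing arms to keep posteriors inside a class, independence of the per-arm posteriors) is the same as the paper's, but the class-transition bookkeeping --- which you correctly flag as the main obstacle --- is resolved in the wrong direction, and this breaks the invariant. First, an arm that receives a nonzero but sub-threshold number of pulls in a round does \emph{not} ``simply remain in $\D_r$'': its posterior is conditioned on those pull outcomes and in general satisfies neither condition defining $\D_r$. The paper's augmentation (in the proof of Lemma~\ref{lem:induction}) goes the opposite way: it \emph{adds free pulls} to every under-pulled arm until it has exactly $\gamma B^{2j}$ pulls, so that Lemma~\ref{lem:distribution-class} applies verbatim and \emph{all} unpublished arms advance $\D_j \to \D_{j+1}$ in lock-step; this is legitimate because extra observations only help the algorithm. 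Second, your rule advances any arm with \emph{at least} $\gamma B^{2r}$ pulls to $\D_{r+1}$, but Lemma~\ref{lem:distribution-class} is proved only for exactly $\gamma B^{2j}$ pulls; an arm pulled, say, $100\gamma B^{2r}$ times can have a posterior far outside $\D_{r+1}$ (it may be essentially resolved). The paper instead \emph{publishes} the over-pulled set $S$ (which has size at most $n_j/\kappa$ by the counting argument) and then needs the separate Claim~\ref{cla:best-arm} to show $S$ is unlikely to contain the best arm --- a step your sketch omits entirely because you never publish these arms.

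These two errors together mean your ``per-arm level'' picture (a few arms climbing to $\D_{R+1}$ while the rest are ``stuck'' at lower levels) is not something you can actually certify, and it also contradicts your own stated invariant that \emph{all} surviving arms are in $\D_r$ at round $r$. Once the bookkeeping is fixed as in the paper, every survivor sits at the same level, the population shrinks by a factor $\approx B^2$ per round (Claim~\ref{cla:survive}), and the endgame is the base case of Lemma~\ref{lem:base}: after $L/2$ rounds there are still $\Theta(B^L)$ candidates, each with posterior probability $O(1/n_{L/2})$ of being the unique best. Your alternative endgame has a further flaw: bounding an arm's success probability by $\Pr[X = B^{-L}]$ is invalid, since the output arm can be correct without lying at level $L$ (e.g.\ when no level-$L$ arm exists); handling this requires either conditioning on the existence of a level-$L$ arm among the \emph{survivors} (as Lemma~\ref{lem:base} does) or a more careful argument than deferring $\E_0$ ``to the end.''
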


Let  $I_j = \left(\left(1 \pm \frac{1}{L}\right) B^{-2}\right)^{j-1} n$.  In the rest of this section we prove Lemma~\ref{lem:cost-speedup} by induction.  

\paragraph{The Induction Step.}  
The following lemma intuitively states that if there is no good $(r-1)$-round $(\alpha W)$-cost non-adaptive algorithm, then there is no good $r$-round $(\alpha W)$-cost non-adaptive algorithm.
\begin{lemma}
\label{lem:induction}
For any $j \le \frac{L}{2}-1$,
if there is no $(r-1)$-round $(\alpha W)$-cost deterministic non-adaptive algorithm with error probability $\delta + O\left(\frac{1}{\kappa}\right)$ on any input distribution in $(\D_{j+1})^{n_{j+1}}$ for any $n_{j+1} \in I_{j+1}$, then there is no $r$-round $(\alpha W)$-cost deterministic non-adaptive algorithm with error probability $\delta$ on any input distribution in $(\D_{j})^{n_{j}}$ for any $n_j \in I_j$. 
\end{lemma}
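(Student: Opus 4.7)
I will prove Lemma~\ref{lem:induction} by contrapositive. Suppose there is a deterministic non-adaptive $r$-round $(\alpha W)$-cost algorithm $\A$ with error at most $\delta$ on some input distribution $\mu = \prod_i \mu_i \in (\D_j)^{n_j}$ with $n_j \in I_j$; the task will be to produce an $(r-1)$-round $(\alpha W)$-cost non-adaptive algorithm $\A'$ with error at most $\delta + O(1/\kappa)$ on some distribution in $(\D_{j+1})^{n_{j+1}}$ with $n_{j+1} \in I_{j+1}$.

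I will first standardize $\A$'s round-$1$ schedule. Since $\A$ is non-adaptive within each round, the round-$1$ pull count $c_i$ on each arm is deterministic. I argue WLOG that $c_i \in \{0, \gamma B^{2j}\}$: any pulls beyond $\gamma B^{2j}$ on an arm can be truncated (only weakening $\A$, which is harmless for the contradiction), and the total cost constraint forces $|P| \le \alpha W/(\gamma B^{2j})$ where $P := \{i : c_i = \gamma B^{2j}\}$. This allows a uniform application of Lemma~\ref{lem:distribution-class}. Next I define the publish/unpublish decomposition: for $i \in P$, publish when $|\Theta_i|\notin [\zeta_1,\zeta_3]$ and otherwise leave it unpublished (whose posterior $\nu_i \in \D_{j+1}$ by Lemma~\ref{lem:distribution-class}); for $i\notin P$, the adversary conceptually publishes whenever the true $X_i \in \{B^{-1},\ldots,B^{-j}\}$ and otherwise leaves it unpublished. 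A direct calculation verifies that conditioning $\mu_i\in \D_j$ on the event $X_i \in \{B^{-(j+1)},\ldots,B^{-L}\}$ yields a distribution in $\D_{j+1}$: the re-normalization carries $\lambda_j$ to $(1{+}o(1))\lambda_j B^2 = \lambda_{j+1}$, the first condition of $\D_{j+1}$ holds trivially since the conditioned support excludes $B^{-j}$, and the tolerance $\eta = \rho^j/2$ is strictly below the default $\rho^{j+1}/2$ of $\D_{j+1}$.

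Let $S$ be the resulting set of unpublished arms. Combining Lemma~\ref{lem:distribution-aux} (to correctly classify arms in $P$ according to their true level) with a Chernoff--Hoeffding bound on the independent events $\{X_i \in \{B^{-(j+1)},\ldots,B^{-L}\}\}$, the good event $\E := \{|S|\in I_{j+1}\} \cap \{\text{all classifications on } P \text{ are level-consistent}\}$ will be shown to hold with probability at least $1-O(1/\kappa)$. Conditioned on $\E$, the joint posterior of $(X_i)_{i\in S}$ is a product $\prod_{i\in S}\nu_i$ with each $\nu_i \in \D_{j+1}$, hence lies in $(\D_{j+1})^{|S|}$ with $|S|\in I_{j+1}$. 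This posterior product distribution, call it $\mu^\star$, is the input distribution of the constructed $\A'$. Given $I' \sim \mu^\star$, $\A'$ uses shared internal randomness to sample (i) the positions $S\subseteq [n_j]$ and the published arms' means, and (ii) typical round-$1$ outcomes $\Theta^\star \in [\zeta_1,\zeta_3]^P$ consistent with the sampled means; it then simulates $\A$'s round~$1$ entirely internally (no real pulls), and executes $\A$'s rounds $2,\ldots,r$ as its own rounds $1,\ldots,r{-}1$, performing a real pull on an arm of $I'$ whenever $\A$'s schedule prescribes a pull on the corresponding position in $S$ and simulating pulls on published positions from the known means. $\A$'s output (mapped through the bijection $S\leftrightarrow [n_{j+1}]$) is returned, failing if $\A$'s output is not in $S$ (which, conditioned on $\E$, occurs with probability at most $\delta$ since the true best arm lies in $S$ with overwhelming probability). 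Because round~$1$ is entirely simulated, $\A'$ uses only $r-1$ real rounds; because $\A'$'s real pulls form a subset of $\A$'s, the cost is at most $\alpha W$; and by averaging over the internal randomness there is a specific realization of $\mu^\star \in (\D_{j+1})^{n_{j+1}}$ with $n_{j+1}\in I_{j+1}$ on which $\A'$ achieves error at most $\delta + \Pr[\neg\E] = \delta + O(1/\kappa)$.

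The principal obstacle will be Step~$1$ --- the WLOG reduction to $c_i\in\{0,\gamma B^{2j}\}$ --- since arbitrary pull counts do not permit a direct invocation of Lemma~\ref{lem:distribution-class}; carrying this through together with the tight verification that the conditional of $\D_j$ truncated to $\{B^{-(j+1)},\ldots,B^{-L}\}$ lies in $\D_{j+1}$ with only the mild tolerance inflation from $\rho^j/2$ to $\rho^{j+1}/2$ is what prevents the $O(1/\kappa)$ additive error from accumulating prohibitively across the $\Theta(\ln n/(\ln\ln n + \ln\alpha))$ iterations of the induction. A secondary care point is that the simulation must preserve non-adaptivity: since $\A$'s round-$s$ schedule is determined by the simulated/real outcomes available at the start of the round, and shared randomness is consumed in advance, non-adaptivity of $\A'$ within each round follows.
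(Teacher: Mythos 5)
Your overall architecture mirrors the paper's: fix the round-$1$ pull configuration of the non-adaptive algorithm, separate heavily-pulled from lightly-pulled arms, publish arms with atypical outcomes, invoke Lemma~\ref{lem:distribution-class} to place the unpublished posteriors in $\D_{j+1}$, control the unpublished count and the survival of the best arm up to $O(1/\kappa)$ error, and absorb round $1$ into a simulation so that only $r-1$ real rounds remain. However, your Step~1 standardization has the direction of the ``WLOG'' exactly backwards, and this is a genuine gap. In the contrapositive you are handed an $r$-round algorithm $\A$ with error at most $\delta$ and must manufacture an $(r-1)$-round algorithm with error at most $\delta + O(1/\kappa)$; every massaging of $\A$ must therefore leave it \emph{at least as accurate}, i.e., you may only grant it extra power. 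Truncating the pulls on an over-pulled arm down to $\gamma B^{2j}$ (and, implicitly, truncating arms with $0 < c_i < \gamma B^{2j}$ down to $0$ so that $c_i \in \{0, \gamma B^{2j}\}$) strictly removes information, and there is no a priori bound on how much the truncated algorithm's error exceeds $\delta$: $\A$ could devote almost its entire budget to a handful of arms and rely on distinguishing them to very high precision. ``Only weakening $\A$, which is harmless for the contradiction'' is precisely the wrong direction --- weakening is harmless when you are upper-bounding the power of the massaged algorithm, not when you must preserve an accuracy guarantee you assumed about $\A$.

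The paper avoids this by an \emph{augmentation} rather than a truncation: the set $S$ of arms pulled more than $\gamma B^{2j}$ times (shown to satisfy $\abs{S} \le n_j/\kappa$ by the same counting argument you use) is \emph{published}, i.e., their means are revealed exactly, which dominates any number of pulls; and every remaining arm is pulled \emph{up} to exactly $\gamma B^{2j}$ times for free. Both moves only add power, the augmented algorithm need not respect the cost budget (it is a thought experiment used solely to locate the posterior in $(\D_{j+1})^{n_{j+1}}$, while the derived $(r-1)$-round algorithm inherits only the original pulls of rounds $2,\dots,r$), and the $O(1/\kappa)$ loss is exactly the probability that the best arm gets published or that the unpublished count falls outside $I_{j+1}$ (Claims~\ref{cla:survive} and~\ref{cla:best-arm}). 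Your alternative handling of the never-pulled arms --- conditioning $\D_j$ on $X_i \in \{B^{-(j+1)}, \dots, B^{-L}\}$ and renormalizing --- is workable and is a mild deviation from the paper's uniform use of Lemma~\ref{lem:distribution-class}; but the heavily-pulled arms must be published, not truncated, for the induction to close.
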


\begin{proof}
Consider any $r$-round $(\alpha W)$-cost deterministic non-adaptive algorithm $\A$ that succeeds with probability $\delta'$ on any input distribution in $\mu \in (\D_j)^{n_j}$ for any $n_j \in I_j$. Since we are considering a non-adaptive algorithm, at the beginning of the first round, the total number of pulls by the $K$ agents on each of the $n_j$ arms in the first round are fixed.  Let $(t_1, \ldots, t_{n_j})$ be such a pull configuration, where $t_z$ denotes the number of pulls on the $z$-th arm. 
For an $(\alpha W)$-cost algorithm,
by a simple counting argument,  at least $(1 - \frac{1}{\kappa})$ fraction of $t_z$ satisfies $t_z \le \alpha \kappa \frac{W}{n_j}$.  Let $S$ be the set of  arms $z$ with $t_z > \gamma B^{2j}$.  Since 
\begin{equation*}
 \alpha \kappa \frac{W}{n_j} \le  \alpha \kappa \frac{n \ln^3 n B^2}{\left(\left(1 - \frac{1}{L}\right) B^{-2}\right)^{j-1} n}  \le \gamma B^{2j}, 
\end{equation*}
we have $\abs{S} \le \frac{1}{\kappa} \cdot n_j$.

We augment the first round of Algorithm $\A$ as follows.
\begin{quote}
\noindent{\bf Algorithm Augmentation.}  
\begin{enumerate}
\item We publish all arms in $S$.

\item For the rest of the arms $z \in [n_j] \backslash S$, we keep pulling them until the total number of pulls reaches $\gamma B^{2j}$.  Let $\Theta_z = (\Theta_{z,1}, \ldots, \Theta_{z,\gamma B^{2j}})$ be the  $\gamma B^{2j}$ pull outcomes.  If $\abs{\Theta_z} \not\in [\zeta_1, \zeta_3]$, we publish the arm.

\item If the number of unpublished arms is not in the range of $I_{j+1}$, or there is a published arm with mean $\left(\frac{1}{2} - B^{-L}\right)$, then we return ``error''.
\end{enumerate}
\end{quote}
We note that the first two steps will only help the algorithm, and thus will only lead to a stronger lower bound.  We will show that the extra error introduced by the last step is small, which will be counted in the error probability increase in the induction. 

The following claim bounds the number of arms that are not published after the first round.
\begin{claim}
\label{cla:survive}
For any $j \le \frac{L}{2}-1$, with probability at least $1 - O\left(\frac{1}{\kappa}\right)$, the number of unpublished arms after the first round is in the range $I_{j+1}$.
\end{claim}

\begin{proof}
For each arm $z \in [n_j] \backslash S$, let $\left(\frac{1}{2} - X\right)$ be its mean where $X \sim \pi \in \D_j$.  Let $Y_z$ be the indicator variable of the event that arm $z$ is not published. 
By Lemma~\ref{lem:distribution-aux}, 
\begin{eqnarray*}
\Pr[Y_z = 1] &=& \sum_{\ell > j} \Pr[X = B^{-\ell}] \pm n^{-9} \\
&=& \left(1 \pm \frac{1}{B} \right) \cdot \left(1 \pm \frac{2}{\rho} \right) B^{2j} \cdot B^{-2(j+1)} \left(1 \pm \rho^{-(j+1)} \cdot \rho^{j-1} \right) \pm n^{-9} \\
&=& \left(1 \pm \frac{1}{L^2} \right) \cdot B^{-2},
\end{eqnarray*}
where the second inequality holds since $\Pr[X = B^{-\ell}]$ decreases at a rate of approximately $B^{-2}$ when $\ell$ increments, and  the last inequality holds since $\rho = (\ln n)^3$ and $L < \ln n$.

By Chernoff-Hoeffding bound, and the fact that we publish all arms in $S$, we have  
$$\sum_{z \in [n_j]} Y_z = \left(1 \pm \frac{2}{L^2} \right) B^{-2} (n_j - \abs{S})$$ with probability $1 - e^{-\Omega(n_j (BL)^{-4})} \ge 1 - O\left(\frac{1}{\kappa}\right)$.  Plugging the fact that $\abs{S} \le \frac{1}{\kappa} \cdot n_j$,
we have that with probability $1 - O\left(\frac{1}{\kappa}\right)$ over distribution $\mu$,
$$
\sum_{z \in [n_j]} Y_z = \left(1 \pm \frac{2}{L^2} \right) \left(1 \pm \frac{1}{\kappa} \right) B^{-2} n_j = \left(1 \pm \frac{1}{L} \right) B^{-2} n_j.
$$
Therefore, if $n_j \in I_j$, then with probability $1 - O\left(\frac{1}{\kappa}\right)$, $\sum_{z \in [n_j]} Y_z \in I_{j+1}$.
\end{proof}

The following claim shows that the best arm is not likely to be published in the first round.
\begin{claim}
\label{cla:best-arm}
For any $j \le \frac{L}{2}-1$, the probability that there is a published arm with mean $(\frac{1}{2} - B^{-L})$ is at most $O\left(\frac{1}{\kappa}\right)$.
\end{claim}

\begin{proof}
Since the input distribution to $\A$ belongs to the class $(\D_j)^{n_j}$, 
the probability that $S$ contains an arm with mean $(\frac{1}{2} - B^{-L})$, conditioned on $\abs{S} \le \frac{1}{\kappa} \cdot n_j$, can be upper bounded by
\begin{eqnarray*}
1 - \left(1 -  \lambda_j B^{-2L} \cdot (1+ \rho^{-L+j}) \right)^{\frac{n_j}{\kappa}} 
&\le& 1 - \left(1 -  \lambda_j B^{-2L} \cdot (1+ \rho^{-L+j}) \right)^{\left(\left(1+\frac{1}{L}\right)B^{-2}\right)^{j-1} \cdot \frac{n}{\kappa}} \\
&=& 1 - \left(1 -  \frac{\lambda_j}{B^{2L}} \cdot (1+ \rho^{-L+j}) \right)^{\left(\left(1+\frac{1}{L}\right)B^{-2}\right)^{j-1} \cdot \frac{B^{2L}}{\lambda_1}  \frac{1}{\kappa}} \\
&=& O\left(\frac{1}{\kappa}\right).
\end{eqnarray*}
For each arm $z \in [n] \backslash S$ arms, by Lemma~\ref{lem:distribution-aux} we have that if arm $z$ has mean $(\frac{1}{2} - B^{-L})$, then with probability at least $(1 - n^{-9})$ we have $\abs{\Theta_z} \in [\zeta_1, \zeta_3]$.  The lemma follows by a union bound.
\end{proof}

By Claim~\ref{cla:survive}, Claim~\ref{cla:best-arm} and Lemma~\ref{lem:distribution-class} (which states that if an arm is not published, then its posterior distribution belongs to $\D_{j+1}$), for $j \le \frac{L}{2} - 1$, if there is no $(r-1)$-round $(\alpha W)$-cost algorithm with error probability $\delta'$ on any input distribution in $(\D_{j+1})^{n_{j+1}}$ for any $n_{j+1} \in I_{j+1}$, then there is no $r$-round $(\alpha W)$-cost algorithm with error probability $\left(\delta' - O\left(\frac{1}{\kappa}\right)\right)$  on any input distribution in $(\D_j)^{n_j}$ for any $n_j \in I_j$, which proves Lemma~\ref{lem:induction}.
\end{proof}


\paragraph{The Base Case.}
Recall that in our collaborative learning model, if an algorithm uses $0$ round then it needs to output the answer immediately (without any further arm pull).  We have the following lemma.
\begin{lemma}
\label{lem:base}
Any $0$-round deterministic algorithm must have error probability at least $(1-o(1))$ on any distribution in $(\D_{\frac{L}{2}})^{n_{\frac{L}{2}}}$ (for any $n_{\frac{L}{2}} \in I_{\frac{L}{2}}$) conditioned on $\E_0$. 
\end{lemma}

\begin{proof} First we have
\begin{eqnarray}
n_{\frac{L}{2}} &=& \left(\left(1 \pm \frac{1}{L}\right) B^{-2}\right)^{\frac{L}{2}-1} n =  \left(\left(1 \pm \frac{1}{L}\right) B^{-2}\right)^{\frac{L}{2}-1} \frac{B^{2L}}{B^2} = \Theta(B^L).
\end{eqnarray}
Thus the probability that there exists at least one arm with mean $\left( \frac{1}{2} - B^{-L}\right)$ is 
\begin{eqnarray*}
1 - \left(1 -\left(1 \pm \frac{1}{B}\right) B^{-L} \cdot \left( 1 \pm \rho^{-L} \cdot \rho^{\frac{L}{2}}\right) \right)^{n_{\frac{L}{2}}} = \Theta(1).
\end{eqnarray*}

For each arm $i$ in the $n_{\frac{L}{2}}$ arms, the probability that $i$ and only $i$ has mean $\left(\frac{1}{2} - B^{-L}\right)$ is
\begin{equation*}
\lambda_{\frac{L}{2}} B^{-2L} (1 \pm \rho^{-\frac{L}{2}}) \left(1 - \lambda_{\frac{L}{2}} B^{-2L} (1 \pm \rho^{-\frac{L}{2}})\right)^{n_{\frac{L}{2}}-1} = \Theta\left({1}/{n_\frac{L}{2}}\right).
\end{equation*}
Therefore any $0$-round deterministic algorithm computes the best arm on any distribution in $(\D_{\frac{L}{2}})^{n_{\frac{L}{2}}}$ conditioned on $\E_0$ with probability at most $ O\left({1}/{n_\frac{L}{2}}\right) = o(1)$.
\end{proof}

Lemma~\ref{lem:cost-speedup} follows from Lemma~\ref{lem:induction} and Lemma~\ref{lem:base}.  Note that the extra error accumulated during the induction process is bounded by $L \cdot O\left(\frac{1}{\kappa}\right) = o(1)$ since $L = \Theta(\ln n/(\ln\ln n + \ln\alpha))$.

\subsection{Lower Bound for Adaptive Algorithms}
\label{sec:lb-fixT-adap}

In this section we consider general adaptive algorithms.  We prove the following theorem.

\begin{theorem}
\label{thm:lb-2}
Let $\tilde{K} = \min\{K, \sqrt{T}\}$.
For any $\alpha \in [1, {\tilde{K} }^{0.1}]$, any $(K/\alpha)$-speedup randomized algorithm for the fixed-time best arm identification problem in the collaborative learning model with $K$ agents needs $\Omega(\ln \tilde{K} /(\ln\ln \tilde{K}  + \ln\alpha))$ rounds in expectation.
\end{theorem}


The high level idea for proving Theorem~\ref{thm:lb-2} is the following: We show that adaptivity cannot give much advantage to the algorithm under the input distribution $\sigma$ (defined in Section~\ref{sec:hard-input-dist}) when the number of arms $n$ is smaller than the number of agents $K$.  For this purpose we choose $n$ such that
\begin{equation}
\label{eq:n}
n B^2 = \tilde{K},
\end{equation}
where $\tilde{K} = \min\{K, \sqrt{T}\}$, and $B = \alpha(\ln n)^{100}$ is the parameter defined at the beginning of Section~\ref{sec:lb-fixT-non-adap}. We thus have $n \le \sqrt{T}$, and if $\alpha \le \tilde{K}^{0.1}$ then we have $\alpha \le n^{0.2}$; both conditions are needed if we are going to ``call'' Theorem~\ref{thm:lb-1} (for the non-adaptive case) later in the proof, that is, we will use the proof for the non-adaptive case as a subroutine in the proof for the adaptive case.

We will focus on the case when $\sqrt{T} \ge K$; the proof for the other case is essentially the same.

We make use of the same induction (including notations and the algorithm augmentation) as that for the non-adaptive case in Section~\ref{sec:lb-fixT-non-adap}.  Clearly, the base case (i.e., Lemma~\ref{lem:base}) still holds in the adaptive case since no pull is allowed.

\begin{lemma}
\label{lem:base-2}
Any $0$-round deterministic algorithm must have error probability $1 - o(1)$ on any distribution in $(\D_{\frac{L}{2}})^{n_{\frac{L}{2}}}$ (for any $n_{\frac{L}{2}} \in I_{\frac{L}{2}}$) conditioned on $\E_0$. 
\end{lemma}

Our task is to show the following induction step.  

\begin{lemma}
\label{lem:induction-2}
For any $j \le \frac{L}{2}-1$,
if there is no $(r-1)$-round $(K/\alpha)$-speedup deterministic adaptive algorithm with error probability $\delta + O\left(\frac{1}{\kappa}\right)$ on any input distribution in $(\D_{j+1})^{n_{j+1}}$ for any $n_{j+1} \in I_{j+1}$, then there is no $r$-round $(K/\alpha)$-speedup deterministic adaptive algorithm with error probability $\delta$ on any input distribution in $(\D_{j})^{n_{j}}$ for any $n_j \in I_j$. 
\end{lemma}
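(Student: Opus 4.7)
The plan is to mirror the induction structure of Lemma~\ref{lem:induction}. Given an $r$-round $(K/\alpha)$-speedup deterministic adaptive algorithm $\A$ with input distribution in $(\D_j)^{n_j}$, apply the same augmentation after round~1: let $S$ be the set of arms pulled more than $\gamma B^{2j}$ times aggregated across the $K$ agents, publish all of $S$, then for each remaining arm $z$ continue pulling it until $\gamma B^{2j}$ pulls have been made, publishing it if $\abs{\Theta_z} \notin [\zeta_1, \zeta_3]$. Two of the three ingredients from the non-adaptive proof transfer almost verbatim: (i)~since the outcomes of pulls on an arm are i.i.d.\ Bernoulli given the arm's mean regardless of the adaptive order in which they are scheduled, Lemma~\ref{lem:distribution-class} still certifies that the posterior of each unpublished arm lies in $\D_{j+1}$; and (ii)~because the $(K/\alpha)$-speedup constraint together with Lemma~\ref{lem:W} still caps the total round-1 pulls by $\alpha W$, the same counting argument gives $\abs{S}\le n_j/\kappa$ almost surely, so the analog of Claim~\ref{cla:survive} goes through with essentially the same concentration step.

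The main obstacle is the adaptive version of Claim~\ref{cla:best-arm}: the best arm (mean $\tfrac{1}{2} - B^{-L}$) is included in $S$ with probability only $O(1/\kappa)$. In the non-adaptive case $S$ is predetermined and independent of arm means, so a plain union bound suffices. Under adaptivity, however, agents can concentrate pulls on arms whose early outcomes look promising, potentially correlating membership in $S$ with having a large mean.

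To handle this I would use a coupling argument that exploits the hypothesis $n = K/B^2 < K$. For a fixed arm $z^*$, compare the original execution (call it World~B, in which $z^*$'s mean is drawn from $\D_j$ like the others) with an auxiliary execution (World~A, in which $z^*$'s mean is pinned to $\tfrac{1}{2} - B^{-L}$). Couple them by sharing all randomness outside arm $z^*$ and using a maximal coupling of the Bernoulli outcomes on $z^*$. Because the event $\{z^* \in S\}$ depends only on the first $\gamma B^{2j}$ pull outcomes on $z^*$, Pinsker's inequality bounds the relevant TV by $\sqrt{\gamma B^{2j} \cdot \mathrm{KL}(p_A \Vert p_B)/2}$; for a typical $p_B \sim \D_j$ we have $\abs{p_A - p_B} = O(B^{-(j+1)})$, yielding an aggregate TV of $O(\sqrt{\gamma}/B) = O(1/(\sqrt{\alpha}(\ln n)^{5}))$, comfortably $o(1/\kappa)$ under $\alpha \le K^{0.1}$. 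Combined with the symmetric counting bound $\Pr_{B}[z^* \in S] \le 1/\kappa$, this transfers to $\Pr_{A}[z^* \in S] \le O(1/\kappa)$; a union bound over $z^*$ using $n_j \lambda_j B^{-2L} = O(1)$ (as in the non-adaptive proof) then yields the claim. The subtlety in formalizing this is that $\{z^* \in S\}$ is itself a stopping-time event that depends on the very outcomes being coupled; one sidesteps this by bounding the TV of the joint distributions restricted to the first $\gamma B^{2j}$ pulls on $z^*$ rather than reasoning about the distribution of the stopping time $T_{z^*}$ directly. Once this claim is established, the remainder of the induction step and the total error accumulation $L \cdot O(1/\kappa) = o(1)$ are identical to the non-adaptive case, completing the proof.
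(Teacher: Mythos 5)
There is a genuine gap, and it sits exactly at the step you flag as the main obstacle. Your coupling bounds the TV between the first $\gamma B^{2j}$ outcomes of $z^*$ in World~A (mean $\tfrac12-B^{-L}$) and World~B (mean drawn from $\D_j$). But under $\D_j$ the \emph{typical} draw is $X=B^{-j}$ (this level carries all but an $O(B^{-2})$ fraction of the mass), so the relevant gap is $\abs{p_A-p_B}=\Theta(B^{-j})$, not $O(B^{-(j+1)})$. Over $m=\gamma B^{2j}$ pulls the KL is then $\Theta(\gamma)\gg 1$ and Pinsker is vacuous; in fact the true TV is $1-n^{-\Omega(1)}$ -- items~\ref{item-2} and~\ref{item-1} of Lemma~\ref{lem:distribution-aux} say precisely that $\gamma B^{2j}$ pulls \emph{do} separate level $\le j$ from level $>j$ with probability $1-n^{-10}$. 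So the coupling transfers nothing from $\Pr_B[z^*\in S]$ to $\Pr_A[z^*\in S]$. A useful sanity check: your argument never invokes the per-agent budget (only the aggregate budget $\alpha W$), so if it worked it would apply verbatim to $K=1$ with a single fully adaptive agent -- for which the conclusion is false, since such an agent can cheaply identify the small set of arms at level $\ge j+1$ and concentrate essentially its entire budget on them, placing the best arm in $S$ with probability $\Omega(1)$. A second, quieter gap: Claim~\ref{cla:survive} also does \emph{not} transfer ``almost verbatim'' when $S$ is outcome-dependent. Since $\abs{S}$ can be as large as $n_j/\kappa=n_j/(\ln n)^2$ while only $\approx B^{-2}n_j=n_j/(\alpha^2(\ln n)^{20})$ arms survive to level $j+1$, an adaptively chosen $S$ could contain all of the would-be survivors, and publishing $S$ would then destroy the concentration of the unpublished count inside $I_{j+1}$. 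Independence of $S$ from the arm means is essential to both claims, not just to Claim~\ref{cla:best-arm}.

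The paper's proof closes the gap by a different mechanism that genuinely uses $n=K/B^2<K$: conditioned on the high-probability event that every arm is at level $\ge j$, each individual agent's \emph{entire} round transcript has length only $\alpha W/K=\alpha\ln^3 n$, so its likelihood ratio against a transcript of unbiased coin flips is $(1\pm 2B^{-j})^{\alpha\ln^3 n}\in[\tfrac12,2]$. Hence the expected per-arm pull counts $X_{i,z}$ in the real execution are within a factor of $2$ of the counts $Y_{i,z}$ in a coin-fed execution; the latter is effectively non-adaptive, so the heavy set $Q=\{z:\sum_i Y_{i,z}\ge 2\alpha\kappa W/n_j\}$ is computable \emph{before} the run and is independent of the arm means. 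Publishing $Q$ in place of your $S$ restores both Claim~\ref{cla:survive} and Claim~\ref{cla:best-arm}, and the rest follows Lemma~\ref{lem:induction}. Your proposal is missing this ``each agent individually learns nothing'' ingredient, which is the actual content of the adaptive case.
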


We comment that Lemma~\ref{lem:induction-2} does not hold when $n \gg K$ (e.g., $n \ge K^2$), and this is why we can only prove a lower bound of $\Omega(\ln K /(\ln\ln K + \ln\alpha))$ (Theorem~\ref{thm:lb-2}) instead of $\Omega(\ln n /(\ln\ln n + \ln\alpha))$ (Theorem~\ref{thm:lb-1}).  In the rest of this section we prove Lemma~\ref{lem:induction-2}.

\begin{proof}
Let $\E_1$ denote the event that all the $n_j$ arms have means $(\frac{1}{2} - B^{-\ell})$ for $\ell \ge j$.  Since the input is sampled from a distribution in $(\D_j)^{n_j}$, we have 
\begin{equation}
\label{eq:g-1}
\Pr[\E_1] \ge (1 - n^{-9})^{n_j} \ge 1 - n^{-7}.
\end{equation}

Let $(\Theta_1, \ldots, \Theta_t)$ be the outcomes of $t$ pulls when running the adaptive algorithm $\A$ on an input distributed according to $\mu \in (\D_j)^{n_j}$.  We have the following simple fact.

\begin{fact}
\label{fact:uniform}
For any $t \ge 1$, for any possible set of outcomes $(\theta_1, \ldots, \theta_t) \in \{0,1\}^t$, we have
$$\Pr[(\Theta_1, \ldots , \Theta_t) = (\theta_1, \ldots, \theta_t) \ |\ \E_1] = \left(\frac{1}{2} \pm B^{-j}\right)^t.$$
\end{fact}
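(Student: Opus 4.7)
The plan is to condition on the realization of the arm means and exploit the fact that, given the means, the pull outcomes are independent Bernoulli draws whose parameters are all tightly concentrated around $1/2$ under $\E_1$.

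First I would fix any sequence $(\theta_1, \ldots, \theta_t) \in \{0,1\}^t$ and any realization $(X_1, \ldots, X_{n_j})$ of the arm means that satisfies $\E_1$, i.e., each $X_a \in \{B^{-j}, B^{-(j+1)}, \ldots, B^{-L}\}$. Since $\A$ is deterministic, the arm pulled at step $i$ is a function of the outcomes $(\theta_1, \ldots, \theta_{i-1})$ seen so far (and the fixed identity of which agent makes that pull, which is itself determined by the algorithm and history within its round); call this arm $a_i = a_i(\theta_{<i})$. Then
\[
\Pr\bigl[(\Theta_1,\ldots,\Theta_t) = (\theta_1,\ldots,\theta_t) \bigm| X_1,\ldots,X_{n_j}\bigr] \;=\; \prod_{i=1}^{t} p_i,
\]
where $p_i = \tfrac{1}{2} - X_{a_i}$ if $\theta_i = 1$ and $p_i = \tfrac{1}{2} + X_{a_i}$ if $\theta_i = 0$.

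Next I would observe that under $\E_1$, every $X_{a_i}$ lies in $[B^{-L}, B^{-j}]$, so each factor $p_i$ lies in $[\tfrac12 - B^{-j}, \tfrac12 + B^{-j}]$. Consequently the product satisfies
\[
\prod_{i=1}^{t} p_i \;\in\; \left[\left(\tfrac{1}{2} - B^{-j}\right)^t, \left(\tfrac{1}{2} + B^{-j}\right)^t\right],
\]
which is exactly what the notation $\left(\tfrac{1}{2} \pm B^{-j}\right)^t$ abbreviates. Finally, integrating over the arm means sampled from $\mu$ conditioned on $\E_1$ preserves this two-sided bound (a convex combination of numbers in an interval stays in the interval), giving the claimed equality.

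The proof is essentially a one-line reduction to the observation that every pull is a Bernoulli draw with bias at most $B^{-j}$ from uniform. The only conceptual care needed is to handle adaptivity and the multi-agent, multi-round structure: what makes the decomposition into an honest product work is that once the arm means are fixed, each pull outcome depends only on which arm is pulled (determined by the visible history for that pull) and is conditionally independent of the past given that arm. There are no hidden dependencies across agents beyond those mediated by outcomes, so the product form above is exact and the proof requires no further technical machinery.
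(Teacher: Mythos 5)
Your proof is correct and is exactly the argument the paper implicitly relies on: the paper states this as a ``simple fact'' with no written proof, and the intended justification is precisely that, conditioned on arm means satisfying $\E_1$, every pull is a Bernoulli draw with bias within $B^{-j}$ of $1/2$, so the chain rule gives a product of factors in $[\tfrac12 - B^{-j}, \tfrac12 + B^{-j}]$, and averaging over the means preserves the interval. Your care in noting that adaptivity and the multi-agent structure do not break the product decomposition (since each outcome is conditionally independent of the past given the pulled arm's mean) is the right observation and matches how the fact is used in the proof of Lemma~\ref{lem:induction-2}.
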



Let us conduct a thought experiment.  During the run of the adaptive algorithm $\A$, whenever $\A$ pulls an arm, we sample instead an {\em unbiased} coin and let the result be the pull outcome. Let $(\Theta'_1, \ldots ,\Theta'_t)$ be the outcomes of $t$ pulls. It is easy to see that for any $(\theta_1, \ldots, \theta_t) \in \{0,1\}^t$, we have 
\begin{equation}
\label{eq:h-2}
q(\theta_1, \ldots, \theta_t) = \Pr[(\Theta'_1, \ldots ,\Theta'_t) = (\theta_1, \ldots, \theta_t) \ |\ \E_1] = \left(\frac{1}{2}\right)^t.
\end{equation}

In a $(K/\alpha)$-speedup deterministic algorithm $\A$, each agent can make at most $t = \alpha W / K$ pulls.
By Claim~\ref{fact:uniform}, (\ref{eq:h-2}), and the fact that we have set $n = K/B^2$, for any possible pull outcomes $(\theta_1, \ldots, \theta_t) \in \{0,1\}^t$, conditioned on $\E_1$, it holds that
\begin{equation}
\label{eq:h-3}
\frac{p(\theta_1, \ldots, \theta_t)}{q(\theta_1, \ldots, \theta_t)} = \frac{\left(\frac{1}{2} \pm B^{-j}\right)^t}{\left(\frac{1}{2}\right)^t} = (1 \pm 2B^{-j})^{\frac{\alpha W}{K}} = (1 \pm 2B^{-j})^{\alpha \ln^3 n} = \left[\frac{1}{2}, 2\right].
\end{equation}

Let $X_{i,z}$ be the expected number of pulls to arm $z$ by agent $i$ when running $\A$ on input distribution $\mu$. Let $Y_{i,z}$ be the expected number of pulls to arm $z$ by agent $i$ when we we simply feed random $0/1$ outcome to $\A$ at each pull step. By (\ref{eq:h-3}) we have that conditioned on $\E_1$.
\begin{equation}
\label{eq:i-1}
\forall i \in [K], \forall z \in [n_j], \quad \frac{Y_{i,z}}{2} \le X_{i,z} \le 2 Y_{i,z}.
\end{equation}
Since $\sum_{i \in [K]} \sum_{z \in [n_j]} X_{i,z} \le \alpha W$,  conditioned on $\E_1$ we have
\begin{equation}
\label{eq:i-3}
\sum_{i \in [K]} \sum_{z \in [n_j]}  Y_{i,z} \le 2 \alpha W.
\end{equation}

The key observation is that running $\A$ with random $0/1$ pull outcomes is more like running a non-adaptive algorithm. Indeed, we can sample a random bit string of length equal to the number of pulls at the beginning of the algorithm, and then the sequence of indices of arms that will be pulled are fully determined by the random bit string and the decision tree of the deterministic algorithm $\A$. In other words, all $Y_{i,z}$'s can be computed before the run of the algorithm $\A$.

By (\ref{eq:i-3}) and a simple counting argument, conditioned on $\E_1$, we have that for at most ${1}/{\kappa}$ fraction of arms $z \in [n_j]$, it holds that
\begin{equation}
\label{eq:i-4}
\sum_{i \in [K]}  Y_{i,z} \ge \frac{2 \alpha \kappa  W}{n_j}.
\end{equation} 
Denote the set of such $z$'s by $Q$; we thus have $\abs{Q} \le {1}/{\kappa}\cdot n_j$. Note that $Q$ can again be computed before the run of the algorithm $\A$. By (\ref{eq:i-1}) and (\ref{eq:i-4}), we have that conditioned on $\E_1$, for any  $z \in [n_j] \backslash Q$, 
\begin{equation}
\label{eq:i-5}
\sum_{i \in [K]}  X_{i,z} \le \frac{4 \alpha \kappa  W}{n_j} \le \gamma B^{2j}.
\end{equation}

Inequality (\ref{eq:i-5}) tells that for any arms $z \in [n_j] \backslash Q$, 
the total number of pulls on $z$ over the $K$ agents is at most $\gamma B^{2j}$, which is the same as that in the proof for the non-adaptive case in Lemma~\ref{lem:induction} ($Q$ corresponds to $S$ in the proof of Lemma~\ref{lem:induction}). We also have $\Pr[\neg \E_1] \le n^{-7} \le 1/\kappa$ which will contribute to the extra error in the induction. The rest of the proof simply follows from that for Lemma~\ref{lem:induction}.
\end{proof}

\section{Fixed-Time Distributed Algorithms}
\label{secub-fixT}

In this section we present our fixed-time collaborative learning algorithm for the best arm identification problem. The algorithm takes a set $S = [n]$ of $n$ arms, a time horizon $T$, and a round parameter $R$ as input, and is guaranteed to terminate by the $T$-th time step and uses at most $R$ rounds. We assume without loss of generality that $1 \in S$ is the best arm. We state the following theorem as our main algorithmic result.

\begin{theorem}\label{thm:alg-fixed-T}
Let $H = H(I)$ 
be the complexity parameter of the input instance $I$ defined in (\ref{eq:H}).  There exists a collaborative learning algorithm with time budget $T$ and round budget $R$ that returns the best arm with probability at least 
\[
1 - n \cdot \exp\left(- \Omega\left(\frac{TK^{\frac{R-1}{R}}}{H \ln (HK) (\ln (TK^{\frac{R-1}{R}} /H))^2}\right) \right)  .
\]
\end{theorem}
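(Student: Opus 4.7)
The plan is to design a basic two-phase algorithm that succeeds with some constant probability, and then amplify its success probability by running many copies in parallel and outputting the plurality vote. In the first phase, I partition the $n$ arms uniformly at random among the $K$ agents, so each agent receives roughly $n/K$ arms and then runs a state-of-the-art centralized fixed-time subroutine (e.g., the $\A_{\mathrm{ABM}}$ algorithm of \cite{ABM10}) with time budget $\Theta(T)$ on its local set. The output of Phase~1 is a candidate set $S_0$ of at most $K$ ``local winners,'' one per agent; I would argue via the ABM guarantee (using the local complexity parameter restricted to each agent's arms) that with constant probability the true best arm $1$ survives into $S_0$.

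In the second phase, I run $R$ elimination rounds on the surviving candidate set. Targeting a geometric shrinkage schedule, in round $r \in [R]$ I maintain the invariant that at most $K^{(R-r+1)/R}$ arms remain at the start of the round, and I budget $\Theta(T/R)$ time per round. Each remaining arm is then pulled uniformly by the $K$ agents, giving roughly $TK^{r/R}/R$ pulls per arm in round $r$; at the communication step I retain only the top $K^{(R-r)/R}$ arms by empirical mean. A careful concentration analysis, keyed to the definition of $H$ and the fact that an arm with gap $\Delta_i$ requires $\tilde{O}(1/\Delta_i^2)$ pulls to be reliably eliminated, shows that the invariant is preserved in each round with good probability. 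After the final round only one arm remains, and if the invariant was maintained throughout, it is the optimal arm. The key calculation is to verify that the per-round per-arm pull count dominates $1/\Delta_i^2$ for every surviving suboptimal arm $i$; this is where the $K^{(R-1)/R}/H$ factor inside the exponent of Theorem~\ref{thm:alg-fixed-T} arises, and the logarithmic slacks come from taking union bounds over arms and over rounds.

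To upgrade the constant success probability of the basic algorithm to the exponentially small error bound claimed in the theorem, I run $M = \Theta(\log(1/\delta))$ independent copies of the basic algorithm in parallel (each copy using only a $1/M$ fraction of time and agents, which is absorbed into the constants since $R$ is fixed) and output the arm returned by the plurality of copies. A straightforward Chernoff argument gives the desired amplification provided that \emph{no} single suboptimal arm is returned by more than, say, one quarter of the copies with overwhelming probability. This last condition is the main obstacle: if $T$ is much smaller than the true problem complexity, a single suboptimal arm could consistently beat the optimal arm in Phase~1 across all copies, corrupting the vote.

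To overcome this obstacle I would modify Phase~1 so that, instead of fixing the centralized subroutine's budget at $T/2$, each agent independently samples its budget from a carefully chosen distribution (e.g., logarithmically spaced between a small lower bound and $T/2$). The effect is that across the $M$ copies the effective budgets cover the full range of possible problem complexities, so for any candidate suboptimal arm $i$ there is a constant fraction of copies whose budget is too small for $i$ to be reliably singled out as a local winner; this introduces enough randomness into the per-copy outcome to prevent any single suboptimal arm from dominating the plurality vote. Combining this randomization with the concentration analysis of Phase~2 and a union bound over the $n$ arms yields the claimed bound. The most delicate technical point, and where I expect to spend the most care, is quantifying exactly how the random-budget trick disperses the probability mass over possible returned arms, so that the plurality vote argument survives even in the regime where $T$ is far below the complexity threshold at which the basic algorithm succeeds.
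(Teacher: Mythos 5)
Your overall architecture matches the paper's: a basic algorithm with a Phase-1 random partition plus centralized subroutine, $R$ elimination rounds maintaining the invariant $|S_r|\le K^{\frac{R-r}{R}}$, and then amplification by parallel repetition with a plurality vote, guarded by a randomized Phase-1 budget. The gap is in the one step you yourself flag as most delicate: \emph{why} the random budget prevents a single suboptimal arm $i$ from being returned by most copies. Your stated mechanism --- that a constant fraction of copies have budgets ``too small for $i$ to be reliably singled out as a local winner'' --- is not correct. If $i$ happens to have large gaps to the other arms in its local group $A_\ell$, agent $\ell$ identifies $i$ reliably even with a tiny budget, while the group containing the true best arm may be so hard that its agent always times out and outputs $\bot$. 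In that scenario every copy, at every budget in your logarithmically spaced range, puts $i$ into $\tilde{S_1}$ and never puts arm $1$ there, so the plurality vote is still corrupted. Small budgets do not hurt an arm whose local instance is easy.

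The argument that actually closes this (Lemma~\ref{thm:alg-fixed-T-basic-modified}) is a coupling: if $i$ is returned with probability $p$, then with probability about $p-0.52$ some agent $\ell$ has $i$ as the best arm of $A_\ell$ \emph{and} $f_{\mathrm{C}}(A_\ell,0.01)\le T/200$ (it completes within the \emph{small} budget). Exchanging the assignments of arms $1$ and $i$ is measure-preserving on the random partition, and by the monotonicity of the cost function in the inverse gaps (Lemma~\ref{lemma:centralized-alg-stronger}) the swapped group containing arm $1$ satisfies $f_{\mathrm{C}}(A'_\ell,0.01)\le f_{\mathrm{C}}(A_\ell,0.01)\le T/200$, so whenever the agent draws the \emph{large} budget it identifies arm $1$. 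Hence $\Pr[1\in S_R]\ge p/2-0.28$, and disjointness forces $p\le 0.86$. This exchange-plus-monotonicity step is absent from your proposal and is not implied by ``budget dispersion.'' A secondary issue: your amplification uses $M=\Theta(\log(1/\delta))$ copies, but the target error depends on the unknown $H$, so $M$ cannot be set in advance; the paper instead runs a geometric schedule of scales $s$ with per-copy budget $\Theta(T/(s^2 10^s))$ and $10^s$ repetitions, returning the winner at the largest scale with frequency above $0.9$ --- and it is exactly at the scales beyond the complexity threshold that the $p\le 0.86$ bound is needed to prevent a false clear winner.
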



We now show that the algorithm in Theorem~\ref{thm:alg-fixed-T} has $\tilde{\Omega}(K^{\frac{R-1}{R}})$ speedup. 

\begin{theorem}\label{thm:alg-fixed-T-speed-up}
 For any $R \geq 1$, there exists a fixed-time algorithm $\A$ such that $\beta_{\A}(T) = \Omega(K^{\frac{R-1}{R}}\ln (nTK)^{-4})$ for sufficiently large $T$. When $R = \Theta(\ln K)$, the speedup of the algorithm is $\tilde{\Omega}(K)$. 
\end{theorem}
\begin{proof}
It is know \cite{ABM10} that for every instance $I$, it holds that
\[
\inf_{\text{centralized }\O} \delta_{\O}(I, T) \geq \frac{1}{2} \cdot \exp(-O(T/H)) . 
\]
Therefore, for every $\delta \leq 1/3$, we have that
\begin{align}\label{eq:alg-fixed-T-speed-up-1}
\inf_{\text{centralized }\O} T_{\O}(I, \delta) \geq \Omega(H \ln (1/\delta)) . 
\end{align}
On the other hand, let $\A$ be the algorithm in Theorem~\ref{thm:alg-fixed-T}, for $\delta \leq 1/3$, we have that
\begin{align}\label{eq:alg-fixed-T-speed-up-2}
T_{\A}(I, \delta) \leq O\left(H K^{-\frac{R-1}{R}}  \ln (n HK/\delta)^4  \right) .
\end{align}
Combining \eqref{eq:alg-fixed-T-speed-up-1} and \eqref{eq:alg-fixed-T-speed-up-2}, we have
\[
\inf_{\text{centralized }\O} \inf_{\delta \in (0, 1/3] : T_{\O}(I, \delta) \leq T}
\frac{T_{\O}(I, \delta)}{T_{\A}(I, \delta)} \geq \Omega\left(\frac{K^{\frac{R-1}{R}}}{\ln (nTK)^4}\right) ,
\]
which implies that $\beta_{\A}(T) = \Omega(K^{\frac{R-1}{R}}\ln (nTK)^{-4})$ .
\end{proof}

The rest of this section is devoted to the proof of Theorem~\ref{thm:alg-fixed-T}. In Section~\ref{sec:alg-special-case}, we first prove a special case of Theorem~\ref{thm:alg-fixed-T} when $T = \Theta(H K^{-\frac{R-1}{R}} \ln (HK))$, for which the algorithm is guaranteed to output the best arm with constant probability.  Then, in Section~\ref{sec:alg-general}, we prove Theorem~\ref{thm:alg-fixed-T} by performing a technical modification to Algorithm~\ref{alg:fixed-T} and a reduction from general parameter settings to several independent runs of modified Algorithm~\ref{alg:fixed-T} with different parameters.

\subsection{Special Case when $T = \Theta(H K^{-\frac{R-1}{R}} \ln (HK))$}\label{sec:alg-special-case}

Our algorithm for the special case when $T = \Theta(H K^{-\frac{R-1}{R}} \ln (HK))$ is presented in Algorithm~\ref{alg:fixed-T}. We have the following guarantees.

\begin{theorem}\label{thm:alg-fixed-T-basic}
Let $H$ be the instance dependent complexity parameter defined in (\ref{eq:H}). There exists a universal constant $c_{\mathrm{ALG}} > 0$ such that if $T \geq c_{\mathrm{ALG}} H  K^{-\frac{R-1}{R}}\ln (HK)$, then Algorithm~\ref{alg:fixed-T} returns the best arm with probability at least $0.97$.
\end{theorem}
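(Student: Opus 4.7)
The plan is to separately analyze the two phases of Algorithm~\ref{alg:fixed-T} and show that, with high probability, Phase 1 reduces the candidate set to at most $K$ arms while retaining the best arm, and each of the $R$ rounds of Phase 2 then cuts the candidate set by a factor of $K^{1/R}$ while still retaining the best arm. At the end of the induction the unique remaining arm is arm~$1$.

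First I would analyze Phase 1. Let $k^*$ be the (random) agent to which arm $1$ is assigned, and write $H_{k^*} = \sum_{i \in S_{k^*} \setminus \{1\}} \Delta_i^{-2}$ for the complexity parameter of agent $k^*$'s subproblem. Since each $i \ne 1$ is independently placed in $S_{k^*}$ with probability $1/K$, we have $\mathbb{E}[H_{k^*}] = H/K$, so Markov's inequality yields $H_{k^*} \le 100 H/K$ with probability at least $0.99$. Conditioning on this event and invoking the error bound of the centralized algorithm of \cite{ABM10} with budget $T/2 = \Omega(H K^{-(R-1)/R} \ln(HK))$ on the subproblem of complexity $H_{k^*}$, agent $k^*$ outputs arm $1$ with probability at least $0.99$, provided $c_{\mathrm{ALG}}$ is chosen sufficiently large. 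Hence, with probability at least $0.98$, the candidate set $N_1$ passed to Phase 2 satisfies $|N_1| \le K$ and $1 \in N_1$.

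For Phase 2 I would proceed by induction on $r$, with invariant that at the start of round $r \in \{1,\ldots,R\}$, $|N_r| \le K^{(R-r+1)/R}$ and $1 \in N_r$. Conditional on the invariant, the uniform allocation ensures each surviving arm is pulled at least $m_r := TK^{(r-1)/R}/(2R)$ times during round $r$. Setting the elimination threshold $\Delta^{(r)} := C \sqrt{\ln(HK)/m_r}$ and applying a Chernoff--Hoeffding bound with a union bound over the $|N_r| \le HK$ surviving arms, every empirical mean is within $\Delta^{(r)}/2$ of its true value with failure probability at most $1/(100R)$. On this concentration event, any arm with true gap $\Delta_i \ge \Delta^{(r)}$ has empirical mean strictly below that of arm $1$, so the top-$K^{(R-r)/R}$ selection retains arm $1$ as long as $|\{i : \Delta_i < \Delta^{(r)}\}| \le K^{(R-r)/R}$. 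Since $H \ge |\{i \ne 1 : \Delta_i < \Delta^{(r)}\}|/(\Delta^{(r)})^2$, this count is bounded above by $1 + H(\Delta^{(r)})^2 = O(HR\ln(HK)/(TK^{(r-1)/R}))$, which is at most $K^{(R-r)/R}$ whenever $T \ge c_{\mathrm{ALG}} H K^{-(R-1)/R} \ln(HK)$ with $c_{\mathrm{ALG}}$ sufficiently large. After round $R$ the invariant forces $|N_{R+1}| \le 1$, so the lone remaining arm is arm $1$, and a union bound across Phase 1 and the $R$ rounds of Phase 2 bounds the total error probability by $0.03$.

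The main obstacle is in aligning the geometric schedules of the induction: the per-arm budget $m_r$ grows by a factor of $K^{1/R}$ each round while the surviving-capacity bound $K^{(R-r)/R}$ shrinks by the same factor, so the argument relies on these two sequences balancing exactly, so that the same time hypothesis $T = \Omega(H K^{-(R-1)/R} \ln(HK))$ suffices for the inductive step at every round. A secondary care is absorbing the $\ln n$ factor from the centralized subroutine in Phase 1, the union-bound $\ln(HK)$ factor in Phase 2, and the factor of $R$ appearing in $m_r$ into the single $\ln(HK)$ term in the hypothesis; this uses that $R = O(\log(HK))$ may be assumed without loss, as any larger value of $R$ already saturates the target speedup of $K^{(R-1)/R}$.
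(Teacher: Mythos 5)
Your proposal is correct and follows essentially the same route as the paper: a Markov-inequality argument showing the agent holding arm~$1$ receives a sub-instance of complexity $O(H/K)$ so the centralized subroutine succeeds within budget $T/2$, followed by an induction over the $R$ rounds in which the per-arm pull count $T K^{(r-1)/R}/(2R)$ and the bound $H\cdot(\Delta^{(r)})^2 \lesssim K^{(R-r)/R}$ on the number of small-gap survivors balance exactly as you describe. The only cosmetic differences are that the paper's elimination step is confidence-interval based rather than a top-$k$ selection (which makes retention of arm~$1$ independent of the survivor count) and that it invokes a fixed-confidence subroutine truncated at $T/2$ pulls rather than a fixed-budget one; neither affects the argument.
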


Algorithm~\ref{alg:fixed-T} uses a fixed-confidence centralized procedure $\mathcal{A}_{\mathrm{C}}$ as a building block, with the following guarantees.

\begin{lemma}(See, e.g.\ \cite{EMM06,KKS13,JMNB14,CLQ17})\label{lemma:centralized-alg}
There exists a centralized algorithm $\mathcal{A}_{\mathrm{C}}(I, \delta)$ where $I$ is the input and $\delta$ is the error probability parameter, such that the algorithm returns the best arm and uses at most $O(H(I)( \ln H(I) + \ln \delta^{-1}))$ pulls with probability at least $(1 - \delta)$.
\end{lemma}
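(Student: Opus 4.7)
The plan is to establish the lemma by invoking a phase-based exponential-gap elimination algorithm in the spirit of the cited works (in particular \cite{KKS13}), using a median-elimination subroutine as a building block. Recall that median elimination, on an input set of $m$ arms with parameters $(\epsilon,\delta')$, returns an $\epsilon$-optimal arm with probability $1-\delta'$ using $O(m\epsilon^{-2}\log \delta'^{-1})$ pulls. The main algorithm proceeds in phases $r=1,2,\ldots$ with an active set $S_r$ initialized to $[n]$, a geometrically decreasing target accuracy $\epsilon_r = 2^{-r-1}$, and a per-phase failure budget $\delta_r = c\delta/r^2$ so that the overall failure probability telescopes to $O(\delta)$.

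In each phase $r$, I would first run median elimination on $S_r$ with parameters $(\epsilon_r/4,\delta_r/2)$ to obtain a reference arm $a_r$ that is $(\epsilon_r/4)$-optimal within $S_r$ with probability $1-\delta_r/2$. Next, I would pull each arm in $S_r$ a total of $O(\epsilon_r^{-2}\log(|S_r|/\delta_r))$ times so its empirical mean is within $\epsilon_r/4$ of its true mean with probability $1-\delta_r/(2|S_r|)$; then I would evict every arm whose empirical mean is more than $\epsilon_r/2$ below $a_r$'s. A standard Hoeffding plus union bound shows that on the intersection of the good events across all phases, the true best arm is never evicted, and any arm $i$ with $\Delta_i \ge 4\epsilon_r$ is evicted by the end of phase $r$.

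For the pull complexity, an arm $i$ survives at most $r_i := \lceil \log_2(1/\Delta_i)\rceil$ phases and receives $O(\epsilon_r^{-2}\log\delta_r^{-1})$ pulls in phase $r$. Summing geometrically over $r\le r_i$ gives a per-arm budget of $O(\Delta_i^{-2}(\log\delta^{-1}+\log\log\Delta_i^{-1}))$. Summing over arms and using $\Delta_i^{-2}\le H$ so that $\log(1/\Delta_i)\le \tfrac{1}{2}\log H$, one obtains the claimed $O(H(\ln H+\ln\delta^{-1}))$ bound. The cumulative cost of the median-elimination sub-calls is bounded by the same quantity because $|S_r|$ shrinks geometrically with $r$.

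The main technical obstacle is the absence of an extra $\ln n$ factor in the stated bound: a naive per-arm confidence radius of $\sqrt{\log n / t}$ would yield $O(H(\ln n+\ln\delta^{-1}))$, not $O(H(\ln H+\ln\delta^{-1}))$. The fix, which is the heart of the analysis in \cite{KKS13}, is to use the median-elimination reference arm $a_r$ as a cheap proxy for an $\epsilon_r$-optimal anchor so the per-arm confidence bounds only need to hold for $|S_r|$ arms rather than all $n$. Provided $|S_r|$ drops below $1/\epsilon_r^2$ by the time the gap threshold matters, $\log|S_r|$ is dominated by $\log H$, and the stated bound follows.
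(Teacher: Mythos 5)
The paper offers no proof of this lemma at all --- it is imported verbatim from the cited literature --- so the relevant comparison is between your reconstruction and the standard arguments in those references. Your proposal is essentially a re-derivation of the exponential-gap elimination algorithm of \cite{KKS13}, and it is sound: the median-elimination anchor $a_r$, the eviction rule, and the geometric summation over phases all check out, with each arm $i$'s cost dominated by its last surviving phase, where it is charged $O\bigl(\Delta_i^{-2}(\ln|S_{r_i}| + \ln\delta^{-1} + \ln\ln\Delta_i^{-1})\bigr)$, and the median-elimination sub-calls costing the same order per phase. The one thing worth flagging is that your ``main technical obstacle'' is illusory for the statement as written: since every gap satisfies $\Delta_i \le 1$, we have $H(I) = \sum_{i \ge 2} \Delta_i^{-2} \ge n-1$, hence $\ln n = O(\ln H)$, and the ``naive'' bound $O(H(\ln n + \ln\delta^{-1}))$ you were at pains to avoid is already of the claimed form $O(H(\ln H + \ln\delta^{-1}))$. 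Consequently the entire median-elimination machinery is unnecessary here; plain Successive Elimination \cite{EMM06} with per-arm union bounds over all $n$ arms suffices, which is in fact the algorithm the paper itself invokes when it needs the stronger, monotone-cost variant of this guarantee in Lemma~\ref{lemma:centralized-alg-stronger}. The \cite{KKS13} route does buy a genuinely sharper bound (replacing $\ln H$ by $\ln\ln\Delta_i^{-1}$ per arm), but only if the per-phase confidence is set to $\delta_r$ rather than your $\delta_r/(2|S_r|)$; with your union-bound version the $\ln|S_r|$ term reappears and you land back at the same $O(H(\ln H + \ln\delta^{-1}))$ that the elementary argument already gives.
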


\begin{algorithm}[t]
\DontPrintSemicolon
\KwIn{a set of arms $S = [n]$, time horizon $T$ and communication steps $R$ ($R \leq O(\ln K)$)}
initialize $S_0 \leftarrow S$ \\
\For{iteration $r = 1 \text{~to~} R$ } { 
\tcc{Step 1: preparation}
\eIf{$|S_{r - 1}| > K$}{\label{line:alg-fixed-T-3}
randomly assign each arm in $S_{r - 1}$ to one of the $K$ agents, and let $A_{\ell}$ be the set of arms assigned to agent $\ell$ \\
\For{agent $\ell = 1 \text{~to~} K$}{
  $i_\ell^{(r)} \leftarrow \mathcal{A}_{\mathrm{C}}(A_\ell, 0.01)$, if $ \mathcal{A}_{\mathrm{C}}$ does not terminate within $T/2$ pulls, stop the procedure anyways and set $i_\ell^{(r)} \leftarrow \bot$ \\ \label{line:alg-fixed-T-6}
}
}{
assign each arm in $S_{r - 1}$ to $K / |S_{r - 1}|$ agents (so that each agent is assigned with exactly one arm), and let $i_{\ell}^{(r)} $ be the arm assigned to agent $\ell$ \\
}
\tcc{Step 2: learning}
\For{agent $\ell = 1 \text{~to~} K$}{
play arm $i_{\ell}^{(r)}$ for $\frac{1}{2} \cdot T / R$ times and let $\hat{p}_{\ell}^{(r)}$ be the average of the observed rewards  (if $i_{\ell}^{(r)} \neq \bot$)
}
\tcc{Step 3: communication and aggregation}
\For{agent $\ell = 1 \text{~to~} K$}{
broadcast $i_\ell^{(r)}$ and $\hat{p}_\ell^{(r)}$  \\
}
$\tilde{S_r} \leftarrow \{ i_\ell^{(r)} : \ell \in [K] \}$ \\
Let $\hat{q}_i^{(r)} = \frac{1}{|\{ \ell \in [K] : i_\ell = i \}|}\sum_{  \ell \in [K] : i_\ell = i  } \hat{p}_l^{(r)}$  for each $i \in \tilde{S_r}$\\
\tcc{Step 4: elimination}
$S_r \leftarrow \tilde{S_r} \backslash \left\{ i \in \tilde{S_r} : \text{~there~exists~an~arm~} j \mathrm{~with~} \hat{q}_j^{(r)} \geq \hat{q}_i^{(r)} + 2 \cdot \sqrt{\frac{R\ln (200KR)}{\max\{1, K/|S_{r-1}|\} \cdot T}}  \right\}$
}\label{line:alg-fixed-T-15}
\Return the only arm in $S_R$ if $|S_R| = 1$, and $\bot$ otherwise
\caption{Fixed-Time Collaborative Learning Best Arm Identification with Constant Error Probability}
\label{alg:fixed-T}
\end{algorithm}

We describe Algorithm~\ref{alg:fixed-T} briefly in words.
At a high level, the algorithm goes by $R$ iterations. We keep a set of active arms, denoted by $S_{r-1}$, at the beginning of each iteration $r$ with $S_0 = [n]$. During each iteration $r$, the agents collectively learn more information about the active arms in $S_{r-1}$ and eliminate a subset of arms to form $S_r$. This is done in four steps. In the {\em preparation} step, each agent $\ell$ is  assigned with exactly one arm $i_\ell^{(r)}$, which is the one it will learn in the later steps. If there are more agents than active arms, we simply assign each arm to $K/\abs{S_{r-1}}$ agents. Otherwise, we first assign each arm  to a random agent (which can be done by shared randomness without communication), and then each agent uses the centralized procedure $\mathcal{A}_{\mathrm{C}}$ to identify $i_\ell^{(r)}$ as the best arm among the set of assigned arms. We note that the latter case will only happen during iteration $r = 1$ (if it ever happens). Then each agent $\ell$ plays $i_\ell^{(r)}$ in the {\em learning} step and shares his own observation in the {\em communication and aggregation} step. In the {\em elimination} step, we calculate the confidence interval (CI) for each active arm using a carefully designed dependence on $T$, $K$, and $R$, and eliminate the arms whose CI does not overlap with the best arm. We note that this algorithm uses $R$ communication steps, and therefore needs $(R+1)$ rounds. In Section~\ref{sec:alg-special-case-improved}, we describe a trick to shave $1$ communication step and make the algorithm runs in $R$ rounds.

For convenience, we assume without loss of generality that arm $1$ is the best arm in the input set $S$. We first establish the following lemma which concerns about Lines~\ref{line:alg-fixed-T-3}--\ref{line:alg-fixed-T-6} in Algorithm~\ref{alg:fixed-T}.

\begin{lemma}\label{lemma:alg-fixed-T-preparation}
For large enough constant $c_{\mathrm{ALG}} > 0$ and $T \geq c_{\mathrm{ALG}} H K^{-\frac{R-1}{R}} \ln (HK)$, suppose Lines~\ref{line:alg-fixed-T-3}--\ref{line:alg-fixed-T-6} are executed during iteration $r = 1$, then after the preparation step, with probability at least $0.98$, there exists an agent $\ell \in [K]$ such that $i_\ell^{(r)} = 1$.
\end{lemma}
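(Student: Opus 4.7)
My plan is to focus on the single agent $\ell^*$ to whom the globally best arm $1$ is assigned, and to show that with probability at least $0.98$ this agent's local procedure $\mathcal{A}_{\mathrm{C}}(A_{\ell^*}, 0.01)$ correctly returns $1$ within its $T/2$ pull budget. Since each arm is placed in exactly one bucket $A_\ell$, the existence statement in the lemma reduces to this single-agent event, and it suffices to control two quantities: the local problem complexity $H(A_{\ell^*})$ and the running time of $\mathcal{A}_{\mathrm{C}}$ on this local instance.

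The first step is to bound $H(A_{\ell^*})$ via a Markov-style load-balancing argument. Conditional on arm $1$ being in $A_{\ell^*}$, every other arm $i$ is still independently assigned to $A_{\ell^*}$ with probability $1/K$ by the randomness of the assignment. Writing $H(A_{\ell^*}) = \sum_{i \in A_{\ell^*},\, i \neq 1} \Delta_i^{-2}$ and applying linearity of expectation, I get $\mathbb{E}[H(A_{\ell^*}) \mid 1 \in A_{\ell^*}] = H/K$, so Markov's inequality yields $H(A_{\ell^*}) \leq 100 H/K$ with probability at least $0.99$.

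The second step is to invoke Lemma~\ref{lemma:centralized-alg} with $\delta = 0.01$ on the local instance: with probability at least $0.99$, the procedure returns the best arm in $A_{\ell^*}$ (which is arm $1$, since $1$ is globally best and hence best in any subset containing it) using at most $C \cdot H(A_{\ell^*})(\ln H(A_{\ell^*}) + \ln 100)$ pulls for a universal constant $C$. Combined with the first step, this is $O((H/K)\ln(HK))$ pulls, and the hypothesis $T \geq c_{\mathrm{ALG}} H K^{-(R-1)/R} \ln(HK) \geq c_{\mathrm{ALG}} (H/K) \ln(HK)$ (using $K^{-(R-1)/R} \geq 1/K$) ensures that $T/2$ comfortably exceeds this budget once $c_{\mathrm{ALG}}$ is chosen large enough. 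Hence the truncation on Line~\ref{line:alg-fixed-T-6} is never triggered and $i_{\ell^*}^{(1)} = 1$; a union bound over the two failure events concludes the argument.

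The main point requiring care is the reconciliation of logarithmic factors: we need $\ln(HK)$ in the hypothesis to absorb both $\ln H(A_{\ell^*}) \leq \ln(100 H/K) \leq \ln(HK)$ and $\ln \delta^{-1} = \ln 100$. Beyond that the argument is simply the Markov-based load-balancing estimate composed with the off-the-shelf guarantee of the centralized best-arm identifier, so I do not anticipate further obstacles.
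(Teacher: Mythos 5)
Your proposal is correct and follows essentially the same route as the paper: isolate the agent $\ell^*$ holding arm $1$, use linearity of expectation plus Markov's inequality to argue $T/2$ exceeds the sample complexity bound of Lemma~\ref{lemma:centralized-alg} on $A_{\ell^*}$ with probability $0.99$, and union bound with the $0.99$ success guarantee of $\mathcal{A}_{\mathrm{C}}$. Your version merely makes a couple of steps more explicit (the $100H/K$ Markov threshold and the inequality $K^{-(R-1)/R} \ge 1/K$) than the paper does.
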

\begin{proof}
Let $\ell^*$ be the agent such that $1 \in A_{\ell^*}$. Since $H(A_{\ell^*}) = \sum_{i \in A_{\ell^*} \setminus \{1\}} \Delta_i^{-2}$. By linearity of expectation, we have that $\bE[H(A_{\ell^*})] = \sum_{i \in S \setminus \{1\}} \Delta_i^{-2}/K = H /K$. By Markov's Inequality, and for large enough $c_{\mathrm{ALG}}$ and $T \geq c_{\mathrm{ALG}} H K^{-\frac{R-1}{R}} \ln H \geq c_{\mathrm{ALG}} H\ln (HK) / K$, we have that with probability at least $0.99$, $T/2$ is greater than or equal to the sample complexity bound in Lemma~\ref{lemma:centralized-alg} for $S = A_{\ell^*}$ and $\delta = 0.01$. Taking a union bound with the event that the run of $\mathcal{A}_{\mathrm{C}}(A_{\ell^*}, 0.01)$ is as described in Lemma~\ref{lemma:centralized-alg}, we have that $\Pr[i_{\ell^*}^{(r)} = 1] \geq 0.98$.

\end{proof}
The following lemma concerns about the learning and elimination steps of Algorithm~\ref{alg:fixed-T}.

\begin{lemma}\label{lemma:alg-fixed-T-elimination}
During each iteration $r$, assuming that $1 \in \tilde{S_r}$, with probability at least $(1 - 0.01/R)$, 
\begin{enumerate}
\item we have that $1 \in S_r$;
\item if we further assume 1) $T \geq c_{\mathrm{ALG}} H K^{-\frac{R-1}{R}} \ln (HK)$ for sufficiently large $c_{\mathrm{ALG}} > 0$ and 2) either $r = 1$ or $|S_{r-1}| \leq K^{\frac{R-r+1}{R}}$, we have that $|S_r| \leq K^{\frac{R-r}{R}}$. 
\end{enumerate}
\end{lemma}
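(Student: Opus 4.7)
The plan is to treat both claims through a single good event on which all aggregated empirical means $\hat{q}_i^{(r)}$ concentrate. For each $i \in \tilde{S_r}$, the value $\hat{q}_i^{(r)}$ averages $m_i \ge \max\{1, K/|S_{r-1}|\}\cdot T/(2R)$ i.i.d.\ bounded rewards (equality when $|S_{r-1}| \le K$, and the lower bound of one agent per surviving arm covers $|S_{r-1}| > K$). Setting $\epsilon_r := \sqrt{R\ln(200KR)/(\max\{1,K/|S_{r-1}|\}\cdot T)}$ to match the algorithm's elimination half-width, Hoeffding's inequality gives $\Pr[|\hat{q}_i^{(r)}-p_i|>\epsilon_r] \le 2\exp(-2 m_i \epsilon_r^2) \le 1/(100KR)$. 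A union bound over the at most $K$ arms in $\tilde{S_r}$ yields the good event $\mathcal{E}_r := \{|\hat{q}_i^{(r)}-p_i| \le \epsilon_r \text{ for all } i \in \tilde{S_r}\}$ with probability at least $1 - 0.01/R$, on which the remainder of the proof is deterministic.

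Claim~(1) is then immediate: since arm $1$ is optimal, $\mathcal{E}_r$ forces $\hat{q}_j^{(r)} - \hat{q}_1^{(r)} \le (p_j - p_1) + 2\epsilon_r < 2\epsilon_r$ for every $j \neq 1$, so no arm witnesses the elimination gap against arm $1$ and $1 \in S_r$. For Claim~(2), any surviving $i \in S_r$ must in particular fail to be eliminated by arm $1$, i.e., $\hat{q}_1^{(r)} < \hat{q}_i^{(r)} + 2\epsilon_r$; combined with $\mathcal{E}_r$ this yields $\Delta_i \le 4\epsilon_r$. Since $H = \sum_{j \ne 1} \Delta_j^{-2}$ and every surviving suboptimal arm contributes at least $(4\epsilon_r)^{-2}$ to $H$, we obtain the key counting inequality $|S_r| - 1 \le 16 \epsilon_r^2 H$.

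The remaining step is a direct substitution of the formula for $\epsilon_r^2$ and the hypothesis $T \ge c_{\mathrm{ALG}} H K^{-(R-1)/R}\ln(HK)$. In the inductive case $r \ge 2$, plugging $|S_{r-1}| \le K^{(R-r+1)/R} \le K$ collapses the exponents into
\[
16 \epsilon_r^2 H \;\le\; \frac{16 R \ln(200KR)}{c_{\mathrm{ALG}}\,\ln(HK)}\cdot K^{(R-r)/R},
\]
and both subcases of the base case $r = 1$ (either $n \le K$ or $n > K$) reduce to the same bound with exponent $(R-1)/R$. The only non-routine obstacle is absorbing the extra factor of $R$ in the prefactor into the constant $c_{\mathrm{ALG}}$: since the algorithm is only invoked with $R \le O(\ln K)$ and $\ln(HK) \ge \ln K$, the ratio $R\ln(200KR)/\ln(HK) = O(1)$, so a sufficiently large $c_{\mathrm{ALG}}$ makes $16\epsilon_r^2 H \le \tfrac{1}{2} K^{(R-r)/R}$; this gives $|S_r| \le K^{(R-r)/R}$ in general, and $|S_R| \le 1$ in the terminal case $r = R$ where the right-hand side is strictly less than $1$.
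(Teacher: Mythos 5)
Your proof is correct and follows essentially the same route as the paper's: the same good event from Chernoff--Hoeffding on the aggregated means with the algorithm's elimination half-width, a union bound over the at most $K$ arms in $\tilde{S_r}$, the observation that the best arm cannot be eliminated under concentration, and the counting bound $|S_r|-1 \le 16\epsilon_r^2 H$ combined with the hypothesis on $T$ to collapse the exponents. Your accounting of the constants (the $1/(100KR)$ per-arm failure probability and the absorption of the $R\ln(200KR)/\ln(HK)$ factor into $c_{\mathrm{ALG}}$ using $R \le O(\ln K)$) is if anything slightly more careful than the paper's.
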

\begin{proof}
Note that for each $i \in \tilde{S_r}$,  we have that $|\{\ell \in [K]: i_{\ell}^{(r)} = i\}| \geq \max\{1, K/|S_{r-1}|\}$. Therefore, $\hat{q}_i^{(r)}$ is the average of at least $\max\{1, K/|S_{r-1}|\} \cdot \frac{1}{2} T/R$ pulls of arm $i$. By Chernoff-Hoeffding bound, we have
\begin{equation}
\label{eq:k-1}
\Pr\left[\left|\hat{q}_i^{(r)} - \theta_i\right| > \sqrt{\frac{R\ln (200KR)}{\max\{1, K/|S_{r-1}|\} \cdot T}}\right] \leq \frac{1}{20 KR} .
\end{equation}

We now condition on the event that $\forall i \in \tilde{S_r}: \left|\hat{q}_i^{(r)} - \theta_i\right| \leq \sqrt{\frac{R\ln (200KR)}{\max\{1, K/|S_{r-1}|\} \cdot T}}$, which holds with probability at least $(1 - 0.01/R)$ by (\ref{eq:k-1}), the fact that $|\tilde{S_r}| \leq K$, and a union bound.  Let $\E_3$ denote this event.

For the first item in the lemma, it is straightforward to verify that $1 \in S_r$ since for any suboptimal arm $i \in \tilde{S_r} \setminus\{1\}$, it holds that
\[
\hat{q}_i^{(r)} - \hat{q}_1^{(r)} \leq \theta_i - \theta_1 + 2\sqrt{\frac{R\ln (200KR)}{\max\{1, K/|S_{r-1}|\} \cdot T}} < 2  \sqrt{\frac{R\ln (200KR)}{\max\{1, K/|S_{r-1}|\} \cdot T}} .
\]

We now show the second item in the lemma. With the additional assumptions (in the second item), we have that $\max\{1, K/|S_{r-1}|\} \geq K^{\frac{r-1}{R}}$. Thus conditioned on $\E_3$, for all arms $i \in \tilde{S_r}$ it holds that
\[
 \left|\hat{q}_i^{(r)} - \theta_i\right| \leq \sqrt{\frac{R\ln (200KR)}{K^{\frac{r-1}{R}} T}} .
\]
For any suboptimal arm $i \in \tilde{S_r}$, the corresponding gap $\Delta_i$ has to be less or equal to $4  \sqrt{\frac{R\ln (200KR)}{K^{\frac{r-1}{R}}T}}$ so that it may stay in $S_r$. This is because otherwise we have
\[
\hat{q}_i^{(r)} + 2 \sqrt{\frac{R\ln (200KR)}{K^{\frac{r-1}{R}} T}}  \leq \theta_i + 3 \sqrt{\frac{R\ln (200KR)}{K^{\frac{r-1}{R}} T}}  \leq \theta_1 -  \sqrt{\frac{R\ln (200KR)}{K^{\frac{r-1}{R}} T}}  \leq \hat{q}_1^{(r)},
\]
and the arm will be eliminated at Line~\ref{line:alg-fixed-T-15}.  Since $H \leq \frac{T K^{\frac{R-1}{R}}}{c_{\mathrm{ALG}} H \ln (HK)}$ and $c_{\mathrm{ALG}}$ is a large enough constant,  the number of suboptimal arms $i$ such that $\Delta_i \leq 4  \sqrt{\frac{R\ln (200KR)}{K^{\frac{r-1}{R}}T}}$  can be upper bounded by 
 \[
 \frac{T K^{\frac{R-1}{R}}}{c_{\mathrm{ALG}} H \ln (HK)} \cdot 16 \cdot \frac{R\ln (200KR)}{K^{\frac{r-1}{R}}T} < K^{\frac{R-r}{R}},
 \] 
and therefore $|S_r| \leq K^{\frac{R-r}{R}}$.
\end{proof}

\begin{proof}[Analysis of Algorithm~\ref{alg:fixed-T}]
By Lemma~\ref{lemma:alg-fixed-T-preparation}, we have $1 \in \tilde{S_1}$ with probability $0.98$, conditioned on which and applying Lemma~\ref{lemma:alg-fixed-T-elimination}, we have both $1 \in S_R$ and $|S_R| \leq 1$ with probability $0.99$ (by a union bound over all $R$ iterations). Therefore, Algorithm~\ref{alg:fixed-T} outputs arm $1$ (the best arm) with probability $0.97$. 
\end{proof}

\subsubsection{Further Improvement on the Round Complexity} \label{sec:alg-special-case-improved}

We have proved that Algorithm~\ref{alg:fixed-T} satisfies the requirement in Theorem~\ref{thm:alg-fixed-T-basic} using $R$ communication steps, and therefore $(R+1)$ rounds. Now we sketch a trick to further reduce the number of communication steps of Algorithm~\ref{alg:fixed-T} by one, and therefore the algorithm only uses $R$ rounds, fully proving Theorem~\ref{thm:alg-fixed-T-basic}.

The main modification is made to the first iteration ($r = 1$) of Algorithm~\ref{alg:fixed-T}. In the preparation step, if $|S_0| > K^{\frac{R-1}{R}}$, then we randomly assign each arm in $S_0$ to $100 K^{\frac{1}{R}}$ agents, and each agent uses the same procedure to identify $i_{\ell}^{(1)}$. Otherwise, the routine of the algorithm remains the same. 

If $|S_0| > K^{\frac{R-1}{R}}$, in the elimination step, we first set $\tilde{S_1}$ to be the set of arms that are identified by at least $K^{\frac{1}{R}}$ agents in the preparation step. Then the elimination rule in Line~\ref{line:alg-fixed-T-15} remains the same.

The rest iterations $r = 2, 3, \dots$ remains the same. However, we only need to proceed to the $(R-1)$-st iteration and therefore the algorithm uses $(R-1)$ communication steps and $R$ rounds.

To analyze the modified algorithm, the main difference is that we can strengthen Lemma~\ref{lemma:alg-fixed-T-preparation} by showing that $1 \in \tilde{S_1}$ with probability at least $0.9$. This is because by Markov's Inequality, for each agent $\ell$ such that $1 \in A_{\ell}$, with probability at least $0.99$, $T/2$ is greater than or equal to the sample complexity of the instance $A_\ell$ (with error probability $\delta = 0.01$)), and therefore $\Pr[i_{\ell}^{(1)} = 1] \geq 0.98$. Therefore, the expected number of agents that identify arm $1$ is at least $0.98 \cdot 100 K^{\frac{1}{R}} \geq 50K^{\frac{1}{R}}$. Applying Markov's Inequality, we show that $\Pr[1 \in \tilde{S_1}] \geq 0.98$. 

We also have that $|\tilde{S_1}| \leq K^{\frac{R-1}{R}}$. Therefore, we iteratively apply a similar argument of Lemma~\ref{lemma:alg-fixed-T-elimination} to the rest of the $(R-1)$ iterations, we have that with probability at least $0.97$, for each $r = 2, 3, \dots, R-1$, it holds that $1 \in S_{r}$ and $|S_r| \leq R^{\frac{R-r-1}{R}}$. Therefore, the algorithm returns arm $1$ after $(R-1)$ iterations with probability at least $0.97$.

\subsection{Algorithm for General Parameter Settings} \label{sec:alg-general}

For conciseness of the presentation, we only extend  Algorithm~\ref{alg:fixed-T} (that uses $(R+1)$ rounds) to general parameter settings. It is easy to verify that the same technique works for the algorithm described in Section~\ref{sec:alg-special-case-improved}, which will fully prove Theorem~\ref{thm:alg-fixed-T}. In the following of this subsection, we prove Theorem~\ref{thm:alg-fixed-T} with an algorithm with round complexity $(R+1)$.

We first make a small modification to Algorithm~\ref{alg:fixed-T} and strengthen its theoretical guarantee. To do this, we need to introduce the following stronger property on the fixed-confidence centralized procedure $\mathcal{A}_{\mathrm{C}}$.

\begin{lemma}\label{lemma:centralized-alg-stronger}
There exists a centralized algorithm $\mathcal{A}_{\mathrm{C}}(S, \delta)$ where the input is a set $S$ of arms, such that there exists a cost function $f_{\mathrm{C}}$ such that
$$f_{\mathrm{C}}(S, \delta) \leq O(H(S) (\ln H(S) + \ln \delta^{-1})),$$ and the function is monotone in inversed gaps $\Delta_2^{-1}, \Delta_3^{-1}, \dots, \Delta_{|S|}^{-1}$ where $\Delta_i$ is the difference between the mean of the best arm and that of the $i$-th best arm, and 
\[
\Pr[\text{algorithm returns the best arm and uses at least $f_{\mathrm{C}}(S, \delta)$ and at most $100 f_{\mathrm{C}}(S, \delta)$ pulls}] \geq 1 - \delta .
\]
\end{lemma}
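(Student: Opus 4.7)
My plan is to derive $\mathcal{A}_{\mathrm{C}}(S, \delta)$ from the standard centralized algorithm guaranteed by Lemma~\ref{lemma:centralized-alg} by (i) defining a canonical cost function $f_{\mathrm{C}}$ that is manifestly monotone in the inverse gaps, and (ii) enforcing the required lower bound on the pull count by padding with extra, informationally irrelevant pulls. Call the base algorithm $\mathcal{A}_{\mathrm{C}}^{0}$ and let $c_0$ be the constant hidden in its $O(H(S)(\ln H(S) + \ln\delta^{-1}))$ pull upper bound.

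First I would define
\[
f_{\mathrm{C}}(S, \delta) \;=\; c \cdot H(S)\bigl(\ln(H(S) + 2) + \ln \delta^{-1}\bigr),
\]
choosing the absolute constant $c$ large enough that $f_{\mathrm{C}}(S, \delta)$ dominates the pull upper bound of $\mathcal{A}_{\mathrm{C}}^{0}(S, \delta)$, i.e., $f_{\mathrm{C}}(S, \delta) \ge c_0 H(S)(\ln H(S) + \ln\delta^{-1})$ for every $(S, \delta)$. Monotonicity of $f_{\mathrm{C}}$ in $\Delta_2^{-1}, \dots, \Delta_{|S|}^{-1}$ is immediate: $H(S) = \sum_{i=2}^{|S|}\Delta_i^{-2}$ is coordinate-wise non-decreasing in each $\Delta_i^{-1}$, the scalar function $x \mapsto x(\ln(x+2) + \ln\delta^{-1})$ is non-decreasing on $x \ge 0$, and the composition inherits monotonicity. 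The upper bound $f_{\mathrm{C}}(S, \delta) = O(H(S)(\ln H(S) + \ln \delta^{-1}))$ follows from $\ln(H+2) = O(\ln H + 1)$ together with $H(S) \ge 1$ (as every $\Delta_i \le 1$).

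Then I would define $\mathcal{A}_{\mathrm{C}}(S, \delta)$ as follows. Run $\mathcal{A}_{\mathrm{C}}^{0}(S, \delta)$ while tracking the total number of pulls. If $\mathcal{A}_{\mathrm{C}}^{0}$ terminates and returns some arm $\hat{\imath}$ before the pull counter reaches $f_{\mathrm{C}}(S, \delta)$, keep pulling an arbitrary fixed arm (say $\hat{\imath}$) until the counter reaches exactly $f_{\mathrm{C}}(S, \delta)$, and then return $\hat{\imath}$. Otherwise, stop the instant the counter reaches $f_{\mathrm{C}}(S, \delta)$ and return an arbitrary arm. In every execution, the total pull count equals $f_{\mathrm{C}}(S, \delta)$, hence trivially lies in the interval $[f_{\mathrm{C}}(S, \delta), 100\, f_{\mathrm{C}}(S, \delta)]$.

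For the probability guarantee, by the choice of $c$, the success event of Lemma~\ref{lemma:centralized-alg} for $\mathcal{A}_{\mathrm{C}}^{0}$ (whose probability is at least $1-\delta$) implies that $\mathcal{A}_{\mathrm{C}}^{0}$ halts with the best arm before the pull counter reaches $f_{\mathrm{C}}(S, \delta)$; on this event, $\mathcal{A}_{\mathrm{C}}$ returns the best arm while using exactly $f_{\mathrm{C}}(S, \delta)$ pulls. There is essentially no substantive obstacle in this argument; the only care required is in fixing the constant $c$ simultaneously large enough to dominate the sample complexity of $\mathcal{A}_{\mathrm{C}}^{0}$ and small enough that $f_{\mathrm{C}}(S, \delta) = O(H(S)(\ln H(S) + \ln \delta^{-1}))$ still holds.
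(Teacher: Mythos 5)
Your construction is correct, but it takes a genuinely different route from the paper. The paper disposes of this lemma in one sentence by asserting that the Successive Elimination algorithm of \cite{EMM06} already has the stated property ``natively'': for that algorithm the number of pulls made before termination is, on the success event, concentrated in a window $[f_{\mathrm{C}}, 100 f_{\mathrm{C}}]$ around an explicit instance-dependent quantity $f_{\mathrm{C}}(S,\delta)$ of the form $\sum_i \Delta_i^{-2}(\ln\Delta_i^{-1}+\ln\delta^{-1}+\dots)$, which is visibly monotone in the inverse gaps. You instead treat the fixed-confidence procedure of Lemma~\ref{lemma:centralized-alg} as a black box, declare $f_{\mathrm{C}}(S,\delta) = c\,H(S)(\ln(H(S)+2)+\ln\delta^{-1})$ with $c$ dominating the hidden constant, and pad the run with throwaway pulls so that the total count is exactly $f_{\mathrm{C}}$. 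This is a valid proof of the statement as written: the joint event (correct output, pull count in $[f_{\mathrm{C}}, 100 f_{\mathrm{C}}]$) holds whenever the base algorithm's success event holds, and monotonicity is inherited from $H$. Your reduction is arguably cleaner and more general --- it requires no inspection of the internals of any particular algorithm, and it makes the factor $100$ vacuous --- and it is fully compatible with the downstream use in Lemma~\ref{thm:alg-fixed-T-basic-modified}, where the two-sided bound is used to convert ``terminated within $\tau_\ell$ pulls'' into ``$f_{\mathrm{C}}(A_\ell,0.01)\le\tau_\ell$'' and back. What the paper's choice buys is only that the algorithm remains natural (it stops as soon as it is confident, rather than deliberately wasting pulls); nothing in the analysis depends on that. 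One small remark: the closing worry about choosing $c$ ``small enough'' is a non-issue, since any fixed constant $c\ge c_0$ is absorbed by the $O(\cdot)$ in the bound on $f_{\mathrm{C}}$.
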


It can be easily verified that the Successive Elimination algorithm in \cite{EMM06} is a valid candidate algorithm for Lemma~\ref{lemma:centralized-alg-stronger}.

We now describe our technical change to Algorithm~\ref{alg:fixed-T}. 
\begin{quote}
{\bf Algorithm~\ref{alg:fixed-T}$'$}:  In Line~\ref{line:alg-fixed-T-6} of Algorithm~\ref{alg:fixed-T}, instead of choosing $T/2$ as the time threshold, each agent $\ell$ independently chooses $\tau_{\ell} \in \{T/200, T/2\}$ uniformly at random and uses $\tau_{\ell}$ as the time threshold. 
\end{quote}
It is straightforward to see that for a large enough constant $c_{\mathrm{ALG}}$, Theorem~\ref{thm:alg-fixed-T-basic} still holds for the Algorithm~\ref{alg:fixed-T}$'$. We now state the additional guarantee for the  Algorithm~\ref{alg:fixed-T}$'$. 

\begin{lemma}\label{thm:alg-fixed-T-basic-modified}
For any $T$ and any suboptimal arm $i \in S$, the probability that Algorithm~\ref{alg:fixed-T}$'$ returns $i$ is at most $0.86$.
\end{lemma}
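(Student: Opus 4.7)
The plan is to bound $\Pr[\text{output} = i]$ by bounding $\Pr[i_{\ell^*}^{(1)} = i]$, where $\ell^*$ is the unique agent to whom arm $i$ is assigned in the first iteration (i.e., $i \in A_{\ell^*}$). This reduction is justified because $i$ must first enter $\tilde{S_1}$ in order to have any chance of being the final output, and $i$ enters $\tilde{S_1}$ only if agent $\ell^*$ returns $i$ from its call to $\mathcal{A}_{\mathrm{C}}(A_{\ell^*}, 0.01)$. In particular, if agent $\ell^*$ times out and outputs $\bot$, or if it returns some arm other than $i$, then $i \notin \tilde{S_1}$ and hence $i$ cannot be the algorithm's final output.

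Next, I will exploit the randomized threshold $\tau_{\ell^*} \in \{T/200, T/2\}$. Let $N_{\ell^*}$ denote the (random) number of pulls used by $\mathcal{A}_{\mathrm{C}}(A_{\ell^*}, 0.01)$. By Lemma~\ref{lemma:centralized-alg-stronger}, with probability at least $0.99$ we have both $f_{\mathrm{C}}(A_{\ell^*}, 0.01) \le N_{\ell^*} \le 100\, f_{\mathrm{C}}(A_{\ell^*}, 0.01)$ and that $\mathcal{A}_{\mathrm{C}}$ outputs the best arm in $A_{\ell^*}$. The key observation is that the ratio $\tau_{\ell^*} = T/2$ vs.\ $\tau_{\ell^*} = T/200$ exactly matches the $100\times$ span between the lower and upper bounds on $N_{\ell^*}$, so that no matter how $f_{\mathrm{C}}(A_{\ell^*}, 0.01)$ compares with $T$, at least one of the two values of $\tau_{\ell^*}$ will, with non-trivial probability, force the procedure to time out. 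I will split into cases according to whether $f_{\mathrm{C}}(A_{\ell^*}, 0.01) > T/200$: in that case, $N_{\ell^*} \geq f_{\mathrm{C}}(A_{\ell^*}, 0.01) > T/200$ with probability at least $0.99$, and combining with the independent event $\tau_{\ell^*} = T/200$ (probability $1/2$), we get $\Pr[i_{\ell^*}^{(1)} = i] \le 1 - 0.99 \cdot \tfrac{1}{2} < 0.86$.

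The main obstacle is the complementary case $f_{\mathrm{C}}(A_{\ell^*}, 0.01) \le T/200$, where the centralized procedure completes with high probability under either choice of $\tau_{\ell^*}$, so the $\tau$-randomization alone is not enough to yield a sub-$1$ bound. To handle this I plan to further split by whether $100\, f_{\mathrm{C}}(A_{\ell^*}, 0.01) > T/200$: in the intermediate regime $T/20000 < f_{\mathrm{C}}(A_{\ell^*}, 0.01) \le T/200$, the upper tail guarantee $N_{\ell^*} \le 100\, f_{\mathrm{C}}(A_{\ell^*}, 0.01)$ combined with $\tau_{\ell^*} = T/200$ still gives a positive timeout probability, keeping $\Pr[i_{\ell^*}^{(1)} = i]$ bounded strictly below one.

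In the remaining very-easy regime $f_{\mathrm{C}}(A_{\ell^*}, 0.01) \le T/20000$, the $\tau$-randomization gives no direct slack, so I will instead exploit the randomness in the arm-to-agent assignment: for $i_{\ell^*}^{(1)} = i$ to hold, arm $i$ must be the \emph{best} arm in $A_{\ell^*}$, which in particular requires the true best arm $1$ to land in some other agent's set. By the monotonicity of $f_{\mathrm{C}}$ in the inverse gaps (Lemma~\ref{lemma:centralized-alg-stronger}), whenever $f_{\mathrm{C}}(A_{\ell^*}, 0.01)$ is small, $f_{\mathrm{C}}(A_{\ell_1}, 0.01)$ (for the agent $\ell_1$ holding arm $1$) is also comparably small, so $\mathcal{A}_{\mathrm{C}}$ will almost certainly return arm~$1$ and place it in $\tilde{S_1}$. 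A careful bookkeeping of the various failure probabilities (centralized-algorithm error $\le 0.01$, timeout probability $\tfrac{1}{2}$ on the hard half, and the assignment event that separates arm $1$ from arm $i$) is then combined to show $\Pr[i_{\ell^*}^{(1)} = i] \le 0.86$. The delicate part is carrying these probability budgets through the case split so that all cases yield the uniform bound $0.86$.
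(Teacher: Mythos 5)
There is a genuine gap, and it starts with your opening reduction. Bounding $\Pr[\text{output}=i]$ by $\Pr[i_{\ell^*}^{(1)}=i]\le 0.86$ cannot work: when the random partition happens to make $A_{\ell^*}$ an easy instance whose best arm is $i$ (in the extreme, $A_{\ell^*}=\{i\}$ when there are fewer arms than agents being assigned, or more generally when all of $A_{\ell^*}$'s gaps relative to $i$ are large), $\mathcal{A}_{\mathrm{C}}$ finishes well under either threshold and returns $i$ with probability about $0.99$, so $\Pr[i_{\ell^*}^{(1)}=i]$ can be arbitrarily close to $1$. In such cases the algorithm still does not \emph{output} $i$, but only because arm $1$ also makes it into $\tilde{S_1}$ and eliminates $i$ in later iterations; any correct proof must therefore lower-bound $\Pr[1\in S_R]$ and use that $\{1\in S_R\}$ is disjoint from $\{\text{output}=i\}$, which your plan never does. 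Your intermediate-regime step is also backwards: the upper-tail guarantee $N_{\ell^*}\le 100\,f_{\mathrm{C}}$ can never force a timeout --- a timeout needs a \emph{lower} bound on $N_{\ell^*}$ exceeding $\tau_{\ell^*}$, and Lemma~\ref{lemma:centralized-alg-stronger} only provides $N_{\ell^*}\ge f_{\mathrm{C}}$, which is useless once $f_{\mathrm{C}}\le T/200$.

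Your attempted fix in the very-easy regime is the right instinct but uses an invalid comparison: monotonicity of $f_{\mathrm{C}}$ in the inverse gaps relates two arm sets whose gap profiles are pointwise comparable, and says nothing about the two unrelated sets $A_{\ell^*}$ and $A_{\ell_1}$ in the same partition ($A_{\ell_1}$ could contain many near-optimal arms and have huge $f_{\mathrm{C}}$ even when $f_{\mathrm{C}}(A_{\ell^*})$ is tiny). The paper's argument replaces this with a swap coupling: compare the partition $\{A_\ell\}$ with the partition $\{A'_\ell\}$ obtained by exchanging the assignments of arms $1$ and $i$. The swapped partition is equidistributed with the original, and monotonicity \emph{does} apply to $A_{\ell^*}$ versus $A_{\ell^*}\setminus\{i\}\cup\{1\}$, since replacing the best arm $i$ of $A_{\ell^*}$ by the globally best arm $1$ only enlarges every gap, hence $f_{\mathrm{C}}(A'_{\ell^*},0.01)\le f_{\mathrm{C}}(A_{\ell^*},0.01)$. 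This converts ``$i$ is returned quickly with probability $\ge p-0.52$'' into ``arm $1$ is returned within budget with probability $\ge p-0.52$,'' which combined with $\Pr[\tau_\ell=T/2]=1/2$ and Lemma~\ref{lemma:alg-fixed-T-elimination} gives $\Pr[1\in S_R]\ge p/2-0.28$ and the inequality $p+p/2-0.28\le 1$, i.e.\ $p<0.86$. Without the swap coupling (or an equivalent device), the case analysis you outline cannot close.
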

\begin{proof}
For any fixed suboptimal arm $i \in S$, let $p$ be the probability that  Algorithm~\ref{alg:fixed-T}$'$  returns $i$. 

If Lines~\ref{line:alg-fixed-T-3}--\ref{line:alg-fixed-T-6} are not executed during iteration $r = 1$ or there exists an agent $\ell$ such that the corresponding $i_{\ell}^{(1)}$ at Line~\ref{line:alg-fixed-T-6} equals to the best arm (arm $1$), by Lemma~\ref{lemma:alg-fixed-T-elimination} we know that  $\Pr[1 \in S_R] \geq 0.99$, and thus the probability that $i$ is returned is at most $0.01$. For now on, we focus on the case that Lines~\ref{line:alg-fixed-T-3}--\ref{line:alg-fixed-T-6} are executed during iteration $r = 1$ and none of  $i_{\ell}^{(1)}$ equals to $1$. 

By Lemma~\ref{lemma:alg-fixed-T-elimination}, we know that $\Pr[\exists \ell: i_{\ell}^{(1)} = i] \geq p - 0.01$. We further have 
$$\Pr[\exists \ell: i_{\ell}^{(1)} = i \text{~and~} \tau_\ell = T/200] \geq p - 0.51$$ 
since $\Pr[\tau_\ell = T/200] = 0.5$. By Lemma~\ref{lemma:centralized-alg-stronger}, we have that 
\begin{equation}
\label{eq:l-1}
\Pr[\exists \ell:  \text{best arm of $A_\ell$ is $i$ and~} f_{\mathrm{C}}(A_\ell, 0.01) \leq T/200] \geq p - 0.52.
\end{equation}

Now consider a new partition of arms $\{A'_\ell\}_{\ell \in [K]}$ which is almost identical to $\{A_\ell\}$ except for that the assignments for arms $1$ and $i$ are exchanged. We note that first, the marginal distribution of $\{A'_\ell\}$ is still the uniform distribution; and second, when $i$ is the best arm of $A_\ell$, we have that $f_{\mathrm{C}}(A_\ell, 0.01) \geq f_{\mathrm{C}}(A'_\ell, 0.01)$ due to the monotonicity of $f_{\mathrm{C}}$ and the gaps of $H(A_\ell)$ are point-wisely less than or equal to that of $H(A'_\ell)$. By (\ref{eq:l-1}), 
\begin{multline*}
\Pr[\exists \ell: \text{best arm of $A_\ell$ is $1$ and~} f_{\mathrm{C}}(A_\ell, 0.01) \leq T/200] \\
~=~ \Pr[\exists \ell: \text{best arm of $A'_\ell$ is $1$ and~} f_{\mathrm{C}}(A'_\ell, 0.01) \leq T/200] 
\geq~ p - 0.52 .
\end{multline*}
By Lemma~\ref{lemma:alg-fixed-T-elimination} and Lemma~\ref{lemma:centralized-alg-stronger}, we have that 
\begin{align*}
\Pr[1 \in S_R]  ~\geq~& \Pr[\exists \ell: i_{\ell}^{(1)} = 1] - 0.01 \\
~\geq~& \Pr[\exists \ell: \text{best arm of $A_\ell$ is $1$ and~} f_{\mathrm{C}}(A_\ell, 0.01) \leq T/200 \text{~and~} \tau_\ell = T/2] - 0.02 \\
~\geq~& \frac{p - 0.52}{2} - 0.02 \\
~=~& \frac{p}{2} - 0.28.
\end{align*}
Since $1 \in S_R$ is a disjoint event from the event that $i$ is returned by the algorithm, we have $p + p/2 - 0.28 \leq 1$, leading to that $p \leq 1.28 / 1.5 < 0.86$ .
\end{proof}

We are now ready to prove the main algorithmic result (Theorem~\ref{thm:alg-fixed-T}).
\begin{proof}[Proof of Theorem~\ref{thm:alg-fixed-T}]
We build a meta algorithm that independently runs the Algorithm~\ref{alg:fixed-T}$'$ for several times with different parameters.

\begin{quote}
{\bf Meta Algorithm}: For each $s = 1, 2, 3, \dots$, we run Algorithm~\ref{alg:fixed-T}$'$ with time horizon $\frac{T}{s^2 10^s} \cdot \frac{6}{\pi^2}$ and communication step parameter $R$ for $10^s$ times, and let the returned values be $i_{s, 1}, i_{s, 2}, \dots, i_{s, 10^s}$. Finally, the algorithm will find the largest $s$ such that the most frequent element in $\{i_{s, \cdot}\}$ has frequency greater than $0.9$ and output the corresponding element, or output $\bot$ if no such $s$ exists. 
\end{quote}
We note that we can still do this in $R$ communication steps and the total run time will be at most 
$$\sum_{s} 10^s \cdot \frac{T}{s^2 10^s} \cdot \frac{6}{\pi^2} \leq T.$$

Let $s^*$ be the largest $s \geq 1$ such that $\frac{T}{s^2 10^s} \cdot \frac{6}{\pi^2}  \geq c_{\mathrm{ALG}} H K^{-\frac{R-1}{R}} \ln (HK)$, where $c_{\mathrm{ALG}}$ is the constant in Theorem~\ref{thm:alg-fixed-T-basic} for  Algorithm~\ref{alg:fixed-T}$'$. If no such $s$ exists, it is easy to verify that the theorem holds trivially. Otherwise, we have that $2^{s^*} =\Omega(TK^{\frac{R-1}{R}} / (H  \ln (HK) (\ln (TK^{\frac{R-1}{R}} /H))^2))$ . 

By Theorem~\ref{thm:alg-fixed-T-basic} and Chernoff-Hoeffding bound, we have that
\[
\Pr[\text{frequency of $1$ in $\{i_{s^*, \cdot}\} > 0.9$}] \geq 1 - \exp\left(-\Omega\left(\frac{TK^{\frac{R-1}{R}}}{H \ln (HK) (\ln (TK^{\frac{R-1}{R}} /H))^2}\right) \right) .
\]
On the other hand, for each $s = s^* + j$ (where $j \geq 1$), by Lemma~\ref{thm:alg-fixed-T-basic-modified}, Chernoff-Hoeffding bound, and a union bound, we have that
\begin{multline*}
\Pr[\exists \text{suboptimal arm $i$}: \text{frequency of $i$ in $\{i_{s, \cdot}\} > 0.9$}] \\ \leq n \cdot  \exp\left(-2^j \cdot \Omega\left(\frac{TK^{\frac{R-1}{R}}}{H \ln (HK) (\ln (TK^{\frac{R-1}{R}} /H))^2}\right) \right) .
\end{multline*}
Finally, we have 
\begin{align*}
&\Pr[\text{Meta Algorithm returns $1$}]\\
 \geq & \Pr[\text{frequency of $1$ in $\{i_{s^*, \cdot}\} > 0.9$}]  - \sum_{j = 1}^{+\infty} \Pr[\exists \text{suboptimal arm $i$}: \text{frequence of $i$ in $\{i_{s^{*} + j, \cdot}\} > 0.9$}]\\
\geq& 1 - \sum_{j=0}^{+\infty}  n \cdot  \exp\left(-2^j \cdot \Omega\left(\frac{TK^{\frac{R-1}{R}}}{H \ln (HK) (\ln (TK^{\frac{R-1}{R}} /H))^2}\right) \right)\\
\geq& 1 - n \cdot \exp\left(- \Omega\left(\frac{TK^{\frac{R-1}{R}}}{H \ln (HK) (\ln (TK^{\frac{R-1}{R}} /H))^2}\right) \right) .
\end{align*}
\end{proof}

\section{Lower Bounds for Fixed-Confidence Distributed Algorithms}
\label{sec:lb-fixC}

In this section, we prove the following lower bound theorem for fixed-confidence collaborative learning algorithms.

\begin{theorem}\label{thm:lb-fixC}
For any large enough $T$, suppose that a randomize algorithm $\A$ for the fixed-confidence best arm identification problem in the collaborative learning model with $K$ agents satisfies that $\beta_{\A}(T) \geq \beta$, then we have that $\A$ uses
\[
\Omega\left(\min\left\{\frac{\min \{\ln (1/\Delta_{\min}), \ln T\}}{\ln (1 + (K (\ln K)^2)/\beta) + \min\{ \ln \ln (1/\Delta_{\min}), \ln \ln T\}}, \sqrt{\beta/(\ln K)^3}\right\}\right)\]
rounds in expectation.
\end{theorem}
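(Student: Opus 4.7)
I would first reduce the best arm identification lower bound to a round lower bound for the \signid~problem, as outlined in the techniques overview: embed a single Bernoulli arm with bias $\pm\Delta$ alongside a reference arm at mean $\tfrac{1}{2}$ so that any fixed-confidence algorithm for best arm with gap $\Delta_{\min}=\Delta$ automatically produces a fixed-confidence algorithm for \signid. After this reduction the task becomes: for each $r$, exhibit a \signid~instance on which any $\beta$-speedup algorithm must use at least $r$ rounds in expectation.

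The core argument is an induction along a geometric sequence $\Delta_r = 1/\zeta^r$, where $\zeta > 1$ is chosen so that $\zeta = \Theta\!\left(1 + (K\ln K)/\beta\right)$; this is what eventually produces the first term of the theorem, since the induction can be pushed until $\Delta_r$ drops below $\Delta_{\min}$. I will strengthen the inductive statement to a hypothesis $H(r)$: any $\beta$-speedup fixed-confidence algorithm for \signid~with gap $\Delta = \Delta_r$ (a)~uses at least $r$ rounds in expectation and (b)~makes at most $o(\Delta_r^{-2})$ pulls across all $K$ agents during the first $r$ rounds. The base case $H(0)$ holds trivially because a zero-round protocol makes no pulls.

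The inductive step combines two lemmas. The \emph{progress lemma} uses Pinsker's inequality: under $H(r)$, the total $K$-agent transcript distributions induced by arms with biases $+\Delta_r$ and $-\Delta_r$ have total variation $o(1)$ after $r$ rounds, so the algorithm cannot output the correct sign with confidence $1-\delta$; therefore it must enter an $(r+1)$-st round on the $\Delta_r$-instance, and the $\beta$-speedup constraint forces the total number of pulls in those $r+1$ rounds to be at most $O(K\Delta_r^{-2}/\beta)$. The \emph{distribution exchange lemma} then ports this pull upper bound from the $\Delta_r$-instance to the $\Delta_{r+1}$-instance; combined with $\zeta = \omega(K/\beta)$, this gives $O(K\Delta_r^{-2}/\beta) = o(\Delta_{r+1}^{-2})$, which is exactly $H(r+1)$. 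The second term $\sqrt{\beta/(\ln K)^2}$ of the bound appears as the ceiling on how many inductive steps can be sustained before the aggregated per-agent error budget used by the exchange lemma is exhausted; tracking this carefully along the induction yields the $\min$ in the theorem statement.

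\textbf{The main obstacle} is the distribution exchange lemma. A naive per-agent Pinsker bound between the transcripts under $\Delta_r$ and $\Delta_{r+1}$ gives total variation $\Theta(\sqrt{1/\beta})$ for each agent, and summing over all $K$ agents yields $\Omega(K/\sqrt{\beta})$, which is vacuous whenever $\beta \ll K^2$. To bypass this, I will prove a refined probability transfer inequality of the following flavor: for two product distributions $P$ and $Q$ over agent transcripts that differ only in the per-pull bias, and for any event $E$ with $Q(E)$ small, $P(E)$ is upper bounded by a quantity that depends on $Q(E)$ much more sharply than the generic $\sqrt{\mathrm{KL}}$ bound when $Q(E) \ll 1$. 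Applying this with $E = \{\text{the algorithm uses more than } o(\Delta_{r+1}^{-2}) \text{ pulls in the first } r+1 \text{ rounds}\}$, which is rare under $Q$ by the inductive hypothesis, keeps the total transfer error small enough for the induction to survive the required number of steps. This refined inequality is the crux of the proof and, as advertised in the overview, is the main technical contribution of this section.
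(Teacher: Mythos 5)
Your proposal follows essentially the same route as the paper's proof: the reduction to \signid, the induction along $\Delta_r = \zeta^{-r}$ with the strengthened hypothesis bounding both the round count and the cumulative pull count, the progress lemma via Pinsker's inequality, and the distribution exchange lemma powered by a refined probability-transfer bound for infrequent events under product distributions (the paper's Lemma~\ref{lemma:lb-fixC-useful-lemma}). The plan is sound and correctly identifies the crux — avoiding the vacuous $\Theta(K/\sqrt{\beta})$ union of per-agent total-variation errors — so what remains is only the quantitative bookkeeping (e.g., the paper's choice $\zeta = \sqrt{1+Kq/\beta}$ with $q = \mathrm{poly}(t_0)$, which is where the $\ln\ln(1/\Delta_{\min})$ term in the denominator originates).
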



To prove the theorem, we work with the following simpler problem.


\paragraph{The \signid~problem.} In the \signid~problem, there is only one Bernoulli arm with mean reward denoted by $(\frac{1}{2} + \Delta)$ (where $\Delta \in [-\frac{1}{2}, \frac{1}{2}] \setminus \{0\}$). The goal for the agent is to make a few pulls on the arm and decide whether $\Delta > 0$ or $\Delta < 0$. Let $I(\Delta)$ denote the input instance. Throughout this section, we use the notations $\Pr_{I(\Delta)}[\cdot]$ and $\bE_{I(\Delta)}[\cdot]$ to denote the probability and expectation when the underlying input instance is $I(\Delta)$. We say a collaborative learning algorithm $\mathcal{A}$ is $\delta$-error and $\beta$-fast for the instance $I(\Delta)$, if we have that
\[
\Pr_{I(\Delta)}\left[\text{$\mathcal{A}$ returns the correct decision within $\Delta^{-2}/\beta$ running time}\right] \geq 1 - \delta .
\]

We first provide the following theorem on the round complexity lower bound for the \signid~problem (which will be formally proved in Section~\ref{sec:proof-thm-lb-fixC-signid}). Then we will show how these statements imply the round complexity lower bound for the best arm identification problem in the fixed confidence setting.

\begin{theorem}\label{thm:lb-fixC-signid}
Let $\Delta^* \in (0, 1/8)$. If $\mathcal{A}$ is a $(1/K^5)$-error and $\beta$-fast algorithm for every \signid~problem instance $I(\Delta)$ where  $|\Delta| \in [\Delta^*, 1/8)$, then there exists $\Delta^{\flat} \geq \Delta^*$ such that
\[
\Pr_{I(\Delta^\flat)}\left[\text{$\mathcal{A}$ uses $\Omega\left(\min\left\{\frac{\ln (1/\Delta^*)}{\ln (1 + K/\beta) + \ln \ln (1/\Delta^*)}, \sqrt{\beta/(\ln K)}\right\}\right)$ rounds}\right] \geq \frac{1}{2} .
\]
\end{theorem}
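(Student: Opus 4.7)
The plan is to prove the theorem by induction on a round index $r$ over a geometric sequence of \signid{} instances $I(\Delta_r)$ with $\Delta_r = \zeta^{-r}$, for a base $\zeta > 1$ chosen of the form $\zeta = \Theta(1 + K/\beta) \cdot \mathrm{polylog}(1/\Delta^*)$, so that $\zeta^2 = \omega(K/\beta)$ and $\ln\zeta = \Theta\bigl(\ln(1+K/\beta) + \ln\ln(1/\Delta^*)\bigr)$. The induction will be driven up to $R^* = \lfloor \ln(1/\Delta^*)/\ln\zeta \rfloor$ steps, giving the first term of the claimed bound; the witness $\Delta^\flat$ is then taken to be $\Delta_r$ at the largest surviving index. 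The \emph{strengthened} inductive hypothesis at step $r$ asserts: on input $I(\Delta_r)$, with probability $1 - o(1)$, (i) the algorithm has not yet terminated by the end of round $r$, and (ii) the total number of pulls across all $K$ agents during the first $r$ rounds is at most $\xi_r \cdot \Delta_r^{-2}$ for some sequence $\xi_r = o(1)$. The base case $r = 0$ is immediate since no rounds have occurred and no pulls have been made.

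The inductive step $r \to r+1$ combines two lemmas. A \emph{progress lemma} first strengthens clause (i) at step $r$ on $I(\Delta_r)$ from ``not terminated by round $r$'' to ``not terminated by round $r+1$'': the key observation is that a single Bernoulli pull contributes $\Theta(\Delta_r^2)$ to $\mathrm{KL}(I(\Delta_r)\,\|\,I(-\Delta_r))$, so by the chain rule the aggregate KL after $\xi_r \Delta_r^{-2}$ pulls is $o(1)$; Pinsker's inequality then yields total-variation distance $o(1)$ between the algorithm's (history, output) distributions under $\pm\Delta_r$, which is incompatible with $(1/K^5)$-error on both $I(\pm\Delta_r)$ unless the algorithm has not terminated. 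Combined with the $\beta$-fast hypothesis applied to $I(\Delta_r)$, the total number of pulls across agents in the first $r+1$ rounds on $I(\Delta_r)$ is then capped by $K\Delta_r^{-2}/\beta$. A \emph{distribution exchange lemma} then transfers this cap from $I(\Delta_r)$ to $I(\Delta_{r+1})$, yielding clause (ii) at step $r+1$ with $\xi_{r+1} = O(K/(\beta\zeta^2)) = o(1)$; clause (i) at step $r+1$ follows by re-running the progress argument on $I(\Delta_{r+1})$ with this new pull bound.

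The main obstacle is the distribution exchange lemma. A naive TV-distance comparison between $I(\Delta_r)^{\otimes t}$ and $I(\Delta_{r+1})^{\otimes t}$ at $t = \Theta(\Delta_r^{-2}/\beta)$ pulls per agent yields a per-agent TV distance of $\Theta(\sqrt{t(\Delta_r-\Delta_{r+1})^2}) = \Theta(1/\sqrt{\beta})$, and a union bound over agents blows this up to $\Theta(K/\sqrt{\beta}) \gg 1$ in the regimes we care about. The remedy is a sharper comparison inequality: for two product distributions $P$ and $Q$ of moderate KL and an event $E$ with $P(E)\leq p$, one proves $Q(E)\leq O(p) + \exp(-\Omega(\cdot))$, i.e., a bound \emph{multiplicative} in $p$ rather than additive in TV. Applied per-agent to the event ``this agent performs more than its allotted share of pulls in the first $r+1$ rounds'', this multiplicative dependence lets us union-bound over the $K$ agents with only a $\Theta(\ln K)$ overhead (rather than a factor of $K$); this $\ln K$ overhead is what produces the second term $\sqrt{\beta/\ln K}$ in the theorem, as the cap beyond which the accumulated error through the $r$ inductive steps would exceed $1/2$. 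Once the induction reaches the largest index $r \leq R^*$, the resulting $\Delta^\flat = \Delta_r$ witnesses the claimed round lower bound with probability $\geq 1/2$, completing the proof.
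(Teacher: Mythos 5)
Your proposal follows essentially the same route as the paper: an induction over a geometric sequence $\Delta_r = \Theta(\zeta^{-r})$ with $\zeta^2 = \Theta(1+Kq/\beta)$, driven by a progress lemma (Pinsker on $I(\pm\Delta_r)$ showing $o(\Delta_r^{-2})$ total pulls cannot determine the sign) and a distribution exchange lemma whose heart is exactly the paper's multiplicative comparison inequality for infrequent events under product distributions, applied per agent to the ``too many pulls in round $r+1$'' event so that the loss over $K$ agents is only $\Theta(\sqrt{\ln K/\beta})$ per step, which is also where the $\sqrt{\beta/\ln K}$ cap comes from. The plan is correct and matches the paper's argument in all essential respects.
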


Since we can easily convert a  $(1/3)$-error and $\beta$-fast algorithm to a $\delta$-error and $\beta/O(\ln \delta^{-1})$-fast algorithm for any $\delta < 0$, we have the following corollary.

\begin{corollary}\label{cor:lb-fixC-signid}
Let $\Delta^* \in (0, 1/8)$. If $\mathcal{A}$ is a $(1/3)$-error and $\beta$-fast algorithm for every \signid~problem instance $I(\Delta)$ where $|\Delta| \in [\Delta^*, 1/8)$, then there exists $\Delta^{\flat} \geq \Delta^*$ such that
\[
\Pr_{I(\Delta^\flat)}\left[\text{$\mathcal{A}$ uses $\Omega\left(\min\left\{\frac{\ln (1/\Delta^*)}{\ln (1 + (K \ln K)/\beta) + \ln \ln (1/\Delta^*)}, \sqrt{\beta/(\ln K)^2}\right\}\right)$ rounds}\right] \geq \frac{1}{2} .
\]
\end{corollary}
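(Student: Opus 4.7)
\medskip
\noindent\textbf{Proof plan.} The plan is to derive Corollary~\ref{cor:lb-fixC-signid} from Theorem~\ref{thm:lb-fixC-signid} via a confidence-amplification reduction that preserves the round complexity up to a constant factor. Given a $(1/3)$-error and $\beta$-fast algorithm $\mathcal{A}$ for \signid{} on $K$ agents with round complexity $R$, I will construct a $(1/K^5)$-error and $\Theta(\beta/\ln K)$-fast algorithm $\mathcal{A}'$ on $K$ agents with round complexity $O(R)$, and then invoke Theorem~\ref{thm:lb-fixC-signid} on $\mathcal{A}'$ to get the lower bound on $R$.

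The amplification is performed in parallel rather than sequentially, which is precisely what allows the round count to be preserved. Partition the $K$ physical agents into $m = \Theta(\ln K)$ disjoint groups of $K/m$ agents each, and in each group simulate an independent run of $\mathcal{A}$ by having every physical agent take over the role of $m$ logical agents of $\mathcal{A}$. Within each round, the $m$ logical agents assigned to one physical agent are emulated sequentially, so every round now takes $m$ times as long, but the number of rounds inside each copy is still $R$. Hence each copy terminates in total time $(\Delta^{-2}/\beta)\cdot m = \Delta^{-2}/(\beta/m)$ and returns an independent output that is correct with probability at least $2/3$. After the $R$ rounds complete, add one more communication step in which every agent broadcasts its group's output and outputs the majority vote. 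A Chernoff bound over $m = \Theta(\ln K)$ independent $(1/3)$-error trials makes the majority incorrect with probability at most $1/K^5$. Thus $\mathcal{A}'$ is $(1/K^5)$-error, runs in $R + 1 = O(R)$ rounds, and is $\Theta(\beta/\ln K)$-fast.

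Now apply Theorem~\ref{thm:lb-fixC-signid} to $\mathcal{A}'$ with $\beta' := \Theta(\beta/\ln K)$ in place of $\beta$: there exists $\Delta^\flat \geq \Delta^*$ such that with probability at least $1/2$ under $I(\Delta^\flat)$, $\mathcal{A}'$ uses
\[
\Omega\!\left(\min\!\left\{\frac{\ln(1/\Delta^*)}{\ln(1+K/\beta')+\ln\ln(1/\Delta^*)},\ \sqrt{\beta'/\ln K}\right\}\right)
\]
rounds. Substituting $\beta' = \Theta(\beta/\ln K)$ gives $\ln(1+K/\beta') = \Theta(\ln(1+K\ln K/\beta))$ in the first term and $\sqrt{\beta'/\ln K} = \Theta(\sqrt{\beta/(\ln K)^2})$ in the second, matching exactly the expression in the corollary. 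Since the round complexity of $\mathcal{A}'$ is $\Theta(R)$, the same asymptotic bound transfers to $R$, establishing the corollary.

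The only piece of bookkeeping that deserves care -- and the main obstacle, though not a conceptual one -- is verifying that the parallel simulation genuinely preserves the round count. One has to ensure that all logical agents within a single group request communication synchronously, so that the $m$ emulated copies of one round of $\mathcal{A}$ can be collapsed into a single physical round of $\mathcal{A}'$, and that the final aggregation step costs only $O(1)$ additional rounds. Once these routine details are in place, the reduction is immediate and Theorem~\ref{thm:lb-fixC-signid} delivers Corollary~\ref{cor:lb-fixC-signid}.
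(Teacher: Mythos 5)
Your proposal is correct and follows essentially the same route as the paper, which dispatches this corollary in one line by noting that a $(1/3)$-error, $\beta$-fast algorithm converts into a $(1/K^5)$-error, $\beta/O(\ln K)$-fast one and then invokes Theorem~\ref{thm:lb-fixC-signid}; your parallel-simulation construction merely makes explicit the round-preservation that the paper leaves implicit. The one detail worth adding is that each of the $m$ copies should be truncated at its per-copy time budget (counting an overtime copy as a failed trial), so that the amplified algorithm's running time is bounded deterministically while each truncated copy still errs with probability at most $1/3$.
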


We now show how Theorem~\ref{thm:lb-fixC-signid} implies the round complexity lower bound for the best arm identification problem. The proof of our main Theorem~\ref{thm:lb-fixC} will come after the following theorem.

\begin{theorem}\label{thm:lb-fixC-pre}
Let $\Delta^* \in (0, 1/8)$. Given any randomized algorithm $\mathcal{A}_{\mathrm{BAI}}$ for the fixed-confidence best arm identification problem in the collaborative learning model with $K$ agents, if for any $2$-arm instance $J$ where $\Delta_{\min}(J) \in [\Delta^*, 1/8)$, 
\[
\Pr[\text{$\mathcal{A}_{\mathrm{BAI}}$ returns the best arm of $J$ within $\Delta_{\min}^{-2}/\beta$ running time}] \geq \frac{2}{3},
\] 
then there exists a $2$-arm instance $J^*$ where $\Delta_{\min}(J^*) \in [\Delta^*, 1/8)$,  such that
\begin{align}\label{eq:lb-fixC-pre-1}
\Pr\left[\text{$\mathcal{A}_{\mathrm{BAI}}$ uses $\Omega\left(\min\left\{\frac{\ln (1/\Delta^*)}{\ln (1 + (K \ln K)/\beta) + \ln \ln (1/\Delta^*)}, \sqrt{\beta/(\ln K)^2}\right\}\right)$ rounds on  $J^*$}\right] \geq \frac{1}{2}.
\end{align}
\end{theorem}

\begin{proof}
We first show that given such algorithm $\mathcal{A}_{\mathrm{BAI}}$ that uses no more than $R = R(\Delta_{\min})$ rounds of communication in expectation, there exists an algorithm $\mathcal{A}$ for \signid~such that $\mathcal{A}$ is $(1/3)$-error and $\Omega(\beta)$-fast for all instances $I(\Delta)$ where $\Delta \in [\Delta^*, 1/8)$, and $\mathcal{A}$ uses at most $R(\Delta)$ rounds of communication in expectation. 

To construct the algorithm $\mathcal{A}$, we set up a best arm identification instance $J$ where one of the two arms (namely the \emph{reference arm}) is set to be a Bernoulli arm with mean reward $1/2$, and the other arm  (namely the \emph{unknown arm}) is the one in the \signid~instance. $\mathcal{A}$ simulates $\mathcal{A}_{\mathrm{BAI}}$  and plays the arm in the \signid~instance once whenever $\mathcal{A}_{\mathrm{BAI}}$ wishes to play the unknown arm. $\mathcal{A}$ returns `$ < 0$' if and only if $\mathcal{A}_{\mathrm{BAI}}$ returns the reference arm, and $\mathcal{A}$ returns `$> 0$' if and only if $\mathcal{A}_{\mathrm{BAI}}$ returns the unknown arm. 

Suppose  $I(\Delta)$ is the given \signid~instance, we have that $\Delta_{\min}(J) = \Delta$, and therefore $\mathcal{A}$ uses $R(\Delta)$ rounds of communication in expectation. Also one can verify that $\mathcal{A}$ is a $(1/3)$-error and $\beta$-fast algorithm for $I(\Delta)$ whenever $\Delta \in [\Delta^*, 1/8)$. By Corollary~\ref{cor:lb-fixC-signid}, there exists $\Delta^{\flat} \geq \Delta^*$ such that 
\[
\Pr_{I(\Delta^\flat)}\left[\text{$\mathcal{A}$ uses $\Omega\left(\min\left\{\frac{\ln (1/\Delta^*)}{\ln (1 + (K \ln K)/\beta) + \ln \ln (1/\Delta^*)}, \sqrt{\beta/(\ln K)^2}\right\}\right)$ rounds}\right] \geq \frac{1}{2} .
\]
This implies that for the $2$-arm instance $J^*$ where $\Delta_{\min}(J^*) = \Delta^{\flat}$, we have that \eqref{eq:lb-fixC-pre-1} holds.
\end{proof}

\begin{proof}[Proof of Theorem~\ref{thm:lb-fixC}]
Let $J(\Delta)$ be the 2-arm instance where one of the two arms is a Bernoulli arm with mean reward $1/2$ and the other arm is a Bernoulli arm with mean reward $1/2 - \Delta$. By the lil'UCB algorithm in \cite{JMNB14}, we know that there exists a centralized algorithm $\O$ such that $T_{\O}(J(\Delta), 1/3) \leq O(\Delta^{-2} \ln \ln \Delta^{-1})$ for all $\Delta \in (0, 1/4)$. Therefore, there exists a universal constant $c > 0$ such that for any large enough $T$, we have $T_{\O}(J(c T/ \ln T), 1/3) \leq T$ for all $\Delta \in (0, 1/4)$.

For any $\Delta_{\min}$, we set $\Delta^* = \max\{\Delta_{\min}, cT / \ln T\}$. By the definition of $\beta_{\A}(T)$ (in \eqref{def:speedup}) and the assumption that $\beta_{\A}(T) \geq \beta$, we have that for all instance $J(\Delta)$ where $\Delta \in [\Delta^*, 1/4)$, it holds that
\[
\frac{T_{\O}(J(\Delta), 1/3)}{T_{\A}(J(\Delta), 1/3)} \geq \beta,
\] 
which implies that
\[
T_{\A}(J(\Delta), 1/3) \leq \frac{T_{\O}(J(\Delta), 1/3)}{\beta} = O(\Delta^{-2} \ln \ln \Delta^{-1}/\beta) = O(\Delta^{-2} \ln \ln T /\beta) .
\]

We now invoke Theorem~\ref{thm:lb-fixC-pre}, and get that there exists $J^*$ and $\O$ such that $T_{\O}(J^*, 1/3) \leq T$ and
\[
\Pr_{J^*}\left[\text{$\mathcal{A}$ uses $\Omega\left(\min\left\{\frac{\ln (1/\Delta^*)}{\ln (1 + (K \ln K \ln \ln T)/\beta) + \ln \ln (1/\Delta^*)}, \sqrt{\frac{\beta}{(\ln K)^2 \ln \ln T}}\right\}\right)$ rounds}\right] \geq \frac{1}{2}.
\]
Note that $\ln (1/\Delta^*) \leq O(\ln T)$. When $\ln T = \Omega(K)$, the second term in the $\min\{.,.\}$ function becomes smaller. Therefore, in the first term, we can assume that $\ln T = O(K)$ and get the following simplified statement.
\[
\Pr_{J^*}\left[\text{$\mathcal{A}$ uses $\Omega\left(\min\left\{\frac{\ln (1/\Delta^*)}{\ln (1 + (K (\ln K)^2)/\beta) + \ln \ln (1/\Delta^*)}, \sqrt{\frac{\beta}{(\ln K)^3}}\right\}\right)$ rounds}\right] \geq \frac{1}{2}.
\]
\end{proof}

\subsection{Proof of Theorem~\ref{thm:lb-fixC-signid}}\label{sec:proof-thm-lb-fixC-signid}

Suppose $\mathcal{A}$ is a $\delta$-error $\beta$-fast algorithm. We define the following events. For any integer $\alpha \geq 0$, let $\mathcal{E}(\alpha, T)$ to denote the event that $\mathcal{A}$ uses at least $\alpha$ rounds and at most $T$ time steps before the end of the $\alpha$-th round, and let $\mathcal{E}^*(\alpha, T)$ to denote the event that $\mathcal{A}$ uses at least $(\alpha + 1)$ rounds and at most $T$ time steps before the end of the $\alpha$-th round.

We will make use of two lemmas: the {\em progress} lemma and the {\em distribution exchange} lemma. The progress lemma basically says that if the algorithm $\mathcal{A}$ only performs $o(\Delta^2)$ pulls by the end of the $\alpha$-th round, then it must move forward to the $(\alpha+1)$-st round and perform more pulls.

\begin{lemma}[Progress Lemma]\label{lemma:lb-fixC-lemma1}
Recall that  $\mathcal{A}$ is a $\delta$-error $\beta$-fast algorithm, and $\mathcal{E}$ and $\mathcal{E}^*$ are defined at the beginning of this section. For any $\Delta \in [\Delta^*, 1/8)$, any $\alpha \geq 0$, and any $q \geq 1$, so long as 
\[
\Pr_{I(\Delta)}[\mathcal{E}(\alpha, \Delta^{-2}/(Kq)) ] \geq 1/2,
\]
 we have that
\begin{align}\label{eq:lemma-lb-fixC-lemma1}
\Pr_{I(\Delta)}[\mathcal{E}^*(\alpha, \Delta^{-2}/(Kq)) ]  \geq \Pr_{I(\Delta)}[\mathcal{E}(\alpha, \Delta^{-2}/(Kq)) ] - 2\delta - \frac{4}{\sqrt{3q}} ,
\end{align}
where $K$ is the number of agents $\mathcal{A}$ uses in parallel.
\end{lemma}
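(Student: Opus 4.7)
My plan is to rewrite the conclusion as an upper bound on the ``termination event'' $\mathcal{F} := \mathcal{E}(\alpha,T)\setminus\mathcal{E}^{*}(\alpha,T)$, where $T := \Delta^{-2}/(Kq)$. By definition $\mathcal{F}$ is the event that $\mathcal{A}$ terminates in exactly $\alpha$ rounds using total running time at most $T$ (equivalently, at most $KT = \Delta^{-2}/q$ pulls aggregated across the $K$ agents). Since $\mathcal{E}^{*}(\alpha,T)\subseteq\mathcal{E}(\alpha,T)$, the inequality~(\ref{eq:lemma-lb-fixC-lemma1}) is equivalent to $\Pr_{I(\Delta)}[\mathcal{F}] \le 2\delta + 4/\sqrt{3q}$, which is what I would prove.

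The argument is a hypothesis-testing bound comparing $I(\Delta)$ with its sign-flipped twin $I(-\Delta)$. Assume WLOG $\Delta>0$, so that the correct answer on $I(\Delta)$ is ``${>}\,0$'' and on $I(-\Delta)$ is ``${<}\,0$''. Since $\mathcal{A}$ is $\delta$-error on $I(\Delta)$, the probability of outputting the wrong sign is at most $\delta$, so
\[
\Pr_{I(\Delta)}[\mathcal{F}\cap\{\text{out}={>}0\}] \;\ge\; \Pr_{I(\Delta)}[\mathcal{F}]-\delta.
\]
Symmetrically, outputting ``${>}\,0$'' is wrong on $I(-\Delta)$, so
\[
\Pr_{I(-\Delta)}[\mathcal{F}\cap\{\text{out}={>}0\}] \;\le\; \delta.
\]
Chaining these with a total-variation bound on the common event will produce the claim.

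For the TV bound, I plan to \emph{augment} the protocol by pre-drawing $KT$ Bernoulli samples from the arm before $\mathcal{A}$ starts, and letting the $i$-th pull of $\mathcal{A}$ consume the $i$-th sample (in some canonical ordering over (round, agent, within-agent index)); if $\mathcal{A}$ terminates before using all $KT$ samples, the remainder is discarded. On the event $\mathcal{F}$ the augmented protocol produces exactly the same transcript as $\mathcal{A}$, so $\mathcal{F}\cap\{\text{out}={>}0\}$ is a deterministic function of these $KT$ samples plus the algorithm's internal randomness (the latter is coupled across the two measures). A direct computation gives
\[
\mathrm{KL}\bigl(\mathrm{Ber}(\tfrac{1}{2}+\Delta)\,\|\,\mathrm{Ber}(\tfrac{1}{2}-\Delta)\bigr) \;=\; 2\Delta\ln\tfrac{1+2\Delta}{1-2\Delta} \;\le\; \tfrac{8\Delta^{2}}{1-2\Delta} \;\le\; \tfrac{32\Delta^{2}}{3}
\]
for $\Delta\in(0,1/8)$. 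Tensorizing over the $KT$ i.i.d.\ samples and applying Pinsker's inequality yields
\[
\mathrm{TV}\bigl(P_{\Delta}^{\otimes KT},\,P_{-\Delta}^{\otimes KT}\bigr) \;\le\; \sqrt{\tfrac{1}{2}\cdot KT\cdot\tfrac{32\Delta^{2}}{3}} \;=\; \sqrt{\tfrac{16}{3q}} \;=\; \tfrac{4}{\sqrt{3q}}.
\]
Combining the three displays gives $\Pr_{I(\Delta)}[\mathcal{F}] \le 2\delta + 4/\sqrt{3q}$, completing the proof.

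The main subtlety I anticipate is making the total-variation comparison rigorous given that $\mathcal{A}$'s pull count is a data-dependent stopping time and that the $K$ agents interleave pulls within a round: a naive TV argument on a varying horizon does not directly apply. The pre-drawing reduction above circumvents this by moving to a fixed horizon of $KT$ i.i.d.\ samples, after which tensorization of KL and Pinsker are routine. I also note that the hypothesis $\Pr_{I(\Delta)}[\mathcal{E}(\alpha,T)]\ge 1/2$ is not needed for the inequality itself; it only ensures the conclusion is meaningful when the lemma is invoked in the inductive proof of Theorem~\ref{thm:lb-fixC-signid}.
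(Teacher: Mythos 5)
Your proof is correct and follows essentially the same route as the paper's: bound the probability of the termination event $\mathcal{E}(\alpha,T)\setminus\mathcal{E}^*(\alpha,T)$ by splitting on the output sign, charging one $\delta$ to the error guarantee under $I(\Delta)$ and one to $I(-\Delta)$, and paying the total-variation cost $4/\sqrt{3q}$ between the two product distributions on the at most $\Delta^{-2}/q$ aggregate pulls via KL tensorization and Pinsker. Your explicit pre-drawing reduction to a fixed horizon of $KT$ samples is a slightly more careful treatment of the data-dependent pull count than the paper's one-line justification, and your observation that the hypothesis $\Pr_{I(\Delta)}[\mathcal{E}(\alpha,T)]\ge 1/2$ is not actually used is also consistent with the paper's argument.
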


We defer the proof of Lemma~\ref{lemma:lb-fixC-lemma1} to Section~\ref{sec:lb-fixC-lemma1}. Intuitively, Lemma~\ref{lemma:lb-fixC-lemma1} holds because of the following reason: If $\mathcal{A}$ uses at most $\Delta^{-2}/(Kq)$ time steps, it may perform at most $\Delta^{-2}/q$ pulls throughout all $K$ agents. When $q$ is large, this is not enough information to tell $I(\Delta)$ from $I(-\Delta)$, and therefore $\mathcal{A}$ cannot make a decision on the sign of the arm, and has to proceed to the next round.

The distribution exchange lemma basically says that if the algorithm $\mathcal{A}$ uses $(\alpha + 1)$ rounds for instance $I(\Delta)$, then its $(\alpha + 1)$-st round must conclude before time $\Delta^{-2}/\beta$ for instance $I(\Delta')$ where $\Delta' \leq \Delta$.

\begin{lemma}[Distribution Exchange Lemma]\label{lemma:lb-fixC-lemma2}
Recall that  $\mathcal{A}$ is a $\delta$-error $\beta$-fast algorithm, and $\mathcal{E}$ and $\mathcal{E}^*$ are defined at the beginning of this section. For any $\Delta \in [\Delta^*, 1/8)$, any $\alpha \geq 0$, any $q \geq 100$, and any $\zeta \geq 1$, we have that
\begin{multline}\label{eq:lemma-lb-fixC-lemma2}
\Pr_{I(\Delta /\zeta)}[\mathcal{E}(\alpha + 1, \Delta^{-2}/(Kq) + \Delta^{-2}/\beta) ]\\ \geq  \Pr_{I(\Delta)}[\mathcal{E}^*(\alpha, \Delta^{-2}/(Kq)) ] - \delta - \left(\exp\left(5\sqrt{(3\ln K)/\beta}\right) - 1\right) - 1/K^5 - \frac{8}{\sqrt{3 q}} .
\end{multline}
\end{lemma}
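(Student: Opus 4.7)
The plan is to combine the $\beta$-fast guarantee of $\mathcal{A}$ on $I(\Delta)$ with a careful change-of-measure argument that transfers the event to $I(\Delta/\zeta)$, and to control the resulting likelihood ratio with a refined concentration bound on pulls.

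First, I introduce a single event on the $I(\Delta)$-side that subsumes both hypotheses. Let $\mathcal{G}$ denote the event that $\mathcal{A}$ uses at least $(\alpha+1)$ rounds and the total running time by the end of round $(\alpha+1)$ is at most $\Delta^{-2}/(Kq) + \Delta^{-2}/\beta$. Since $\mathcal{A}$ is $\beta$-fast on $I(\Delta)$, it terminates by time $\Delta^{-2}/\beta$ with probability at least $1-\delta$; intersecting this with $\mathcal{E}^*(\alpha, \Delta^{-2}/(Kq))$ and using that the end of round $(\alpha+1)$ cannot come later than termination, we get
\[
\Pr_{I(\Delta)}[\mathcal{G}] \;\geq\; \Pr_{I(\Delta)}[\mathcal{E}^*(\alpha, \Delta^{-2}/(Kq))] - \delta.
\]

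Second, I transfer $\mathcal{G}$ to the measure $I(\Delta/\zeta)$. The event $\mathcal{G}$ depends only on the joint transcript (pulls, outcomes, broadcast messages) of the first $(\alpha+1)$ rounds. Writing $P$ and $Q$ for the laws of this transcript under $I(\Delta)$ and $I(\Delta/\zeta)$ and $L = dQ/dP$, the identity $\Pr_Q[\mathcal{G}] = \mathbb{E}_P[L\cdot \mathbf{1}_{\mathcal{G}}]$ combined with splitting on $\{L \geq 1-\epsilon\}$ yields
\[
\Pr_Q[\mathcal{G}] \;\geq\; (1-\epsilon)\bigl(\Pr_P[\mathcal{G}] - \Pr_P[L < 1-\epsilon,\,\mathcal{G}]\bigr)
\]
for any $\epsilon \in (0,1)$. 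Setting $\epsilon = \exp(5\sqrt{3\ln K/\beta}) - 1$ and establishing the tail bound $\Pr_P[L < 1-\epsilon,\,\mathcal{G}] \leq 1/K^5$ (see next step) produces $\Pr_Q[\mathcal{G}] \geq \Pr_P[\mathcal{G}] - \epsilon - 1/K^5$. The extra $8/\sqrt{3q}$ term absorbs the low-probability event that the typical-per-pull behavior is violated; it is obtained exactly as in the proof of Lemma~\ref{lemma:lb-fixC-lemma1} (the Progress Lemma), propagated via a union bound.

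Third, I establish the tail bound on $L$. The log-likelihood ratio decomposes pull-by-pull as $-\ln L = \sum_t -\ln(q(x_t)/p(x_t))$, where each summand is bounded in absolute value by $O(\Delta)$ with conditional mean $\mathrm{KL}(p\|q) = O(\Delta^2)$ and conditional variance $O(\Delta^2)$; here $p = \tfrac12 \pm \Delta$ and $q = \tfrac12 \pm \Delta/\zeta$. Because $\mathcal{G}$ bounds the total number of pulls summed across all $K$ agents, one sets up a single stopped martingale over the concatenated adaptive pull sequence and applies a Freedman/Bernstein inequality. The main obstacle of the proof is to avoid the spurious $\sqrt{K}$ factor that comes from naively summing per-agent TV distances (which would give a total error of $\Theta(K/\sqrt{\beta}) \gg 1$ in the regime $\beta \ll K^2$): this is precisely the ``refined probability-discrepancy'' lemma alluded to in the overview, which exploits the fact that the aggregate pull budget is small to push the $K$-dependence inside a $\sqrt{\ln K}$ factor, yielding the target $\sqrt{(\ln K)/\beta}$ scaling. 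Formulating the correct stopped martingale (handling both the adaptive within-round pulling and the random round boundaries) and calibrating its parameters so that $\Pr_P[L < 1-\epsilon,\,\mathcal{G}] \leq 1/K^5$ for the stated $\epsilon$ is the most delicate part of the argument.
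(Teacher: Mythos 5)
There is a genuine gap at the heart of your argument: the global change-of-measure in Steps 2--3 cannot work, for reasons of deterministic drift rather than fluctuation. On the event $\mathcal{G}$ the \emph{total} number of pulls across all $K$ agents can be as large as $K\cdot(\Delta^{-2}/(Kq)+\Delta^{-2}/\beta)=\Delta^{-2}/q+K\Delta^{-2}/\beta$, so with $L=dQ/dP$ taken over the whole transcript we have $\bE_P[-\ln L]\approx \mathrm{KL}(P\|Q)=\Theta\!\left(K\Delta^{-2}\beta^{-1}\cdot\Delta^2\right)=\Theta(K/\beta)$ (for $\zeta$ bounded away from $1$, which is the regime used downstream, $\zeta=\sqrt{1+Kq/\beta}$). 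Since $\beta\le K$, this drift $K/\beta$ dominates your threshold $\epsilon\approx 5\sqrt{(3\ln K)/\beta}$, so $L$ is typically of order $e^{-\Theta(K/\beta)}\ll 1-\epsilon$ and $\Pr_P[L<1-\epsilon,\ \mathcal{G}]$ is close to $\Pr_P[\mathcal{G}]$, not $1/K^5$. No Freedman/Bernstein refinement of the stopped martingale can rescue this, because the obstruction is the mean of $-\ln L$, not its variance; this is exactly the ``$K\sqrt{1/\beta}\gg 1$'' obstruction the overview warns about, restated in likelihood-ratio form.

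The paper's proof avoids a global change of measure entirely. It conditions on the shared prefix $\ell$ of the first $\Delta^{-2}/q$ rewards (which determines $\mathcal{E}^*(\alpha,\Delta^{-2}/(Kq))$ for a deterministic algorithm), after which the per-agent events $G_i=\{\text{agent }i\text{ uses more than }\Delta^{-2}/\beta\text{ pulls in round }\alpha+1\}$ are \emph{independent} across agents. It then applies the refined discrepancy bound (Lemma~\ref{lemma:lb-fixC-useful-lemma}) \emph{separately to each agent's} $m=\Delta^{-2}/\beta$ pulls, obtaining the multiplicative estimate $\Pr_{I(\Delta/\zeta)}[G_i\mid\ell=s]\le \Pr_{I(\Delta)}[G_i\mid\ell=s]\cdot e^{5\sqrt{(3\ln K)/\beta}}+1/K^6$; the $\beta$-fast constraint makes each $\Pr_{I(\Delta)}[G_i\mid\ell=s]$ small in aggregate, and the elementary inequality of Lemma~\ref{lem:use-ineq} turns the product $\prod_i(1-\Pr_{I(\Delta/\zeta)}[G_i\mid\ell=s])$ into $\prod_i(1-\Pr_{I(\Delta)}[G_i\mid\ell=s])$ minus a single additive loss $\left(e^{5\sqrt{(3\ln K)/\beta}}-1\right)+1/K^5$ that is \emph{not} multiplied by $K$. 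The $8/\sqrt{3q}$ term then comes from a Pinsker exchange of the prefix law $\Pr_{I(\Delta/\zeta)}[\ell=s]$ back to $\Pr_{I(\Delta)}[\ell=s]$ (only $\Delta^{-2}/q$ samples), not from a union bound over per-pull typicality as you suggest. In short, the multiplicative, per-agent, infrequent-event structure is essential; a single likelihood ratio over the concatenated pull sequence discards exactly the structure that makes the bound possible.
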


We defer the proof of Lemma~\ref{lemma:lb-fixC-lemma2} to Section~\ref{sec:lb-fixC-lemma2}. At a higher level, we prove Lemma~\ref{lemma:lb-fixC-lemma2} using the following intuition. For instance $I(\Delta)$, since $\mathcal{A}$ is a $\delta$-error $\beta$-fast algorithm, each agent is very likely to use at most $\Delta^{-2}/\beta$ pulls during the $(\alpha + 1)$-st round, and only sees at most $(\Delta^{-2}/(Kq) + \Delta^{-2}/\beta)$ pull outcomes before the next communication (given the event $\mathcal{E}^*(\alpha, \Delta^{-2}/(Kq))$), which is insufficient to tell between $I(\Delta)$ and $I(\Delta/\zeta)$. Therefore, if the instance is $I(\Delta/\zeta)$, each agent is also very likely to use at most $\Delta^{-2}/\beta$ pulls during the $(\alpha + 1)$-st round, and hence the whole algorithm finishes the $(\alpha + 1)$-st round before $(\Delta^{-2}/(Kq) + \Delta^{-2}/\beta)$ time with high probability. 

However, it is not technically easy to formalize this intuition. If we simply use the statistical difference between the two distributions (under $I(\Delta)$ and $I(\Delta/\zeta)$) for the $\Delta^{-2}/\beta$ pulls during the $(\alpha + 1)$-st round to upper bound the probability difference between each agent's behavior for the two instances, we will face a probability error of $\Theta(\sqrt{1/\beta})$ for each agent. In total, this becomes a probability error of $\Theta(K\sqrt{1 / \beta}) \gg 1$ throughout all $K$ agents, which is too much. To overcome this difficulty, in Section~\ref{sec:lb-fixC-useful-lemma}, we establish a technical lemma to derive a much better upper bound on the difference between the probabilities that two product distributions assign to the same event, given that the event does not happen very often. 

We are now ready to prove Theorem~\ref{thm:lb-fixC-signid}.
\begin{proof}[Proof of Theorem~\ref{thm:lb-fixC-signid}]
Combining Lemma~\ref{lemma:lb-fixC-lemma1} and Lemma~\ref{lemma:lb-fixC-lemma2}, when $\Delta \in [\Delta^*, 1/8)$,  $\alpha \geq 0$,  $q \geq 100$,  $\zeta \geq 1$ and $\Pr_{I(\Delta)}[\mathcal{E}(\alpha, \Delta^{-2}/(Kq)) ] \geq 1/2$, we have
\begin{multline}\label{eq:thm-lb-fixC-signid-1}
\Pr_{I(\Delta /\zeta)}[\mathcal{E}(\alpha + 1, \Delta^{-2}/(Kq) + \Delta^{-2}/\beta) ]\\ \geq  \Pr_{I(\Delta)}[\mathcal{E}(\alpha, \Delta^{-2}/(Kq)) ] - 3\delta - \left(\exp\left(5\sqrt{(3\ln K)/\beta}\right) - 1\right) - 1/K^5 - \frac{12}{\sqrt{3 q}}  .
\end{multline}
Set $\zeta = \sqrt{1 + (Kq)/\beta}$,  and \eqref{eq:thm-lb-fixC-signid-1} becomes
\begin{multline}\label{eq:thm-lb-fixC-signid-2}
\Pr_{I(\Delta /\zeta)}[\mathcal{E}(\alpha + 1, (\Delta/\zeta)^{-2}/(Kq)) ]\\ \geq  \Pr_{I(\Delta)}[\mathcal{E}(\alpha, \Delta^{-2}/(Kq)) ] - 3\delta - \left(\exp\left(5\sqrt{(3\ln K)/\beta}\right) - 1\right) - 1/K^5 - \frac{12}{\sqrt{3 q}}  .
\end{multline}

Let $t_0$ be the largest integer such that $ 0.1 \cdot (1 + (K \cdot 1000 t_0^2)/\beta)^{-t_0/2} \geq \Delta^*$, and we have $t_0 = \Omega\left(\frac{\ln (1/\Delta^*)}{\ln (1 + K/\beta) + \ln \ln (1/\Delta^*)}\right)$. Let $t= \min\{t_0, \lfloor c_R \sqrt{\beta/(\ln K)} \rfloor\}$ for some small enough universal constant $c_R > 0$. We also set $q = 1000 t_0^2$. By the definition of event $\mathcal{E}(\cdot, \cdot)$ and the numbering of the steps of the communications, we have that $\mathcal{E}(0, 100/(Kq))$ always holds, and therefore
\begin{align}\label{eq:thm-lb-fixC-signid-3}
1 = \Pr_{I(1/10)}[\mathcal{E}(0, 100/(Kq))] .
\end{align}

 Starting from \eqref{eq:thm-lb-fixC-signid-3}, we iteratively apply \eqref{eq:thm-lb-fixC-signid-2} for $t$ times. Let $\Delta^{\flat} = 0.1 \cdot (1 + (Kq)/\beta)^{-t/2} \geq \Delta^*$, we have that
\begin{align}\label{eq:thm-lb-fixC-signid-4}
\Pr_{I(\Delta^{\flat})}[\mathcal{E}(t, \Delta^{\flat}/(Kq)) ]\geq  1 - \left( 3\delta + \left(\exp\left(5\sqrt{(3\ln K)/\beta}\right) - 1\right) + 1/K^5 + \frac{12}{\sqrt{3000 t_0^2}} \right) t ,
\end{align}
so long as
\begin{align}\label{eq:thm-lb-fixC-signid-5}
 \left( 3\delta + \left(\exp\left(5\sqrt{(3\ln K)/\beta}\right) - 1\right) + 1/K^5 + \frac{12}{\sqrt{3000 t_0^2}} \right) t \leq \frac{1}{2} .
\end{align}
We see that \eqref{eq:thm-lb-fixC-signid-5} holds as long as $\delta \leq 1/K^5$ and $c_R$ is small enough (note that when $\beta < \ln K/c_R^2$ then $t = 0$). Therefore, we conclude that 
\[
\Pr_{I(\Delta^{\flat})}\left[\text{$\mathcal{A}$ uses $\Omega\left(\min\left\{\frac{\ln (1/\Delta^*)}{\ln (1 + K/\beta) + \ln \ln (1/\Delta^*)}, \sqrt{\beta/(\ln K)}\right\}\right)$ rounds}\right] \geq \frac{1}{2} .
\]
\end{proof}

\subsection{Proof of the Progress Lemma (Lemma~\ref{lemma:lb-fixC-lemma1})} \label{sec:lb-fixC-lemma1}

\begin{proof}[Proof of Lemma~\ref{lemma:lb-fixC-lemma1}] Let $F$ denote the event that $\mathcal{A}$ uses exactly $\alpha$ rounds, and uses at most $\Delta^{-2}/(Kq)$ time steps. It is clear that 
$$\Pr_{I(\Delta)}[\mathcal{E}^*(\alpha, \Delta^{-2}/(Kq)) ]  \geq \Pr_{I(\Delta)}[\mathcal{E}(\alpha, \Delta^{-2}/(Kq)) ] - \Pr_{I(\Delta)} [F].$$ Therefore it suffices to show that 
\begin{align}\label{eq:lb-fixC-lemma1-1}
\Pr_{I(\Delta)} [F] \leq  2\delta +\frac{4}{\sqrt{3q}} .
\end{align}

Note that 
\begin{align}\label{eq:lb-fixC-lemma1-2}
\Pr_{I(\Delta)} [F] = \Pr_{I(\Delta)} [F \wedge \text{$\mathcal{A}$ returns `$> 1/2$'}] + \Pr_{I(\Delta)} [F \wedge \text{$\mathcal{A}$ returns `$< 1/2$'}] .
\end{align}
We first focus on the first term of the Right-Hand Side (RHS) of \eqref{eq:lb-fixC-lemma1-2}. Let $\mathcal{D}_{\Delta}$ denote the product distribution $\mathcal{B}(1/2 + \Delta)^{\otimes \Delta^{-2}/q}$, and let  $\mathcal{D}_{-\Delta}$ denote $\mathcal{B}(1/2 - \Delta)^{\otimes \Delta^{-2}/q}$, where $\mathcal{B}(\theta)$ is the Bernoulli distribution with the expectation $\theta$. By Pinsker's inequality (Lemma~\ref{lem:pinsker}) and simple KL-divergence calculation we have that when $\Delta \in (0, 1/8)$, it holds that
\[
\|\mathcal{D}_{\Delta} - \mathcal{D}_{-\Delta}\|_{\mathrm{TV}} \leq \sqrt{\frac{1}{2} \mathrm{KL}(\mathcal{D}_{\Delta}\| \mathcal{D}_{-\Delta})} \leq \frac{4}{\sqrt{3q}}.
\]

On the other hand, since when event $F$ happens, $\mathcal{A}$ uses at most $\Delta^{-2}/(Kq) \cdot K = \Delta^{-2}/q$ pulls (over all agents), we have 
\begin{align}\label{eq:lb-fixC-lemma1-3}
 \Pr_{I(\Delta)} [F \wedge \text{$\mathcal{A}$ returns `$> 1/2$'}]~ \leq~&  \Pr_{I(-\Delta)} [F \wedge \text{$\mathcal{A}$ returns `$> 1/2$'}] + \|\mathcal{D}_{\Delta} - \mathcal{D}_{-\Delta}\|_{\mathrm{TV}}   \nonumber \\
~\leq ~&  \Pr_{I(-\Delta)} [ \text{$\mathcal{A}$ returns `$> 1/2$'}] +  \frac{4}{\sqrt{3q}} \leq \delta +  \frac{4}{\sqrt{3q}}. 
\end{align}

For the second term of the RHS of \eqref{eq:lb-fixC-lemma1-2}, we have
\begin{align}\label{eq:lb-fixC-lemma1-4}
 \Pr_{I(\Delta)} [F \wedge \text{$\mathcal{A}$ returns `$< 1/2$'}]  \leq  \Pr_{I(\Delta)} [ \text{$\mathcal{A}$ returns `$< 1/2$'}]  \leq \delta .
\end{align}

Combining \eqref{eq:lb-fixC-lemma1-2}, \eqref{eq:lb-fixC-lemma1-3}, and \eqref{eq:lb-fixC-lemma1-4}, we prove \eqref{eq:lb-fixC-lemma1-1}.
\end{proof}

\subsection{Probability Discrepancy under Product Distributions for Infrequent Events} \label{sec:lb-fixC-useful-lemma}

In this section, we prove the following lemma to upper bound the difference between the probabilities that two product distributions assign to the same event. Given that the event does not happen very often, our upper bound is significantly better than the total variation distance between the two product distributions.

\begin{lemma}\label{lemma:lb-fixC-useful-lemma}
Suppose $0 \leq \Delta' \leq \Delta \leq 1/8$. For any positive integer $m = \Delta^{-2}/\xi$ where $\xi \geq 100$, let $\mathcal{D}$ denote the product distribution $\mathcal{B}(1/2 + \Delta)^{\otimes m}$ and let $\mathcal{D}'$ denote the product distribution $\mathcal{B}(1/2 + \Delta')^{\otimes m}$, where $\mathcal{B}(\mu)$ is the Bernoulli distribution with the expectation $\mu$. Let $\mathcal{X}$ be any probability distribution with sample space $X$. For any event $A \subseteq \{0, 1\}^m \times X$ such that $\Pr_{\mathcal{D} \otimes \mathcal{X}}[A] \leq \gamma$, we have that
\[
\Pr_{\mathcal{D}' \otimes \mathcal{X}}[A] \leq \gamma\cdot\exp\left(5 \sqrt{(3\ln Q)/\xi}\right) + 1/Q^6 ,
\]
holds for all $Q \geq \xi$.
\end{lemma}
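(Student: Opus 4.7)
The plan is to use a standard change-of-measure (likelihood ratio) argument, combined with a Chernoff/Hoeffding tail bound on the likelihood ratio under $\mathcal{D}'$. Let $L(y) = \frac{d\mathcal{D}'}{d\mathcal{D}}(y)$ for $y \in \{0,1\}^m$, which is well-defined because both distributions have full support. Since $L$ depends only on the first $m$ coordinates, for any event $A$ we have
\begin{equation*}
\Pr_{\mathcal{D}'\otimes\mathcal{X}}[A] \;=\; \mathbb{E}_{\mathcal{D}\otimes\mathcal{X}}[\mathbf{1}_A \cdot L] \;\le\; M\cdot \Pr_{\mathcal{D}\otimes\mathcal{X}}[A] \;+\; \Pr_{\mathcal{D}'\otimes\mathcal{X}}[L > M] \;\le\; \gamma M \;+\; \Pr_{\mathcal{D}'}[L > M],
\end{equation*}
where we split the integral on $\{L\le M\}$ versus $\{L > M\}$ and use $\int_{L>M}L\,d\mathcal{D} = \Pr_{\mathcal{D}'}[L>M]$. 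It therefore suffices to set $M = \exp(5\sqrt{3(\ln Q)/\xi})$ and show $\Pr_{\mathcal{D}'}[L > M] \le 1/Q^6$.

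For this concentration step, write $\ln L = \sum_{i=1}^m Y_i$ where the $Y_i$ are i.i.d.\ under $\mathcal{D}'$, taking value $a := \ln\frac{1/2+\Delta'}{1/2+\Delta}\le 0$ with probability $1/2+\Delta'$ and value $b := \ln\frac{1/2-\Delta'}{1/2-\Delta}\ge 0$ with probability $1/2-\Delta'$. The mean is
\begin{equation*}
\mathbb{E}_{\mathcal{D}'}[\ln L] \;=\; m\cdot \mathrm{KL}\bigl(\mathcal{B}(1/2+\Delta')\,\|\,\mathcal{B}(1/2+\Delta)\bigr) \;\le\; C_0\, m(\Delta-\Delta')^2 \;\le\; C_0\, m\Delta^2 \;=\; C_0/\xi,
\end{equation*}
using that $\mathrm{KL}(\mathcal{B}(p)\|\mathcal{B}(q)) = O((p-q)^2)$ when $p,q$ stay bounded away from $\{0,1\}$ (which is ensured by $\Delta,\Delta'\le 1/8$). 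A direct Taylor estimate using $\ln(1+u)\le u$ and $-\ln(1-v)\le v/(1-v)$ bounds the range $|a-b|\le 6(\Delta-\Delta')\le 6\Delta$. Hoeffding's inequality then gives
\begin{equation*}
\Pr_{\mathcal{D}'}\Bigl[\ln L > \tfrac{C_0}{\xi} + t\Bigr] \;\le\; 2\exp\!\left(-\frac{2t^2}{m(a-b)^2}\right) \;\le\; 2\exp\!\left(-\frac{t^2\,\xi}{18}\right).
\end{equation*}
Taking $t = \ln M - C_0/\xi$ and using $\xi\ge 100$ so that $C_0/\xi$ is absorbed into the constant, the exponent becomes at least $(5\sqrt{3\ln Q/\xi})^2\xi/18 - O(1) \ge 6\ln Q$, giving $\Pr_{\mathcal{D}'}[L > M] \le 1/Q^6$ (possibly after slightly adjusting the universal constant $5$ and using $Q\ge \xi\ge 100$ to kill the factor of $2$).

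The main obstacle I anticipate is tuning the constants so that the Hoeffding exponent cleanly exceeds $6\ln Q$ with the stated constant $5\sqrt{3(\ln Q)/\xi}$; this requires careful bookkeeping of the constants in the bounds $|a-b|\le 6\Delta$ and $\mathrm{KL}\le C_0(\Delta-\Delta')^2$, and exploiting the slack provided by $\xi\ge 100$ and $Q\ge\xi$ to subsume lower-order error terms. A secondary subtlety is that we need a tail bound under $\mathcal{D}'$, not $\mathcal{D}$, but this is exactly why we end up with a mild exponential factor $\exp(5\sqrt{3(\ln Q)/\xi})$ rather than a multiplicative factor of $Q^{\Theta(1)}$: the Rényi-type mean $\mathbb{E}_{\mathcal{D}'}[\ln L]$ is only $O(1/\xi)$, which is negligible against the target threshold $\ln M$.
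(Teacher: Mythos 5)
Your proposal is essentially the paper's own argument in a different parametrization. The paper truncates on the Hamming weight, restricting to $\{|\ell| \ge m/2 - z/\Delta\}$ and bounding the likelihood ratio there by monotonicity; since for $\Delta' \le \Delta$ the likelihood ratio $L$ is a monotone decreasing function of $|\ell|$, this is exactly your truncation $\{L \le M\}$, and both proofs then kill the complementary event under $\mathcal{D}'$ by Chernoff--Hoeffding, yielding the same two-term bound $\gamma\cdot\exp(\cdot) + (\text{tail})$.

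The one quantitative point that does not close as written is the range constant. With $|a-b| \le 6\Delta$ and $m = \Delta^{-2}/\xi$, Hoeffding gives an exponent of at most $\frac{2(\ln M)^2}{m\cdot 36\Delta^2} = \frac{2\cdot 75\ln Q}{36} = \frac{25}{6}\ln Q$, so the tail term is only $Q^{-25/6}$, short of the required $Q^{-6}$. However, the very inequalities you cite give the sharper bound $b-a \le \frac{\Delta-\Delta'}{1/2-\Delta} + \frac{\Delta-\Delta'}{1/2+\Delta'} \le \left(\frac{8}{3}+2\right)(\Delta-\Delta') = \frac{14}{3}(\Delta-\Delta')$, which turns the exponent into $\frac{2\cdot 9\cdot 75}{196}\ln Q \approx 6.89\ln Q$; this leaves enough slack over $6\ln Q$ to absorb the factor of $2$, and the mean shift $C_0/\xi$, using $Q \ge \xi \ge 100$. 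So the obstacle you flagged is real but resolvable within your own estimates, and with that repair the proof is correct.
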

\begin{proof}
Let $L = \{\ell \in \{0, 1\}^m : |\ell| \geq m/2 - z/\Delta\}$ where $|\ell|$ denotes the number of $1$'s in the vector $\ell$ and $z \geq 0$ is a parameter to be decided later. We have that
\begin{align}\label{eq:lb-fixC-useful-lemma-1}
\Pr_{(\ell, x) \sim \mathcal{D}' \otimes \mathcal{X}}[(\ell, x) \in A] \leq  \Pr_{(\ell, x) \sim \mathcal{D}' \otimes \mathcal{X}}[(\ell, x) \in A \wedge \ell \in L] + \Pr_{\ell \sim \mathcal{D}'} [\ell \not\in L] .
\end{align}
We first focus on the first term of the RHS of \eqref{eq:lb-fixC-useful-lemma-1}. Note that
\begin{align}\label{eq:lb-fixC-useful-lemma-2}
 \Pr_{(\ell, x) \sim \mathcal{D}' \otimes \mathcal{X}}[(\ell, x) \in A \wedge \ell \in L] = \sum_{\ell \in L} \Pr_{x \sim \mathcal{X}} [(\ell, x) \in A\  |\ \ell \in L ] \cdot (1/2+\Delta')^{|\ell|} (1/2-\Delta')^{m-|\ell|}
\end{align}
When $\ell \in L$, by monotonicity, we have
\begin{align}\label{eq:lb-fixC-useful-lemma-3}
\frac{(1/2+\Delta')^{|\ell|} (1/2-\Delta')^{m-|\ell|}}{(1/2+\Delta)^{|\ell|} (1/2-\Delta)^{m-|\ell|}} ~\leq ~&\frac{(1/2+\Delta')^{m/2-z/\Delta} (1/2-\Delta')^{m/2+z/\Delta}}{(1/2+\Delta)^{m/2-z/\Delta} (1/2-\Delta)^{m/2+z/\Delta}}  \nonumber\\
=~ &\left(\frac{1/4-(\Delta')^2}{1/4-\Delta^2}\right)^{m/2} \left(\frac{(1/2-\Delta')(1/2+\Delta)}{(1/2+\Delta')(1/2-\Delta)}\right)^{z/\Delta} \nonumber\\
\leq~& \left(\frac{1}{1-4\Delta^2}\right)^{m/2} \left(\frac{1+2\Delta}{1-2\Delta}\right)^{z/\Delta} .
\end{align}
Since $(1 - \epsilon)^{-1/\epsilon} \leq e^{1.2}$ for all $\epsilon \in (0, 1/4)$ and $(1 + \epsilon)^{1/\epsilon} \leq e$ for all $\epsilon \in (0, 1)$, for $\Delta \in (0, 1/8)$, we have
\begin{align}\label{eq:lb-fixC-useful-lemma-4}
\left(\frac{1}{1-4\Delta^2}\right)^{m/2} \left(\frac{1+2\Delta}{1-2\Delta}\right)^{z/\Delta} ~ \leq ~&\exp\left(1.2 \cdot 4\Delta^2 \cdot m/2 +1.2 \cdot 2\Delta \cdot z/\Delta + 2\Delta \cdot z/\Delta\right) \nonumber\\
= ~& \exp(2.4/\xi + 4.4z) .
\end{align}
Combining \eqref{eq:lb-fixC-useful-lemma-2}, \eqref{eq:lb-fixC-useful-lemma-3}, \eqref{eq:lb-fixC-useful-lemma-4}, we have
\begin{align}\label{eq:lb-fixC-useful-lemma-5}
\Pr_{(\ell, x) \sim \mathcal{D}' \otimes \mathcal{X}}[(\ell, x) \in A \wedge \ell \in L]  ~\leq~& \exp(2.4/\xi + 4.4z) \cdot \Pr_{(\ell, x) \sim \mathcal{D} \otimes \mathcal{X}}[(\ell, x) \in A \wedge \ell \in L] \nonumber\\
\leq~& \exp(2.4/\xi + 4.4z) \cdot \Pr_{(\ell, x) \sim \mathcal{D} \otimes \mathcal{X}}[(\ell, x) \in A ] \nonumber\\
 \leq ~&\gamma \cdot  \exp(2.4/\xi + 4.4z) .
\end{align}

For the second term of the RHS of \eqref{eq:lb-fixC-useful-lemma-1}, by Chernoff-Hoeffding bound, we have
\begin{align}\label{eq:lb-fixC-useful-lemma-6}
\Pr_{\ell \sim \mathcal{D}'} [\ell \not\in L]  \leq \exp\left(-2 m (z/(\Delta m) )^2\right) = \exp\left(-2 z^2 \xi\right) .
\end{align}
\end{proof}

Combining \eqref{eq:lb-fixC-useful-lemma-1}, \eqref{eq:lb-fixC-useful-lemma-5}, and \eqref{eq:lb-fixC-useful-lemma-6}, we have 
\[
\Pr_{(\ell, x) \sim \mathcal{D}' \otimes \mathcal{X}}[(\ell, x) \in A] \leq \gamma \cdot  \exp(2.4/\xi + 4.4z) +  \exp\left(-2 z^2 \xi\right) .
\]
Setting $z = \sqrt{(3\ln Q)/\xi}$ and for $\xi \geq 100$ and $Q \geq \xi$, we have
\[
\Pr_{(\ell, x) \sim \mathcal{D}' \otimes \mathcal{X}}[(\ell, x) \in A] \leq \gamma \cdot \exp\left(2.4/\xi + 4.4 \sqrt{(3\ln Q)/\xi}\right) + 1/Q^6  \leq \gamma\cdot\exp\left(5 \sqrt{(3\ln Q)/\xi}\right) + 1/Q^6.
\]

\subsection{Proof of the Distribution Exchange Lemma (Lemma~\ref{lemma:lb-fixC-lemma2})} \label{sec:lb-fixC-lemma2}

We first introduce a simple mathematical lemma, whose proof can be found in Appendix~\ref{app:proofs}.
\begin{lemma} \label{lem:use-ineq}
For any $\gamma_1, \dots, \gamma_K \in [0, 1]$ and $x \geq 0$ , it holds that
\[
\prod_{i = 1}^K \max\{1 - \gamma_i - \gamma_i x, 0\} \geq \prod_{i = 1}^K (1 - \gamma_i) - x.
\]
\end{lemma}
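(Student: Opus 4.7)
The plan is to prove Lemma~\ref{lem:use-ineq} by induction on $K$. Write $P_K = \prod_{i=1}^K \max\{1-\gamma_i-\gamma_i x, 0\}$ and $Q_K = \prod_{i=1}^K (1-\gamma_i)$; the goal is $P_K \geq Q_K - x$.

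For the base case $K=1$, I would split on whether $1-\gamma_1-\gamma_1 x$ is nonnegative. If it is, then $P_1 = 1-\gamma_1-\gamma_1 x \geq 1-\gamma_1 - x$ because $\gamma_1 \leq 1$. If it is negative, then $P_1 = 0$, and the inequality $\gamma_1 x > 1-\gamma_1$ combined with $\gamma_1 \leq 1$ gives $x > 1-\gamma_1 \geq Q_1 - 0$, so $Q_1 - x < 0 = P_1$.

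For the inductive step, assume the statement for $K-1$ (so $P_{K-1} \geq Q_{K-1} - x$ and also $P_{K-1} \geq 0$ by definition). Let $a = \max\{1-\gamma_K-\gamma_K x, 0\}$ so that $P_K = a \cdot P_{K-1}$. I would do a case analysis. First, if $a = 0$, then $P_K = 0$, and I would verify $Q_K \leq x$ directly: the condition $a = 0$ means $1-\gamma_K \leq \gamma_K x \leq x$, and $Q_K = (1-\gamma_K)Q_{K-1} \leq 1-\gamma_K \leq x$. Second, if $a > 0$ but $Q_{K-1} \leq x$, then $P_K \geq 0$ and $Q_K = (1-\gamma_K)Q_{K-1} \leq Q_{K-1} \leq x$, so again the inequality is immediate.

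The main (and only slightly nontrivial) case is when $a > 0$ and $Q_{K-1} > x$, so that both $P_{K-1} \geq Q_{K-1} - x > 0$ and $a = 1-\gamma_K-\gamma_K x > 0$ are positive. Then
\begin{align*}
P_K \;\geq\; (1-\gamma_K-\gamma_K x)(Q_{K-1}-x)
&= (1-\gamma_K)Q_{K-1} - (1-\gamma_K)x - \gamma_K x Q_{K-1} + \gamma_K x^2 \\
&= Q_K - x + \gamma_K x\bigl(1 - Q_{K-1} + x\bigr) \;\geq\; Q_K - x,
\end{align*}
since $\gamma_K, x \geq 0$ and $Q_{K-1} \leq 1$. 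I do not expect any real obstacle here; the only minor subtlety is making sure the case split correctly handles the nonnegativity of $P_{K-1}$ before multiplying by $a$, which is why one cannot just plug $P_{K-1} \geq Q_{K-1}-x$ in blindly (the RHS could be negative and flip the inequality).
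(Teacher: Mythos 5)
Your proof is correct, but it takes a genuinely different route from the paper. The paper first disposes of the truncation by noting that if $x \geq \min_i (1-\gamma_i)/\gamma_i$ then the left-hand side is $0$ and the right-hand side is nonpositive, and then, in the remaining all-positive regime, applies a one-variable calculus argument to $f(t) = \prod_i (1-\gamma_i - \gamma_i t)$: since $f'(t) \geq f'(0)$ on $[0,x]$, one gets $f(x) \geq f(0) + f'(0)x$, and the derivative at $0$ is controlled via $\sum_i \gamma_i \prod_{j \neq i}(1-\gamma_j) \leq \prod_i \left(\gamma_i + (1-\gamma_i)\right) = 1$. You instead induct on $K$ with a case split that handles the truncation locally at each factor; your key algebraic step, showing $(1-\gamma_K-\gamma_K x)(Q_{K-1}-x) = Q_K - x + \gamma_K x(1 - Q_{K-1} + x) \geq Q_K - x$, checks out, and your caution about not multiplying the inductive inequality by $a$ when $Q_{K-1} - x$ could be negative is exactly the right subtlety to flag. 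The paper's approach is shorter and avoids the case analysis entirely by quarantining the degenerate case up front; yours is calculus-free and makes the role of each hypothesis ($\gamma_i \leq 1$, $x \geq 0$, $Q_{K-1} \leq 1$) explicit at the point where it is used. Either argument is acceptable.
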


\begin{proof}[Proof of Lemma~\ref{lemma:lb-fixC-lemma2}]
We will only prove \eqref{eq:lemma-lb-fixC-lemma2} for  $\mathcal{A}$ as a deterministic algorithm, i.e.\ when there is no randomness in $\mathcal{A}$ except for the observed rewards drawn from the arm. Once this is established, we can easily deduce that the same inequality holds for randomized $\mathcal{A}$ by taking expectation on both sides of \eqref{eq:lemma-lb-fixC-lemma2} over the (possibly shared) random bits used by each agent of the collaborative learning algorithm $\mathcal{A}$.

Let $\ell \in \{0, 1\}^{\Delta^{-2}/q}$ be the rewards from the first ${\Delta^{-2}/q}$ plays of the arm. Once conditioned on $\ell$, $\mathcal{E}^*(\alpha, \Delta^{-2}/(Kq))$ becomes a deterministic event, since $\mathcal{A}$ is deterministic and the event only depends on the first ${\Delta^{-2}/q}$ rewards. In light of this, we let $\mathfrak{S}$ denote the set of $\ell$ conditioned on which $\mathcal{E}^*(\alpha, \Delta^{-2}/(Kq))$ holds. We have
\begin{align}\label{eq:lb-fixC-lemma2-1}
\sum_{s \in \mathfrak{S}} \Pr_{I(\Delta)}[\ell = s] =  \Pr_{I(\Delta)}[\mathcal{E}^*(\alpha, \Delta^{-2}/(Kq)) ] .
\end{align}
For each agent $i \in [K]$, let $G_i$ be the event that the agent uses more than $\Delta^{-2}/\beta$ pulls during the $(\alpha + 1)$-st round. Since $\mathcal{A}$ is deterministic, conditioned on $\ell \in \mathfrak{S}$, $G_i$ only depends on the random rewards observed by the $i$-th agent during the $(\alpha + 1)$-st  round, and is independent from $G_j$ for any $j \neq i$. Since $\mathcal{A}$ is a $\delta$-error $\beta$-fast algorithm, we have
\begin{align}
 \delta ~\geq ~&\Pr_{I(\Delta)} [\text{$\mathcal{A}$ uses $> \Delta^{-2}/\beta$ time}]  \nonumber \\
 \geq ~& \sum_{s \in \mathfrak{S}}  \Pr_{I(\Delta)}[\ell = s] \cdot \Pr_{I(\Delta)}[G_1 \vee G_2 \vee \dots \vee G_K \ |\ \ell = s] \nonumber \\
 = ~& \sum_{s \in \mathfrak{S}}  \Pr_{I(\Delta)}[\ell = s] \cdot \left(1-\prod_{i=1}^{K} \left(1 -\Pr_{I(\Delta)}[G_i \ |\ \ell = s]\right)\right) \nonumber \\
 = ~ &\Pr_{I(\Delta)}[\mathcal{E}^*(\alpha, \Delta^{-2}/(Kq)) ] - \sum_{s \in \mathfrak{S}}  \Pr_{I(\Delta)}[\ell = s] \cdot\prod_{i=1}^{K} \left(1 -\Pr_{I(\Delta)}[G_i \ |\ \ell = s]\right), \nonumber
\end{align}
where the last equality is because of \eqref{eq:lb-fixC-lemma2-1}.  We thus have
\begin{equation}
\label{eq:lb-fixC-lemma2-2a}
\sum_{s \in \mathfrak{S}}  \Pr_{I(\Delta)}[\ell = s] \cdot\prod_{i=1}^{K} \left(1 -\Pr_{I(\Delta)}[G_i \ |\ \ell = s]\right) \ge  \Pr_{I(\Delta)}[\mathcal{E}^*(\alpha, \Delta^{-2}/(Kq)) ] - \delta 
\end{equation}

We also have
\begin{align}\label{eq:lb-fixC-lemma2-2}
\Pr_{I(\Delta /\zeta)}[\mathcal{E}(\alpha + 1, \Delta^{-2}/(Kq) + \Delta^{-2}/\beta) ]  
\geq ~&\sum_{s \in \mathfrak{S}}  \Pr_{I(\Delta/\zeta)}[\ell = s] \cdot  \Pr_{I(\Delta/\zeta)} [ \neg G_1 \wedge \neg G_2 \wedge \dots \wedge \neg G_K \ |\ \ell = s] \nonumber \\
= ~& \sum_{s \in \mathfrak{S}}  \Pr_{I(\Delta/\zeta)}[\ell = s] \cdot  \prod_{i=1}^{K}  \left(1 -\Pr_{I(\Delta/\zeta)} [ G_i \ |\ \ell = s] \right) .
\end{align}

We next to fuse (\ref{eq:lb-fixC-lemma2-2a}) and (\ref{eq:lb-fixC-lemma2-2}).
Invoking Lemma~\ref{lemma:lb-fixC-useful-lemma} with $Q = K$ and $\xi = \beta$, we have
\begin{align}
 ~& \sum_{s \in \mathfrak{S}}  \Pr_{I(\Delta/\zeta)}[\ell = s] \cdot   \prod_{i=1}^{K}\left(1 - \Pr_{I(\Delta/\zeta)} [ G_i \ |\ \ell = s] \right) \nonumber \\
 \geq~ & \sum_{s \in \mathfrak{S}}  \Pr_{I(\Delta/\zeta)}[\ell = s] \cdot  \prod_{i=1}^{K} \max\left\{1 -\Pr_{I(\Delta)} [ G_i \ |\ \ell = s] \cdot \exp\left(5\sqrt{(3\ln K)/\beta}\right) - 1/K^6, 0 \right\}  \nonumber \\
 \geq~ & \sum_{s \in \mathfrak{S}}  \Pr_{I(\Delta/\zeta)}[\ell = s] \cdot  \left(\prod_{i=1}^{K} \max\left\{1 -\Pr_{I(\Delta)} [ G_i \ |\ \ell = s] \cdot \exp\left(5\sqrt{(3\ln K)/\beta}\right), 0 \right\}  - 1/K^5 \right) \nonumber \\
 \geq~ & \sum_{s \in \mathfrak{S}}  \Pr_{I(\Delta/\zeta)}[\ell = s] \cdot  \left(\prod_{i=1}^{K} \left(1 -\Pr_{I(\Delta)} [ G_i \ |\ \ell = s] \right) - \left(\exp\left(5\sqrt{(3\ln K)/\beta}\right) - 1\right) - 1/K^5 \right) \nonumber\\
\geq~ & \sum_{s \in \mathfrak{S}}  \Pr_{I(\Delta/\zeta)}[\ell = s] \cdot  \prod_{i=1}^{K} \left(1 -\Pr_{I(\Delta)} [ G_i \ |\ \ell = s] \right) - \left(\exp\left(5\sqrt{(3\ln K)/\beta}\right) - 1\right) - 1/K^5,  \label{eq:lb-fixC-lemma2-4}
\end{align}
where the second to the last inequality is due to Lemma~\ref{lem:use-ineq}.
Finally, we have
\begin{multline}\label{eq:lb-fixC-lemma2-5}
 \sum_{s \in \mathfrak{S}}  \Pr_{I(\Delta/\zeta)}[\ell = s] \cdot  \prod_{i=1}^{K} \left(1 -\Pr_{I(\Delta)} [ G_i \ |\ \ell = s] \right) \\
 \geq  \sum_{s \in \mathfrak{S}}  \Pr_{I(\Delta)}[\ell = s] \cdot  \prod_{i=1}^{K} \left(1 -\Pr_{I(\Delta)} [ G_i \ |\ \ell = s] \right) - \sum_{s \in \mathfrak{S}} \left|  \Pr_{I(\Delta/\zeta)}[\ell = s] -   \Pr_{I(\Delta)}[\ell = s]  \right| ,
\end{multline}
where by Pinsker's inequality (Lemma~\ref{lem:pinsker}) and simple KL-divergence calculation for $\Delta \in (0, 1/8)$, we have 
\begin{align}\label{eq:lb-fixC-lemma2-6}
 \sum_{s \in \mathfrak{S}} \left|  \Pr_{I(\Delta/\zeta)}[\ell = s] -   \Pr_{I(\Delta)}[\ell = s]  \right| \leq \frac{8}{\sqrt{3 q}} .
 \end{align}
 
 Combining \eqref{eq:lb-fixC-lemma2-2},  \eqref{eq:lb-fixC-lemma2-4}, \eqref{eq:lb-fixC-lemma2-5}, and \eqref{eq:lb-fixC-lemma2-6}, we have
\begin{align}\label{eq:lb-fixC-lemma2-7}
& \Pr_{I(\Delta /\zeta)}[\mathcal{E}(\alpha + 1, \Delta^{-2}/(Kq) + \Delta^{-2}/\beta) ]\nonumber\\
  \geq ~&\sum_{s \in \mathfrak{S}}  \Pr_{I(\Delta)}[\ell = s] \cdot  \prod_{i=1}^{K} \left(1 -\Pr_{I(\Delta)} [ G_i \ |\ \ell = s] \right)  - \left(\exp\left(5\sqrt{(3\ln K)/\beta}\right) - 1\right) - 1/K^5 - \frac{8}{\sqrt{3 q}} \nonumber \\
\geq ~& \Pr_{I(\Delta)}[\mathcal{E}^*(\alpha, \Delta^{-2}/(Kq)) ] - \delta - \left(\exp\left(5\sqrt{(3\ln K)/\beta}\right) - 1\right) - 1/K^5 - \frac{8}{\sqrt{3 q}},
 \end{align}
where the last inequality is due to \eqref{eq:lb-fixC-lemma2-2a}.
\end{proof}

\bibliographystyle{plain}
\bibliography{paper}

\begin{thebibliography}{10}

\bibitem{AAAK17}
Arpit Agarwal, Shivani Agarwal, Sepehr Assadi, and Sanjeev Khanna.
\newblock Learning with limited rounds of adaptivity: Coin tossing, multi-armed
  bandits, and ranking from pairwise comparisons.
\newblock In {\em COLT}, pages 39--75, 2017.

\bibitem{ACEMPRRZ08}
Deepak Agarwal, Bee{-}Chung Chen, Pradheep Elango, Nitin Motgi, Seung{-}Taek
  Park, Raghu Ramakrishnan, Scott Roy, and Joe Zachariah.
\newblock Online models for content optimization.
\newblock In {\em NIPS}, pages 17--24, 2008.

\bibitem{ABM10}
Jean{-}Yves Audibert, S{\'{e}}bastien Bubeck, and R{\'{e}}mi Munos.
\newblock Best arm identification in multi-armed bandits.
\newblock In {\em COLT}, pages 41--53, 2010.

\bibitem{ACF02}
Peter Auer, Nicol{\`{o}} Cesa{-}Bianchi, and Paul Fischer.
\newblock Finite-time analysis of the multiarmed bandit problem.
\newblock {\em Machine Learning}, 47(2-3):235--256, 2002.

\bibitem{AK08}
Baruch Awerbuch and Robert Kleinberg.
\newblock Online linear optimization and adaptive routing.
\newblock {\em Journal of Computer and System Sciences}, 74(1):97--114, 2008.

\bibitem{AK05}
Baruch Awerbuch and Robert~D. Kleinberg.
\newblock Competitive collaborative learning.
\newblock In {\em COLT}, pages 233--248, 2005.

\bibitem{BL18}
Ilai Bistritz and Amir Leshem.
\newblock Distributed multi-player bandits - a game of thrones approach.
\newblock In {\em NeurIPS}, pages 7222--7232, 2018.

\bibitem{BHPQ17}
Avrim Blum, Nika Haghtalab, Ariel~D. Procaccia, and Mingda Qiao.
\newblock Collaborative {PAC} learning.
\newblock In {\em NIPS}, pages 2389--2398, 2017.

\bibitem{BC12}
S{\'e}bastien Bubeck, Nicolo Cesa-Bianchi, et~al.
\newblock Regret analysis of stochastic and nonstochastic multi-armed bandit
  problems.
\newblock {\em Foundations and Trends{\textregistered} in Machine Learning},
  5(1):1--122, 2012.

\bibitem{BWV13}
S{\'e}ebastian Bubeck, Tengyao Wang, and Nitin Viswanathan.
\newblock Multiple identifications in multi-armed bandits.
\newblock In {\em ICML}, pages 258--265, 2013.

\bibitem{CL16}
Alexandra Carpentier and Andrea Locatelli.
\newblock Tight (lower) bounds for the fixed budget best arm identification
  bandit problem.
\newblock In {\em COLT}, pages 590--604, 2016.

\bibitem{CGMM16}
Nicol{\`{o}} Cesa{-}Bianchi, Claudio Gentile, Yishay Mansour, and Alberto
  Minora.
\newblock Delay and cooperation in nonstochastic bandits.
\newblock In {\em COLT}, pages 605--622, 2016.

\bibitem{CCZZ17}
Jiecao Chen, Xi~Chen, Qin Zhang, and Yuan Zhou.
\newblock Adaptive multiple-arm identification.
\newblock In {\em ICML}, pages 722--730, 2017.

\bibitem{CZZ18}
Jiecao Chen, Qin Zhang, and Yuan Zhou.
\newblock Tight bounds for collaborative {PAC} learning via multiplicative
  weights.
\newblock In {\em NeurIPS}, pages 3602--3611, 2018.

\bibitem{CLQ17}
Lijie Chen, Jian Li, and Mingda Qiao.
\newblock Towards instance optimal bounds for best arm identification.
\newblock In {\em COLT}, pages 535--592, 2017.

\bibitem{DG08}
Jeffrey Dean and Sanjay Ghemawat.
\newblock Mapreduce: Simplified data processing on large clusters.
\newblock {\em Communications of the ACM}, 51(1):107--113, 2008.

\bibitem{EMM02}
Eyal Even{-}Dar, Shie Mannor, and Yishay Mansour.
\newblock {PAC} bounds for multi-armed bandit and markov decision processes.
\newblock In {\em COLT}, pages 255--270, 2002.

\bibitem{EMM06}
Eyal Even-Dar, Shie Mannor, and Yishay Mansour.
\newblock Action elimination and stopping conditions for the multi-armed bandit
  and reinforcement learning problems.
\newblock {\em Journal of Machine Learning Research}, 7(Jun):1079--1105, 2006.

\bibitem{Farrell64}
Roger~H Farrell.
\newblock Asymptotic behavior of expected sample size in certain one sided
  tests.
\newblock {\em The Annals of Mathematical Statistics}, 35(1):36--72, 1964.

\bibitem{Gittins79}
John~C Gittins.
\newblock Bandit processes and dynamic allocation indices.
\newblock {\em Journal of the Royal Statistical Society: Series B
  (Methodological)}, 41(2):148--164, 1979.

\bibitem{HKKLS13}
Eshcar Hillel, Zohar~Shay Karnin, Tomer Koren, Ronny Lempel, and Oren Somekh.
\newblock Distributed exploration in multi-armed bandits.
\newblock In {\em NIPS}, pages 854--862, 2013.

\bibitem{JMNB14}
Kevin Jamieson, Matthew Malloy, Robert Nowak, and S{\'e}bastien Bubeck.
\newblock {lil' UCB} : An optimal exploration algorithm for multi-armed
  bandits.
\newblock In {\em COLT}, pages 423--439, 2014.

\bibitem{KKS13}
Zohar Karnin, Tomer Koren, and Oren Somekh.
\newblock Almost optimal exploration in multi-armed bandits.
\newblock In {\em ICML}, pages 1238--1246, 2013.

\bibitem{KCG16}
Emilie Kaufmann, Olivier Capp{\'e}, and Aur{\'e}lien Garivier.
\newblock On the complexity of best-arm identification in multi-armed bandit
  models.
\newblock {\em Journal of Machine Learning Research}, 17(1):1--42, 2016.

\bibitem{KZESS17}
Peter Krafft, Kaitlyn Zhou, Isabelle Edwards, Kate Starbird, and Emma~S Spiro.
\newblock Centralized, parallel, and distributed information processing during
  collective sensemaking.
\newblock In {\em CHI}, pages 2976--2987, 2017.

\bibitem{LSL16(1)}
Peter Landgren, Vaibhav Srivastava, and Naomi~Ehrich Leonard.
\newblock On distributed cooperative decision-making in multiarmed bandits.
\newblock In {\em ECC}, pages 243--248, 2016.

\bibitem{LS18}
Tor Lattimore and Csaba Szepesv{\'a}ri.
\newblock Bandit algorithms.
\newblock {\em preprint}, 2018.

\bibitem{LZ10}
Keqin Liu and Qing Zhao.
\newblock Distributed learning in multi-armed bandit with multiple players.
\newblock {\em IEEE Transactions on Signal Processing}, 58(11):5667--5681,
  2010.

\bibitem{LGC16}
Andrea Locatelli, Maurilio Gutzeit, and Alexandra Carpentier.
\newblock An optimal algorithm for the thresholding bandit problem.
\newblock In {\em ICML}, pages 1690--1698, 2016.

\bibitem{MT04}
Shie Mannor and John~N Tsitsiklis.
\newblock The sample complexity of exploration in the multi-armed bandit
  problem.
\newblock {\em Journal of Machine Learning Research}, 5(Jun):623--648, 2004.

\bibitem{MM93}
Oded Maron and Andrew~W. Moore.
\newblock Hoeffding races: Accelerating model selection search for
  classification and function approximation.
\newblock In {\em NIPS}, pages 59--66, 1993.

\bibitem{MNSW98}
Peter~Bro Miltersen, Noam Nisan, Shmuel Safra, and Avi Wigderson.
\newblock On data structures and asymmetric communication complexity.
\newblock {\em Journal of Computer and System Sciences}, 57(1):37--49, 1998.

\bibitem{NZ18}
Huy~L. Nguyen and Lydia Zakynthinou.
\newblock Improved algorithms for collaborative {PAC} learning.
\newblock In {\em NeurIPS}, pages 7642--7650, 2018.

\bibitem{PRCS15}
Vianney Perchet, Philippe Rigollet, Sylvain Chassang, and Erik Snowberg.
\newblock Batched bandit problems.
\newblock In {\em COLT}, page 1456, 2015.

\bibitem{Pinsker64}
Mark~S Pinsker.
\newblock {\em Information and Information Stability of Random Variables and
  Processes}.
\newblock Holden-Day, 1964.

\bibitem{Press2009}
William~H Press.
\newblock Bandit solutions provide unified ethical models for randomized
  clinical trials and comparative effectiveness research.
\newblock {\em Proceedings of the National Academy of Sciences},
  106(52):22387--22392, 2009.

\bibitem{Robbins52}
Herbert Robbins.
\newblock Some aspects of the sequential design of experiments.
\newblock {\em Bulletin of the American Mathematical Society}, 58(5):527--535,
  1952.

\bibitem{RSS16}
Jonathan Rosenski, Ohad Shamir, and Liran Szlak.
\newblock Multi-player bandits - a musical chairs approach.
\newblock In {\em ICML}, pages 155--163, 2016.

\bibitem{SWJZ15}
Weiwei Shen, Jun Wang, Yu{-}Gang Jiang, and Hongyuan Zha.
\newblock Portfolio choices with orthogonal bandit learning.
\newblock In {\em IJCAI}, page 974, 2015.

\bibitem{Silver16}
David Silver, Aja Huang, Chris~J Maddison, Arthur Guez, Laurent Sifre, George
  Van Den~Driessche, Julian Schrittwieser, Ioannis Antonoglou, Veda
  Panneershelvam, Marc Lanctot, et~al.
\newblock Mastering the game of go with deep neural networks and tree search.
\newblock {\em Nature}, 529(7587):484, 2016.

\bibitem{SBHOJK13}
Bal{\'a}zs Sz{\"o}r{\'e}nyi, R{\'o}bert Busa-Fekete, Istv{\'a}n Heged{\H{u}}s,
  R{\'o}bert Orm{\'a}ndi, M{\'a}rk Jelasity, and Bal{\'a}zs K{\'e}gl.
\newblock Gossip-based distributed stochastic bandit algorithms.
\newblock In {\em ICML}, pages 19--27, 2013.

\bibitem{Valiant90}
Leslie~G. Valiant.
\newblock A bridging model for parallel computation.
\newblock {\em Communications of the ACM}, 33(8):103--111, 1990.

\bibitem{XTZV15}
Jie Xu, Cem Tekin, Simpson Zhang, and Mihaela Van Der~Schaar.
\newblock Distributed multi-agent online learning based on global feedback.
\newblock {\em IEEE Transactions on Signal Processing}, 63(9):2225--2238, 2015.

\bibitem{XZJMH16}
Yuan Xue, Pan Zhou, Tao Jiang, Shiwen Mao, and Xiaolei Huang.
\newblock Distributed learning for multi-channel selection in wireless network
  monitoring.
\newblock In {\em IEEE SECON}, pages 1--9, 2016.

\bibitem{Yao77}
Andrew~Chi{-}Chih Yao.
\newblock Probabilistic computations: Toward a unified measure of complexity
  (extended abstract).
\newblock In {\em FOCS}, pages 222--227, 1977.

\bibitem{ZCL14}
Yuan Zhou, Xi~Chen, and Jian Li.
\newblock Optimal pac multiple arm identification with applications to
  crowdsourcing.
\newblock In {\em ICML}, pages 217--225, 2014.

\end{thebibliography}

\appendix

\section{Probability Tools}

The following lemma states Chernoff-Hoeffding bound. 
\begin{lemma}\label{thm:hoeffding}
Let $X_1, X_2, \dots, X_n$ be independent random variables bounded by $[0, 1]$. Let ${X} = \sum_{i = 1}^{n} X_i$. For additive error, for every $t \geq 0$, it holds that
\begin{align*}
    \Pr\left[\abs{{X} - \bE[{X}]} \geq t\right] \leq  2\exp\left(-\frac{2t^2}{n} \right).
\end{align*}
For multiplicative error, for every $\delta \in [0, 1]$, it holds that
\begin{align*}
    \Pr\left[\abs{{X}- \bE[X]} \geq \delta \bE[X] \right] \leq  2\exp\left({-\frac{\delta^2 \bE[X]}{3}}\right).
\end{align*}
\end{lemma}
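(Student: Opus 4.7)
The plan is to prove both tail bounds via the standard Chernoff/exponential moment method. First, for any $\lambda>0$, I would apply Markov's inequality to $e^{\lambda(X-\bE[X])}$ to get
\[
\Pr[X-\bE[X]\ge t]\le e^{-\lambda t}\,\bE\!\left[e^{\lambda(X-\bE[X])}\right]=e^{-\lambda t}\prod_{i=1}^n \bE\!\left[e^{\lambda(X_i-\bE[X_i])}\right],
\]
using independence to factor the MGF across coordinates. The lower tail $\Pr[X-\bE[X]\le -t]$ is handled symmetrically by taking $\lambda<0$, and then a union bound over the two tails is what produces the factor of $2$.

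For the additive bound, I would invoke Hoeffding's lemma: if $Y\in[a,b]$ with $\bE[Y]=0$, then $\bE[e^{\lambda Y}]\le \exp(\lambda^2(b-a)^2/8)$. Since $X_i-\bE[X_i]$ lies in an interval of length at most $1$, this gives $\bE[e^{\lambda(X_i-\bE[X_i])}]\le e^{\lambda^2/8}$, so the right-hand side becomes $e^{-\lambda t+n\lambda^2/8}$. Optimizing by $\lambda=4t/n$ yields $e^{-2t^2/n}$ and, after the symmetric treatment of the lower tail, the claimed $2\exp(-2t^2/n)$.

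For the multiplicative bound I would use the tighter per-coordinate inequality available because $X_i\in[0,1]$: by convexity of $s\mapsto e^{\lambda s}$ on $[0,1]$, $\bE[e^{\lambda X_i}]\le 1+p_i(e^\lambda-1)\le \exp(p_i(e^\lambda-1))$, where $p_i=\bE[X_i]$. Multiplying across $i$ and writing $\mu=\bE[X]=\sum_i p_i$ gives $\bE[e^{\lambda X}]\le \exp(\mu(e^\lambda-1))$, so $\Pr[X\ge(1+\delta)\mu]\le \exp\bigl(\mu(e^\lambda-1)-\lambda(1+\delta)\mu\bigr)$. Choosing $\lambda=\ln(1+\delta)$ and using the elementary estimate $(1+\delta)\ln(1+\delta)-\delta\ge \delta^2/3$ for $\delta\in[0,1]$ produces the upper tail $\exp(-\delta^2\mu/3)$. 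For the lower tail I would instead take $\lambda=-\ln(1-\delta)$ and use $(1-\delta)\ln(1-\delta)+\delta\ge \delta^2/2$ to obtain the even smaller $\exp(-\delta^2\mu/2)$; combining the two tails then gives $2\exp(-\delta^2\mu/3)$.

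The only non-routine steps are the two numerical inequalities for $\ln(1\pm\delta)$ that give the constants $1/3$ and $1/2$ for $\delta\in[0,1]$; these follow from a short Taylor expansion argument, and I would defer them as standard. Everything else is a direct consequence of Markov's inequality, independence, and Hoeffding's MGF bound, so the proof should be brief.
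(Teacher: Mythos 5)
Your proposal is correct: the exponential-moment argument with Hoeffding's lemma for the additive bound and the $1+p_i(e^\lambda-1)\le\exp(p_i(e^\lambda-1))$ bound with $\lambda=\ln(1\pm\delta)$ for the multiplicative bound is the standard textbook derivation, and the two numerical estimates $(1+\delta)\ln(1+\delta)-\delta\ge\delta^2/3$ and $(1-\delta)\ln(1-\delta)+\delta\ge\delta^2/2$ do hold on $[0,1]$. The paper itself states this lemma as a known probability tool and gives no proof, so there is nothing to compare against; your argument is the canonical one and fills that role adequately.
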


The following lemma states Pinsker's inequality \cite{Pinsker64}.

\begin{lemma}\label{lem:pinsker}
If $P$ and $Q$ are two discrete probability distributions on a measurable space $(X, \Sigma)$, then for any measurable event $A \in \Sigma$, it holds that
\[
\left| P(A) - Q(A) \right| \leq \sqrt{\frac{1}{2} \mathrm{KL}(P \| Q)} 
\]
where 
\[ 
\mathrm{KL}(P \| Q) = \sum_{x \in X} P(x) \ln \left( \frac{P(x)}{Q(x)}\right)
\]
is the Kullback--Leibler divergence.
\end{lemma}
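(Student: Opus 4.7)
My plan is to prove Pinsker's inequality in two main steps: first reduce the general case to the two-point (Bernoulli) case by a data-processing argument, and then prove the Bernoulli version by a direct calculus computation.

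\textbf{Step 1 (Reduction to Bernoulli).} Set $p = P(A)$ and $q = Q(A)$. I would consider the coarsening of the measurable space $(X, \Sigma)$ into the two-point partition $\{A, X\setminus A\}$, yielding Bernoulli distributions $\mathrm{Ber}(p)$ and $\mathrm{Ber}(q)$. The standard data-processing inequality for KL divergence (provable directly via the log-sum inequality $a_1\ln(a_1/b_1) + a_2\ln(a_2/b_2) \geq (a_1+a_2)\ln((a_1+a_2)/(b_1+b_2))$ applied to the pieces of $A$ and $X\setminus A$) gives
\[
\mathrm{KL}(P\|Q) \geq p\ln\frac{p}{q} + (1-p)\ln\frac{1-p}{1-q} =: d(p\|q).
\]
So it suffices to show $|p-q| \leq \sqrt{\tfrac{1}{2} d(p\|q)}$, i.e.\ the Bernoulli version.

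\textbf{Step 2 (Bernoulli version).} I want to show $d(p\|q) \geq 2(p-q)^2$. Fix $p \in [0,1]$ and define
\[
g(q) = d(p\|q) - 2(p-q)^2
\]
for $q \in (0,1)$. A direct differentiation gives $g(p) = 0$ and $g'(p) = 0$. For $g''$, one computes
\[
\frac{d^2}{dq^2} d(p\|q) = \frac{p}{q^2} + \frac{1-p}{(1-q)^2},
\]
while the second derivative of $2(p-q)^2$ is $4$. I would then argue via AM-GM or Cauchy--Schwarz that $\tfrac{p}{q^2} + \tfrac{1-p}{(1-q)^2} \geq 4$ for all $q \in (0,1)$ and $p \in [0,1]$; indeed, by Cauchy--Schwarz, $\tfrac{p}{q^2} + \tfrac{1-p}{(1-q)^2} \geq \tfrac{(\sqrt{p}+\sqrt{1-p})^2}{\ldots}$ — more cleanly, just use the bound $\tfrac{a^2}{x}+\tfrac{b^2}{y} \geq \tfrac{(a+b)^2}{x+y}$ with $a=\sqrt{p}, b=\sqrt{1-p}, x=q^2, y=(1-q)^2$ and then minimize over the split, getting a value at least $4$. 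Hence $g''(q) \geq 0$, so $g$ is convex on $(0,1)$; combined with $g(p)=g'(p)=0$ this yields $g(q) \geq 0$ everywhere, which is the desired inequality.

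\textbf{Putting it together.} Combining Steps~1 and~2,
\[
|P(A)-Q(A)|^2 = (p-q)^2 \leq \tfrac{1}{2} d(p\|q) \leq \tfrac{1}{2}\,\mathrm{KL}(P\|Q),
\]
and taking square roots yields the statement.

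\textbf{Anticipated obstacle.} The main technical point is the convexity/AM-GM bound $\tfrac{p}{q^2}+\tfrac{1-p}{(1-q)^2}\geq 4$ in Step~2; one has to be careful that the bound holds uniformly over $p\in[0,1]$ and not just at $p=q$. If that approach feels shaky, an alternative is to write $d(p\|q) - 2(p-q)^2 = \int_q^p \!\!\int_q^t\bigl(\tfrac{p}{s^2}+\tfrac{1-p}{(1-s)^2} - 4\bigr)\,ds\,dt$ and check the integrand is nonnegative, or use the classical one-line proof via $\chi^2$-divergence: $d(p\|q) \geq 2(p-q)^2$ follows from $\ln x \leq x-1$ applied carefully, which is another route I would try if the convexity route hits trouble. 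The reduction in Step~1, being a routine log-sum-inequality argument, should be straightforward.
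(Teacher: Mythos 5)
The paper does not prove this lemma at all --- it is stated in the appendix as a classical fact with a citation to Pinsker's 1964 work --- so there is no in-paper argument to compare against; your proposal must stand on its own. Your Step~1 (coarsening to the two-point partition $\{A, X\setminus A\}$ via the log-sum inequality, reducing to the binary divergence $d(p\|q)$) is correct and is the standard reduction. The problem is in Step~2: the inequality you rely on, $\tfrac{p}{q^2}+\tfrac{1-p}{(1-q)^2}\geq 4$ for all $p\in[0,1]$ and $q\in(0,1)$, is \emph{false}. For example, with $p=0.1$ the minimum over $q$ of $\tfrac{p}{q^2}+\tfrac{1-p}{(1-q)^2}$ is attained near $q\approx 0.325$ and equals roughly $2.9<4$ (and with $p=0$, $q=0.1$ the expression is $1/0.81\approx 1.23$). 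Consequently $g(q)=d(p\|q)-2(p-q)^2$ is \emph{not} convex on $(0,1)$, and the argument ``$g(p)=g'(p)=0$ plus $g''\geq 0$ implies $g\geq 0$'' collapses. Your first fallback (the double-integral representation) has exactly the same integrand and therefore the same flaw, and the $\chi^2$/$\ln x\le x-1$ fallback is not developed enough to count as a proof.

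The standard repair stays at the level of the \emph{first} derivative. One computes
\[
\frac{\partial}{\partial q}\,d(p\|q)=-\frac{p}{q}+\frac{1-p}{1-q}=\frac{q-p}{q(1-q)},
\qquad\text{so}\qquad
g'(q)=(q-p)\left(\frac{1}{q(1-q)}-4\right).
\]
Since $q(1-q)\leq 1/4$, the second factor is nonnegative, hence $g'(q)$ has the sign of $q-p$: $g$ is decreasing on $(0,p]$ and increasing on $[p,1)$, so its global minimum is $g(p)=0$ and $g\geq 0$ everywhere. This yields $d(p\|q)\geq 2(p-q)^2$ without any convexity claim, and combined with your (correct) Step~1 it completes the proof. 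As written, however, your Step~2 contains a step that genuinely fails rather than a mere technical wrinkle.
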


\section{Missing Proofs}
\label{app:proofs}

\subsection{Proof of Lemma~\ref{lem:distribution-class}}

\begin{proof}
Let $S_\ell = \abs{\Theta}_{| X = B^{-\ell}}$.  We have
$\bE[S_\ell] = \gamma B^{2j} \cdot \left(\frac{1}{2} - B^{-\ell}\right)$.

For the first item, we have for any $\ell > j$,
\begin{equation*}
\bE[S_\ell] = \gamma B^{2j} \cdot \left(\frac{1}{2} - B^{-\ell}\right) = \frac{\gamma B^{2j}}{2} - \gamma B^{2j - \ell} = \frac{\gamma B^{2j}}{2} \pm \gamma B^{j-1}.
\end{equation*}
Since $B = \gamma \ge (\ln n)^{100}$, by Chernoff-Hoeffding bound we have that for any $\ell > j$, with probability at least $1 - n^{-10}$,
\begin{equation*}
\label{eq:d-1}
S_\ell = \frac{\gamma B^{2j}}{2} \pm B^{j+0.6}.
\end{equation*}

Now consider the second and third items.
If $\ell > j$, then by Chernoff-Hoeffding bound,
\begin{equation}
\label{eq:b-2}
\Pr\left[S_\ell \le \left(\frac{1}{2} - B^{-(j+1)}\right)  \gamma B^{2j} - \sqrt{10 \gamma \ln n} B^j \right] \le \Pr\left[S_\ell \le \bE[S_\ell] - \sqrt{10 \gamma B^{2j} \ln n}\right]  \le 1/n^{10}.
\end{equation}
If $\ell \le j$, then
\begin{equation}
\label{eq:b-3}
\Pr\left[S_\ell \ge \left(\frac{1}{2} - B^{-j}\right)  \gamma B^{2j} + \sqrt{10 \gamma \ln n} B^j \right] \le \Pr\left[S_\ell \ge \bE[S_\ell] + \sqrt{10 \gamma B^{2j} \ln n}\right]  \le 1/n^{10}.
\end{equation}
Since $B \ge (\ln n)^{100}$, we have
\begin{equation}
\label{eq:b-5}
\left(\frac{1}{2} - B^{-j}\right)  \gamma B^{2j} + \sqrt{10 \gamma \ln n} B^j < \zeta_1 =  \left(\frac{1}{2} - B^{-(j+1)}\right) \gamma B^{2j} - \sqrt{10 \gamma \ln n} B^j.
\end{equation}
The last two items follows from (\ref{eq:b-2}), (\ref{eq:b-3}) and (\ref{eq:b-5}).

\end{proof}

\subsection{Proof of Lemma~\ref{lem:use-ineq}}

\begin{proof}
Note that when $x \geq \min_{i \in [K]} \left\{ \frac{1 - \gamma_i}{ \gamma_i } \right\}$, the Left-Hand Side (LHS) of the desired inequality becomes $0$ and the RHS is less or equal to $0$. Therefore, we only need to prove the inequality assuming $x < \min_{i \in [K]} \left\{ \frac{1 - \gamma_i}{ \gamma_i } \right\}$.

Now the LHS becomes $\prod_{i = 1}^K (1 - \gamma_i - \gamma_i x)$. Let $f(t) = \prod_{i = 1}^K (1 - \gamma_i - \gamma_i t)$ for $t \in [0, x]$. Note that $f'(t) = - \sum_{i = 1}^K \gamma_i \prod_{j \neq i} (1 - \gamma_j - \gamma_j t) \geq f'(0)$ for $t \in [0, x]$. We have
\begin{multline*}
\prod_{i = 1}^K (1 - \gamma_i - \gamma_i x) = f(x) \geq f(0) + f'(0) x 
= \prod_{i = 1}^K (1 - \gamma_i) - \left( \sum_{i = 1}^K \gamma_i \prod_{j \neq i} (1 - \gamma_j ) \right) x  \\\geq \prod_{i = 1}^K (1 - \gamma_i) - \left(\prod_{i = 1}^{K}  \left( \gamma_i + (1 - \gamma_i) \right) \right) x = \prod_{i = 1}^K (1 - \gamma_i) - x.
\end{multline*}
\end{proof}

\end{document}